\theoremstyle{definition}
\newtheorem{Def}{Definition}
\newtheorem{remark}{Remark}
\newtheorem{prop}{Proposition}
\newtheorem{exmp}{Example}
\newtheorem{lemma}{Lemma}
\newtheorem{theorem}{Theorem}
\newcommand\BibTeX{{\rmfamily B\kern-.05em \textsc{i\kern-.025em b}\kern-.08em
T\kern-.1667em\lower.7ex\hbox{E}\kern-.125emX}}
\begin{document}

\runninghead{Mao and Quan}

\title{Optimal Virtual Tube Planning and Control for Swarm Robotics}

\author{Pengda Mao\affilnum{1}, Rao Fu\affilnum{1} and Quan Quan\affilnum{1}}

\affiliation{\affilnum{1}  School of Automation Science and Electrical Engineering, Beihang University, Beijing, P.R. China}

\corrauth{Quan Quan, School of Automation Science and Electrical Engineering,
  Beihang University, Beijing, 100191, P.R. China.}

\email{ qq\_buaa@buaa.edu.cn}

\begin{abstract}
  This paper presents a novel method for efficiently solving a trajectory planning problem for swarm robotics in cluttered environments.
  Recent research has demonstrated high success rates in real-time local trajectory planning for swarm robotics in cluttered environments, but optimizing trajectories for each robot is still computationally expensive, with a computational complexity from $O\left(k\left(n_t,\varepsilon \right)n_t^2\right)$ to $ O\left(k\left(n_t,\varepsilon \right)n_t^3\right)$ where $n_t$ is the number of parameters in the parameterized trajectory, $\varepsilon$ is precision and $k\left(n_t,\varepsilon \right)$ is the number of iterations with respect to $n_t$ and $\varepsilon$. Furthermore, the swarm is difficult to move as a group.
  {To address this issue, we define and then construct the \emph{optimal virtual tube}, which includes infinite optimal trajectories.} Under certain conditions, any optimal trajectory in the optimal virtual tube can be expressed as a convex combination of a finite number of optimal trajectories, with a computational complexity of $O\left(n_t\right)$.
  {Afterward, a hierarchical approach including a planning method of \emph{the optimal virtual tube} with minimizing energy and distributed model predictive control is proposed.}
  In simulations and experiments, the proposed approach is validated and its effectiveness over other methods is demonstrated through comparison.
\end{abstract}

\keywords{Swarm robotics, Trajectory planning, Virtual tubes, Optimization}

\maketitle

\section{Introduction}
Swarm robotics has potential applications in various real-world scenarios, such as air traffic control, land and sea search and rescue, and target detection. 
A certain amount of research has been dedicated to the smooth and safe navigation of swarm robotics passing through obstacle-dense environments.

{
Passing through an obstacle-dense environment safely and smoothly as a group is still a challenging task for swarm robotics. Fast and smooth movement of a swarm can be achieved through distributed trajectory planning, but such method may also cause the swarm to split apart. On the other hand, centralized formation path planning combined with multi-robot formation control can achieve swarm passing-through in formation, but it lacks flexibility and maneuverability. In this work, an optimal virtual tube planning method inspired by the application of tubes in transporting objects in real life is proposed for a robotic swarm to centrally plan an optimal virtual tube that provides a safe area without obstacles and forward commands in the environment; then the distributed model predictive control is used to track the commands, thereby achieving fast and smooth passing of the swarm as a group in obstacle-dense environments.
}
\begin{figure}
	\centering
	\includegraphics[width=0.99\linewidth]{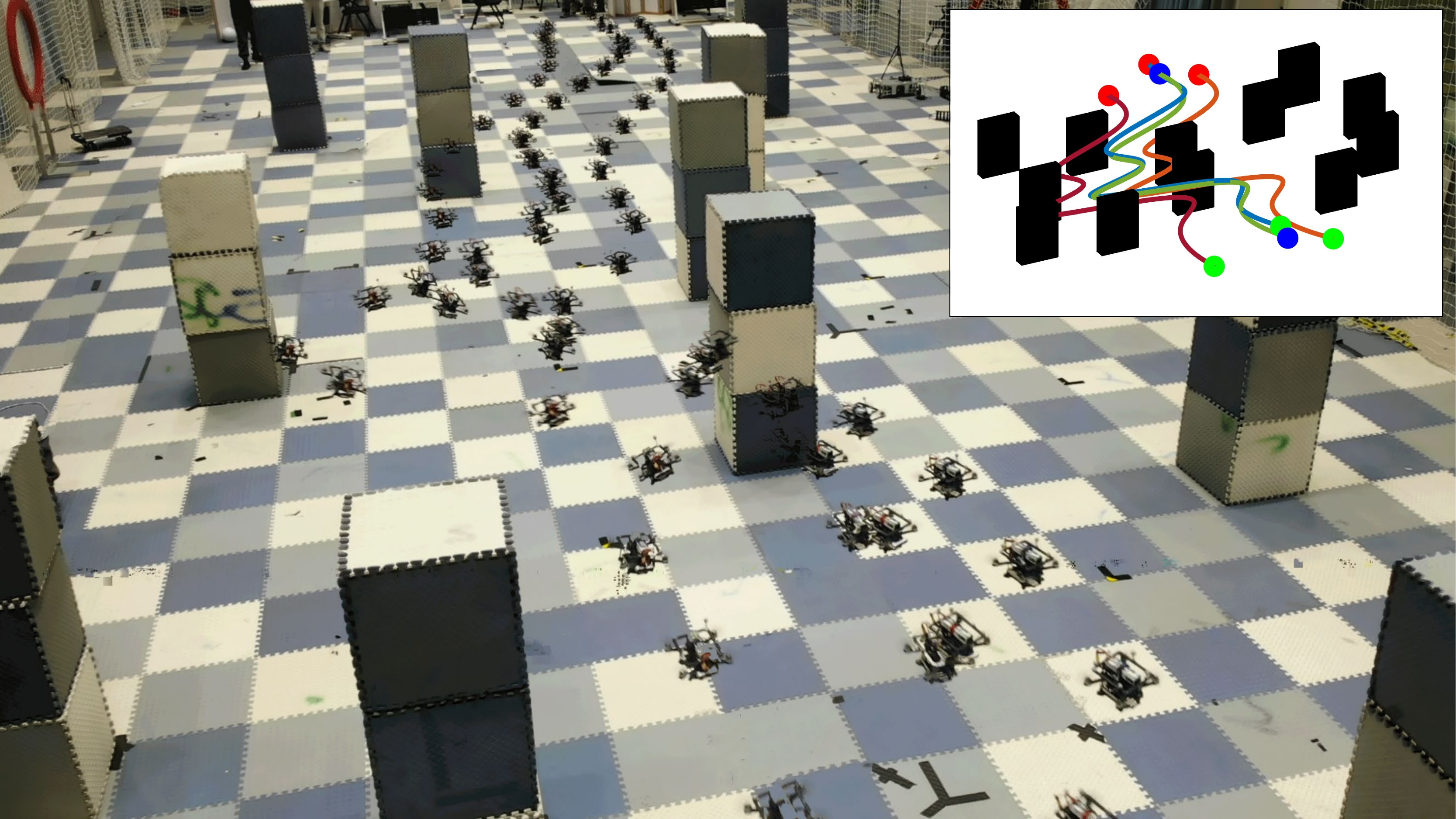}
	\caption{By optimal virtual tube method, an overlay image of four drones passing through the cluttered environment: The trajectory of each drone is labeled by a different color in the upper right figure. Compared to the traditional method, our method solves only three optimization problems for the three optimal trajectories of drones, such as the red, blue, and orange lines. The fourth optimal trajectory of the drone (the green line) is generated by an affine combination.}
	\label{fig:fourdronesoverlap}
\end{figure}

{
We employ the path planning algorithm to plan a safe area without obstacles for a swarm to form a group, which effectively simplifies the obstacle avoidance problem by replacing obstacle constraints with the tube boundaries. Hence, the robots in the swarm only need to consider inter-robot collision avoidance and tube boundary constraints. Furthermore, the fact that optimal trajectories do not intersect with each other in the tube substantially reduces the chance that collision avoidance control is triggered. Based on optimization theories, we propose a method with a computational complexity of $O\left(n_t\right)$, where $n_t$ is the number of parameters in trajectory to generate infinite optimal trajectories with minimizing energy. This method has a significant computational advantage over existing methods with the computational complexity from $O\left(k\left(n_t,\varepsilon \right)n_t^2\right)$ to $O\left(k\left(n_t,\varepsilon \right)n_t^3\right)$ where $\varepsilon$ is precision and $k\left(n_t,\varepsilon \right)$ is the number of iterations needed to solve the optimization problem, making it possible to generate optimal trajectories for large-scale robot swarms in a timely and centered fashion.
The optimal virtual tube planning method is validated effectively by using the distributed model predictive control method in simulations and experiments. Furthermore, the superiority of our method is demonstrated through comparison with other methods.
}

\subsection{Related work}
In this section, we provide an overview of the related studies on topics including trajectory planning, infinite homotopic trajectory generation, and virtual tube control and planning. 
\subsubsection{Trajectory planning}
Trajectory planning is an efficient method for robots to pass through a complex environment. Considering the kinematics of the robot and obstacles, an optimization problem is formulated to plan a smooth and safe trajectory for each robot.
By leveraging the differential flatness, the optimal trajectory could be tracked by some robots.
The methods of trajectory parameterization mainly include $k$-th order polynomials and B-spline.
(i) To ensure passing through fixed waypoints, a trajectory can be parameterized as $k$-th order polynomials (\cite{mellinger_minimum_2011}). However, trajectories described as polynomials are prone to collisions with obstacles due to finite discrete avoidance constraints. To address this issue, various methods have been proposed for generating collision-free trajectories. For instance, a collision-free trajectory has been generated using mixed integer methods in \cite{deits2015efficient}.
An approach proposed by \cite{inaba_polynomial_2016} refines the trajectory iteratively by minimizing a cost function and adding additional waypoints between two ends of a particular trajectory segment that passes through obstacles to avoid collisions. 
However, these methods are computationally intensive for real-time planning.
To overcome these challenges, some methods have been proposed that use simple constraints to design a quadratic programming (QP) problem, which can significantly reduce the computation time. For example, safe flight corridor (SFC), a collection of convex connected polyhedra that models free space in a map, is used as a constraint for obstacle avoidance to generate a collision-free trajectory in real-time (\cite{Liu2017Planning,gao_teach-repeat-replan_2020}).
(ii) The trajectory parameterized by B-spline (\cite{usenko_real-time_2017}) is proposed to eliminate constraints of intermediate conditions on any derivative of the trajectory. An essential advantage of using B-splines is the convex hull property, which imposes bounds on both the trajectory and its derivatives. 
This property allows for strict bounding of the position, velocity, and acceleration of a robot (\cite{ding_efficient_2019}). 
The drawback of B-spline, however, is that the B-spline curve does not pass through the control points that represent waypoints. 
To repel the trajectory from obstacles, a gradient-based method is applied (\cite{zhou_robust_2020}) such that a penalty function is introduced. Although this method has been successful, it is experience dependent and lacks rigorous theoretical proof. It has been shown that if the control points satisfy certain conditions, the trajectory will strictly avoid obstacles (\cite{zhou2019robust}). 

To plan swarm trajectories, existing methods are mostly extension of single trajectory planning methods. However, in swarm robotics, other agents are regarded as dynamic obstacles, which increase the complexity of the problem. 
{Several methods have been proposed to solve this avoidance problem, such as RSFC (Relative Safe Flight Corridor) (\cite{Park2021Online}), distributed predictive control (\cite{soria_distributed_2022}), HOOP (Hold Or take Optimal Plan) (\cite{tang2018hold}) and EGO (ESDF-free Gradient-based lOcal) Swarm (\cite{zhou_ego-swarm_2021}), all of which employ increasingly complex constraints.} Safe and smooth trajectories need to be generated by optimization problems, which require significant computational resources with an increasing scale of the swarm. Furthermore, the constraint of gathering the swarm as a group is also needed to increase the computation cost.
Therefore, the method efficiency for swarm robotics is a crucial point needed to be considered, especially with the limited onboard computing resources of robots.

\subsubsection{Infinite homotopic trajectories generation}The objective of trajectory planning for a robot passing through an environment is to plan a trajectory from a start position to a goal position for each robot.  Ideally, trajectory planning would yield a unique solution to the optimization problem, but \cite{orthey2019motion} demonstrates that local minima in the optimization problem result in an infinite set of homotopic collision-free trajectories. 
To address this challenge, a generative model of collision-free trajectory is trained in \cite{osa2022motion} to represent an infinite set of homotopic solutions to the optimization problem. Motivated by the methods of finding homotopic trajectories for a robot, we proposed an \emph{optimal virtual tube} for swarm robotics, which contains an infinite number of homotopic optimal trajectories in free space.
Our focus is on finding infinite trajectories for a swarm, rather than for a single robot, although generating infinite optimal trajectories through infinite optimization problems is not feasible due to computational complexity.  
To mitigate this issue, we first propose and define the optimal virtual tube, including infinite optimal trajectories. {Then, a theorem is proposed so that any optimal trajectory in the virtual tube could be represented by a convex combination of these finite number of optimal trajectories with a computational complexity of $O\left(n_t\right)$ if trajectories are linearly parameterized and satisfy the conditions in lemmas.} Therefore, combining the advantages of formation control (\cite{alonso2017multi}), the complexity of the optimal virtual tube planning is independent of the scale of the swarm\footnote{The scale of a swarm means the number of robots in the swarm.}.

\subsubsection{Virtual tube control and planning}
{
The ``virtual tube" provides safety \emph{boundary} and \emph{direction} of motion for a swarm. This concept appeared in AIRBUS' SkyRoad project, is a free space for unmanned aerial vehicles to fly over cities (\cite{airbus}). These virtual tubes establish a safe flight area in which drone flights will not interfere with ground or air traffic. 
Motivated by this, in our previous work (\cite{Quan2021Practical}), a \emph{straight virtual tube} was proposed for swarm robotics. There are no obstacles inside the virtual tube, so the area inside can be regarded as a safety zone. Restricting the robot to fly in the safety zone can simplify the control, that is, only requiring to guarantee no collision with other robots and the no collision among robots and the boundary of the virtual tube, rather than the collision avoidance with complex obstacles in environment. The virtual tube ensures the safety of robots and cities.
Then, to guide the swarm to pass through corridors, doors and windows, and surround surveillance targets without diverging (\cite{gao2022multi}), we have also generalized the definition of virtual tubes and proposed the \emph{curved virtual tube} (\cite{Quan2021Distributed}).
This concept of the curve virtual tube is similar to the lane for autonomous road vehicles in \cite{rasekhipour2016potential}, \cite{luo2018porca} and corridor for a multi-UAV system in \cite{nagrare2022multi}.
}

{
As a summary, the earlier part of this subsection is dedicated to the introduction of two problems, namely the \emph{virtual tube planning problem} and \emph{virtual tube passing-through control problem}. However, The above-mentioned methods all apply to passing-through control problems.
For the virtual tube planning problem, in previous work (\cite{Mao2022}), a generator curve is first obtained by trajectory planning based on several discrete waypoints, which are generated using search-based methods. Then the virtual tube is generated by expanding the generator curve while avoiding obstacles. 
}
\subsection{Motivations and contributions}
{
Although there are many studies on the virtual tube passing-through control problem, a suitable design of the forward commands for the swarm is still lacking. And the previous virtual tube planning work also has many limitations. The motivations of the optimal virtual tube are demonstrated as follows.}

{
First, the kinematics of the robot are not considered in design of the forward commands for the swarm in the virtual tube passing-through control problem. The desired speeds for robots are manually designed and the directions of the forward velocity commands of the robots parallel to the direction of the generator curve, which may cause the swarm to block performing turning and other maneuvers.}

{
Secondly, the previous virtual tube planning method (\cite{Mao2022}) poses limitations to the control algorithm and is not scalable in high-dimension spaces. The regular virtual tube is only suitable for some specific methods such as APF (Artificial Potential Field), flocking algorithm, and CBF (Control Barrier Function). And it is difficult to be extended into high-dimension spaces.}

{
Thirdly, trajectories of swarm robotics in existing research are not optimal, namely any robot in the virtual tube is not assigned an optimal trajectory.
On the other hand, it costs much computation with the number of robots increasing if the trajectory optimization method is adopted.
}

{
To overcome these limitations, this paper first generalizes the definition of the virtual tube without a \emph{generator curve} so that it is convenient to be applied in a high-dimension space, and the optimal virtual tube which includes infinite optimal trajectories is proposed for trajectory planning of swarm robotics. An optimal virtual tube planning method is proposed to generate the infinite optimal trajectories in the optimal virtual tube with a small computational complexity which makes the virtual tube applicable to a wider range of control methods.
Then the optimal virtual tube planning method combined with model predictive control is validated in simulations and experiments, as shown in Fig. \ref{fig:fourdronesoverlap}. The main contributions are as follows.
}
{
\begin{itemize}
	\item Compared with our previous work (\cite{Mao2022}), the definition of the virtual tube is extended into a high-dimension space. Further, the optimal virtual tube is defined for trajectory planning of swarm robotics, whose properties and advantages are analyzed in this paper. There are infinite optimal trajectories within the optimal virtual tube. Each robot is assigned to an optimal trajectory without intersecting with other trajectories.
	\item Based on the definitions and theorems, we show that a linear virtual tube with convex hull terminals is a type of optimal virtual tubes which is easy to implement in practice. The infinite optimal trajectories are generated by the convex combination of finite optimal trajectories. Therefore, the computational complexity is only $O\left(n_t\right)$, in contrast with solving existing optimization problems with computational complexity of $O\left(k\left(n_t,\varepsilon \right)n_t^2\right)$ to $O\left(k\left(n_t,\varepsilon \right)n_t^3\right)$.  
	\item A hierarchical approach including an optimal virtual tube planning method and model predictive control is proposed. The corresponding application on drone swarm is demonstrated in simulations and experiments. Furthermore, comparisons with other methods are implemented in simulations to demonstrate the superior performance of our approach.
\end{itemize} 
}
\subsection{Paper organization}
{
This paper is organized as follows. 
The preliminaries about topology and convex optimization, and problem formulation are described in Section 2.
Section 3 is devoted to demonstrating the definitions such as virtual tube, optimal virtual tube, and linear virtual tube, and distinguishes the proposed work from our previous work.
Then, a specific virtual tube, named linear virtual tube with convex hull terminals, is defined and proved to be an optimal virtual tube based on a proposed theorem in Section 4 .
In Section 5, a planning method to obtain an optimal virtual tube is proposed.
Model predictive control is proposed for the swarm to track trajectories in an optimal virtual tube as an validation of the tube planning method in Section 6. 
Experimental results and comparisons in simulations are presented to illustrate the superior performance of our approach in Section 7.
Finally, Section 8 contains the conclusion and future work. 
}
\section{Preliminaries and Problem Formulation}
\subsection{Topology}
The topological theory is complex and profound. For the purpose of comprehensibility, we will roughly and intuitively introduce some basic concepts that need to be used in the following sections.
{
\subsubsection{Interior} The \emph{interior} of the subset $\mathcal{T}$ of a topological space, denoted by ${\bf int}{\rm{\,}}{\mathcal{T}}$, is the largest open subset contained in $\mathcal{T}$, expressed as
\[{\bf{int}}\left( {\cal T} \right) = \left\{ {{\bf{x}} \in {\cal T}|B\left( {{\bf{x}},r} \right) \cap {\cal T} \subseteq {\cal T}{\mkern 1mu} {\rm{\, for \, some\, }} r > 0} \right\},\]
where $B\left({\bf x},r\right) = \{ {\bf y} | \left\| {{\bf y} - {\bf x}} \right\| \le r\}$ represents the ball of radius $r$ and center ${\bf x}$ in the any norm $\left\|  \cdot  \right\|$.}
\subsubsection{Diffeomorphism}
Intuitively, a manifold (\cite{tu_introduction_2011}) is a generalization of curves and surfaces to higher dimensions.
Given two manifolds $M$ and $N$, a differentiable map $f:M \to N$ is called a \emph{diffeomorphism} if it is a bijection and its inverse is differentiable as well. 
\subsubsection{Simple connectivity (\cite{Shick_topology_2007})}
Intuitively, a space is \emph{simply-connected} if every loop in the space can be shrunk to a point within the space.

\subsection{Convex optimization (\cite{Boyd_convex_2004})}
\subsubsection{Convex hull}
For a set ${\mathcal{C}}=\{{\bf{x}}_i\},i=1,2,...,q$, the \emph{convex hull} of the set ${\mathcal{C}}$, denoted ${\bf{conv}} \, {\mathcal{C}}$, is the set of all convex combinations of points in ${\mathcal{C}}$:
\[{\bf{conv}}{\rm{\, }}{\mathcal{C}} = \left\{ {\sum\limits_{i = 1}^q {{\theta _i}{{\bf{x}}_i}} |{{\bf{x}}_i} \in {\mathcal{C}},{\theta _i} \ge 0,\sum\limits_{i = 1}^q {{\theta _i}}  = 1} \right\}.\]
{
The ${\bf{int}}\left( {{\bf{conv}}{\rm{\, }}{\cal C}} \right)$ and $\partial \left( {{\bf{conv}}{\rm{\, }}{\cal C}} \right)$ denote the \emph{interior} of ${\bf{conv}}{\,}{\mathcal{C}}$ and the \emph{boundary} of the ${\bf conv}{\rm{\,}}\mathcal{C}$ respectively.
}
An example is shown in Fig. \ref{fig:convexhull2d}.
\begin{figure}
	\centering
	\includegraphics[width=0.7\linewidth]{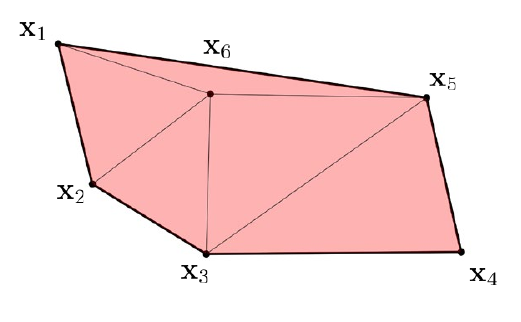}
	\caption{{Convex hull example. For a set ${\mathcal{C}} = \{{\bf x}_i\}$ ($ i=1,2,...,6$), the pink area is the ${\bf{conv}}{\rm{\, }}{\mathcal{C}}$. The $\partial \left( {{\bf{conv}}{\rm{\, }}{\cal C}} \right)$ and ${\bf{int}}\left( {{\bf{conv}}{\rm{\, }}{\cal C}} \right)$ denote the bold black lines and the pink area without the bold black lines respectively.}}
	\label{fig:convexhull2d}
\end{figure}
\subsubsection{An optimality criterion for differentiable $f_0$}
Suppose that a convex function $f_0$ is differentiable, so that for all ${\bf x,y}\in \text{\bf{dom}} f_0$ (domain of the function $f_0$),
\begin{equation}
{f_0}\left( {\bf y} \right) \ge {f_0}\left( {\bf x} \right) + \nabla {f_0}{\left( {\bf x} \right)^{\rm{T}}}\left( {\bf y - x} \right).
\end{equation}
Let ${\mathcal X}_f$ denote the feasible set. Then $\bf x$ is optimal if and only if ${\bf x} \in {\mathcal X}_f$ and 
\begin{equation}
\nabla {f_0}{\left( {\bf x} \right)^{\rm{T}}}\left( {\bf y - x} \right) \ge 0, \text{for\,all\,} {\bf y} \in {\mathcal X}_f.
\end{equation}

\subsection{{Problem formulation}}
{
Let $\mathcal{X}_c$ be the configuration space and denote the obstacle-free space as $\mathcal{X}_{\rm free}$. The trajectory between two given points ${\bf q}_0, {\bf q}_m \in {\mathcal{X}_{\rm free}}$ is defined by a map ${\bf{h}}\left( {\left( {{{\bf{q}}_0},{{\bf{q}}_m}} \right),t} \right)$ where $t \in \left[0,1\right]$, ${\bf{h}}\left( {\left( {{{\bf{q}}_0},{{\bf{q}}_m}} \right),0} \right) = {{\bf{q}}_0}$, ${\bf{h}}\left( {\left( {{{\bf{q}}_0},{{\bf{q}}_m}} \right),1} \right) = {{\bf{q}}_m}$. 
\begin{Def}
	A trajectory ${\bf{h}}^*\left( {\left( {{{\bf{q}}_0},{{\bf{q}}_m}} \right),t} \right)$ is \emph{optimal} with respect to cost $g$ if 
	\[ {{\bf h}^*}\left( {\left( {{{\bf{q}}_0},{{\bf{q}}_m}} \right),t} \right)  {\in} \arg \mathop{ \min }\limits_{{\bf h}\in \mathcal{H}} g\left( {{\bf h}\left( {\left( {{\bf{q}}_0,{\bf{q}}_m} \right),t} \right)} \right)\]
	where $t \in \left[0,1\right]$ and $\mathcal{H}$ is a candidate space, $g$ could be the energy cost, length of trajectory or other concerned. {Especially when the cost $g$ is convex, the optimal trajectory ${\bf{h}}^*\left( {\left( {{{\bf{q}}_0},{{\bf{q}}_m}} \right),t} \right)$ is unique, namely} $\color{blue} {{\bf h}^*}\left( {\left( {{{\bf{q}}_0},{{\bf{q}}_m}} \right),t} \right)  = \arg \mathop{ \min }\limits_{{\bf h}\in \mathcal{H}} g\left( {{\bf h}\left( {\left( {{\bf{q}}_0,{\bf{q}}_m} \right),t} \right)} \right).$
	\label{def:optimal-trajectory}
\end{Def} 
Given a start point ${\bf q}_0$ and a goal point ${\bf q}_m$, the trajectory planning problem for a single robot is to find an optimal trajectory ${{\bf{h}}^*}\left( {\left( {{{\bf{q}}_0},{{\bf{q}}_m}} \right),t} \right)\in \mathcal{X}_{\rm free}$ from ${\bf q}_0$ to ${\bf q}_m$. Consequently, given start points $\{{\bf q}_{0,i}\}$ for robots respectively in a start area $\mathcal{C}_0$ and a goal area $\mathcal{C}_1$, the trajectory planning problem for swarm robotics in an obstacle-dense environment is to assign goal points ${\bf q}_{m,i}$ for all robots and then find optimal homotopic trajectories with minimal energy ${{\bf{h}}^*}\left( {\left( {{{\bf{q}}_{0,i}},{{\bf{q}}_{m,i}}} \right),t} \right)$ from all start points ${\bf q}_{0,i}\in\mathcal{C}_0$ to the goal points ${\bf q}_{m,i} \in \mathcal{C}_1$. And then, a control method is to be adopted to make the robot swarm move fast and smoothly as a group.}

{
However, as the number of robots in a swarm increases, the computational cost of generating optimal trajectories also increases. 
Therefore, we use the optimal virtual tube to constrain the safety space where trajectories are located in and realize the generation of optimal trajectories with a low computational complexity of $O\left(n_t\right)$ in the following sections. }

\section{Optimal Virtual Tube}
A virtual tube is constructed for swarm robotics, which could be used for swarm robotics flow. A generalization of a virtual tube is defined as a set in space, and some specific types of the virtual tube including the optimal virtual tube are introduced for trajectory planning of swarm robotics. 

Tubes are often used for fluid transportation in three-dimension Euclidean space, which is denoted by $\mathbb{R}^3$. However, it is not sufficient to describe a motion of a robot in three-dimension Euclidean space. For example, a set of all affine motions, including rotations and translations in ${\mathbb{R}^3}$, is a six-dimension manifold. But this six-dimension manifold is not ${\mathbb{R}^6}$. Since the virtual tube aims at swarm robotics, it is extended to be defined as a set in $n$-dimension space.


Intuitively, a virtual tube is constructed by defining maps between two bounded convex sets in space. The convex set is simply connected so that there are no ``holes'' in the set. The mathematical definition of the virtual tube is formulated in the following.
\begin{Def}
	A \emph{virtual tube} $\mathcal T$, as shown in Fig. \ref{fig:tubedef}, is a set in $n$-dimension space represented by a 4-tuple $\left( {\mathcal C}_0, {\mathcal C}_1, {\bf f}, {\bf h} \right)$ where
	\begin{itemize}
		\item ${\mathcal C}_0, {\mathcal C}_1$, called \emph{terminals}, are disjoint bounded convex subsets in $n$-dimension space.
		\item ${\bf f}$ is a \emph{diffeomorphism}: ${\mathcal C}_0 \to {\mathcal C}_1$, so that there is a set of order pairs ${\mathcal{P}} = \left\{ {\left( {{\bf q}_0,{\bf q}_m} \right)| {\bf q}_0 \in {\mathcal{C}_0},{\bf q}_m = {\bf f}\left( {\bf q}_0 \right) \in {\mathcal{C}}_1} \right\}$.
		\item ${\bf h}$ is a smooth\footnote{A real-valued function is said to be smooth if its derivatives of all orders exist and are continuous.} map: ${\mathcal P} \times {\mathcal I} \to {\mathcal T} $ where ${\mathcal I} = [0,1]$, such that ${\mathcal T} = \{ {\bf h} \left( \left({\bf q}_0,{\bf q}_m\right),t\right) | \left({\bf q}_0,{\bf q}_m\right) \in {\mathcal P} , t \in {\mathcal I} \}$, ${\bf h}\left( {\left( {\bf q}_0,{\bf q}_m \right),0} \right) = {\bf q}_0$, ${\bf h}\left( {\left( {\bf q}_0,{\bf q}_m \right),1} \right) = {\bf q}_m$. The function ${\bf h}\left( {\left( {\bf q}_0,{\bf q}_m \right),t} \right)$ is called a \emph{trajectory} for an order pair $\left( {\bf q}_0,{\bf q}_m\right)$.
	\end{itemize}
	And, a \emph{cross-section} ${\mathcal{C}}_t$ of a virtual tube at $t \in {\mathcal{I}}$ is expressed as:
	\begin{equation}
	{\mathcal{C}}_{t} = \left\{ {{\bf h}\left( {\left( {\bf q}_0,{\bf q}_m \right),{t}} \right)|\left({\bf q}_0,{\bf q}_m\right) \in {\mathcal{P}} } \right\}.
	\end{equation}
	The surface of the virtual tube is the boundary of ${\mathcal{T}}$, defined as $\partial {\mathcal{T}}$.
	\label{def:virtual-tube}
\end{Def}

In the following, some properties are directly obtained based on \emph{Definition 1}.
\begin{prop}
	All cross-sections of a virtual tube are simply connected.
\end{prop}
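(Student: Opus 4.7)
The plan is to show that every cross-section $\mathcal{C}_t$ is homeomorphic (in fact diffeomorphic) to the terminal $\mathcal{C}_0$, and then to inherit simple connectivity from the convexity of $\mathcal{C}_0$.

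First, I would reparameterize the cross-section. The order-pair set $\mathcal{P}$ is in bijection with $\mathcal{C}_0$ via $\mathbf{q}_0 \mapsto (\mathbf{q}_0, \mathbf{f}(\mathbf{q}_0))$, so for each $t \in \mathcal{I}$ I can define the smooth map
\[
\phi_t : \mathcal{C}_0 \to \mathcal{C}_t, \qquad \phi_t(\mathbf{q}_0) = \mathbf{h}\bigl((\mathbf{q}_0, \mathbf{f}(\mathbf{q}_0)), t\bigr),
\]
which is surjective onto $\mathcal{C}_t$ by the very definition of the cross-section. At the endpoints, $\phi_0 = \mathrm{id}_{\mathcal{C}_0}$ and $\phi_1 = \mathbf{f}$ are both diffeomorphisms, so the boundary cases $t=0$ and $t=1$ reduce immediately to the convexity (hence simple connectivity) of $\mathcal{C}_0$ and $\mathcal{C}_1$.

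The key step is to promote $\phi_t$ to a diffeomorphism for every intermediate $t \in (0,1)$. This amounts to showing that trajectories emanating from distinct start points do not coincide at time $t$, which is the geometric content of the word ``tube''. I would formalize this either by (i) reading injectivity of each slice as an implicit consequence of the virtual-tube construction, in which $\mathcal{T}$ is swept out by non-intersecting trajectories joining two disjoint convex terminals, or by (ii) a connectedness argument on the set $S = \{t \in [0,1] : \phi_t \text{ is a diffeomorphism onto } \mathcal{C}_t\}$: this set contains $0$, is open by the inverse function theorem applied to the smooth family $\mathbf{h}$, and is closed by compactness of the closure of $\mathcal{C}_0$ together with the continuity of $\phi_t$ in $t$, so $S = [0,1]$. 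Once $\phi_t$ is a diffeomorphism, any loop $\gamma : S^1 \to \mathcal{C}_t$ pulls back to a loop $\phi_t^{-1} \circ \gamma$ inside the convex set $\mathcal{C}_0$, which contracts to a basepoint $\mathbf{q}_\star \in \mathcal{C}_0$ via the straight-line homotopy; pushing this homotopy forward through $\phi_t$ produces a null-homotopy of $\gamma$ in $\mathcal{C}_t$, so $\pi_1(\mathcal{C}_t)$ is trivial.

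The hard part is exactly the middle step: the definition as stated in \emph{Definition 1} only asks $\mathbf{h}$ to be smooth, so the non-crossing of trajectories at intermediate $t$ is not literally in the hypotheses. The cleanest resolution is probably to invoke the inverse function theorem at $t=0$, where the Jacobian of $\phi_t$ with respect to $\mathbf{q}_0$ is the identity, and then argue by connectedness of $[0,1]$ that the Jacobian stays full rank and the map globally injective throughout. If neither route succeeds under the bare definition, some mild additional non-degeneracy hypothesis on $\mathbf{h}$ (for example, that each slice $\phi_t$ is an embedding) will have to be read into the tube definition to make the proposition rigorous.
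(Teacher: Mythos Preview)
Your route is genuinely different from the paper's, and more honest about the difficulty. The paper's proof is three sentences: it notes that $\mathcal{C}_0$ (hence $\mathcal{P}$) is simply connected by convexity, observes that $\mathbf{h}(\cdot,t_0)$ is a smooth map from $\mathcal{P}\times\{t_0\}$ onto $\mathcal{C}_{t_0}$, and concludes. In effect it is invoking the principle ``the continuous image of a simply connected space is simply connected,'' which is false in general (the circle is a continuous image of an interval). So the paper's argument is incomplete in exactly the place you flagged: without knowing that the slice map $\phi_t$ is injective (or at least a homotopy equivalence), simple connectivity does not transfer along it. Your instinct to upgrade $\phi_t$ to a diffeomorphism and then pull back a null-homotopy from the convex $\mathcal{C}_0$ is the correct way to make the statement rigorous.

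That said, your proposed connectedness argument on $S=\{t:\phi_t\text{ is a diffeomorphism}\}$ does not quite work as written: the inverse function theorem gives only local invertibility (full-rank Jacobian), not global injectivity, so openness of $S$ is not immediate; and closedness fails in general because a smooth family of diffeomorphisms can degenerate in the limit. The clean resolution is precisely your final suggestion: read a mild non-degeneracy hypothesis (each $\phi_t$ an embedding, i.e.\ trajectories do not cross at any fixed $t$) into the virtual-tube definition. This matches the authors' evident intent, is satisfied automatically in the linear-tube examples they actually use, and is what both their proof and yours implicitly need.
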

\begin{proof}
	The convex sets ${\mathcal{C}_0, \mathcal{C}_1}$ are simply connected. Thus, the order pair ${\mathcal{P}}$ is simply connected. For a cross-section ${\mathcal{C}}_{t_0}$, there is a smooth map ${\bf h}$ from ${\mathcal{P}\times t_0}$ to ${\mathcal{C}}_{t_0}$. Therefore, the cross-section is simply connected. $\hfill\blacksquare$
\end{proof}
\begin{prop}
	All cross-sections of a virtual tube are continuous. That is, for every point in the cross-section, there exists at least one trajectory through it. 
\end{prop}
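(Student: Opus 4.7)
The plan is to interpret the proposition along the lines spelled out in its second sentence, namely that the time-$t$ slice of the map $\mathbf{h}$ covers the entire cross-section $\mathcal{C}_t$, so no point of $\mathcal{C}_t$ is missed by the family of trajectories. With this reading in hand, the proof reduces to unwinding Definition 1 of the virtual tube and of the induced cross-section.

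First I would fix an arbitrary time $t_0 \in \mathcal{I}$ and an arbitrary point $\mathbf{x} \in \mathcal{C}_{t_0}$. Definition 1 specifies the cross-section at $t_0$ as
\[
\mathcal{C}_{t_0} = \{ \mathbf{h}((\mathbf{q}_0,\mathbf{q}_m), t_0) \mid (\mathbf{q}_0,\mathbf{q}_m) \in \mathcal{P} \},
\]
which is precisely the image of the pair set $\mathcal{P}$ under the time-$t_0$ slice of $\mathbf{h}$. Since $\mathbf{x}$ lies in this image, there must exist at least one preimage $(\mathbf{q}_0^{*},\mathbf{q}_m^{*}) \in \mathcal{P}$ with $\mathbf{h}((\mathbf{q}_0^{*},\mathbf{q}_m^{*}), t_0) = \mathbf{x}$; otherwise $\mathbf{x}$ would not belong to $\mathcal{C}_{t_0}$ in the first place.

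Next I would invoke the smoothness of $\mathbf{h}$ on $\mathcal{P}\times\mathcal{I}$: fixing the particular pair $(\mathbf{q}_0^{*},\mathbf{q}_m^{*})$ obtained above yields a well-defined trajectory $\gamma(s) := \mathbf{h}((\mathbf{q}_0^{*},\mathbf{q}_m^{*}), s)$ for $s \in \mathcal{I}$, satisfying $\gamma(0)=\mathbf{q}_0^{*}$, $\gamma(1)=\mathbf{q}_m^{*}$, and in particular $\gamma(t_0)=\mathbf{x}$. Hence $\gamma$ is a trajectory of the virtual tube that passes through the chosen point of the cross-section, which is exactly what the proposition asserts.

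The proof is essentially tautological, so there is no real obstacle to overcome. The only subtlety worth flagging is that the diffeomorphism hypothesis on $\mathbf{f}:\mathcal{C}_0 \to \mathcal{C}_1$ guarantees that $\mathcal{P}$ is nonempty and that each starting configuration has a unique goal; this ensures the cross-section is well-defined and that the preimage argument is never vacuous. Everything else follows directly from the set-theoretic identity defining $\mathcal{C}_{t_0}$ as the image of $\mathcal{P}$ under $\mathbf{h}(\cdot,t_0)$.
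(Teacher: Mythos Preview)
Your argument is correct: once one reads the proposition through its clarifying ``That is'' clause, the claim is immediate from the fact that $\mathcal{C}_{t_0}$ is \emph{defined} as the image of $\mathcal{P}$ under $\mathbf{h}(\cdot,t_0)$, so every point of it admits a preimage and hence a trajectory.

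The paper's own proof is even terser and phrased differently: it simply asserts that $\mathcal{P}\times\mathcal{I}$ is ``continuous'' and that $\mathbf{h}$ is continuous, concluding that the cross-sections are continuous. In spirit this is the same observation you make---the cross-section inherits its covering property from the domain via $\mathbf{h}$---but the paper leans on the word ``continuous'' rather than on the explicit image/preimage language. Your version is the more transparent one, since it ties the conclusion directly to the set-theoretic identity in Definition~2 rather than to an informal notion of a ``continuous'' domain; the paper's phrasing gains nothing substantive beyond brevity.
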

\begin{proof}
	The domain $\mathcal{P} \times \mathcal{I}$ is continuous. And the map ${\bf h}$ is continuous. So all cross-sections are continuous. $\hfill\blacksquare$
\end{proof}

\begin{figure}
	\centering
	\includegraphics{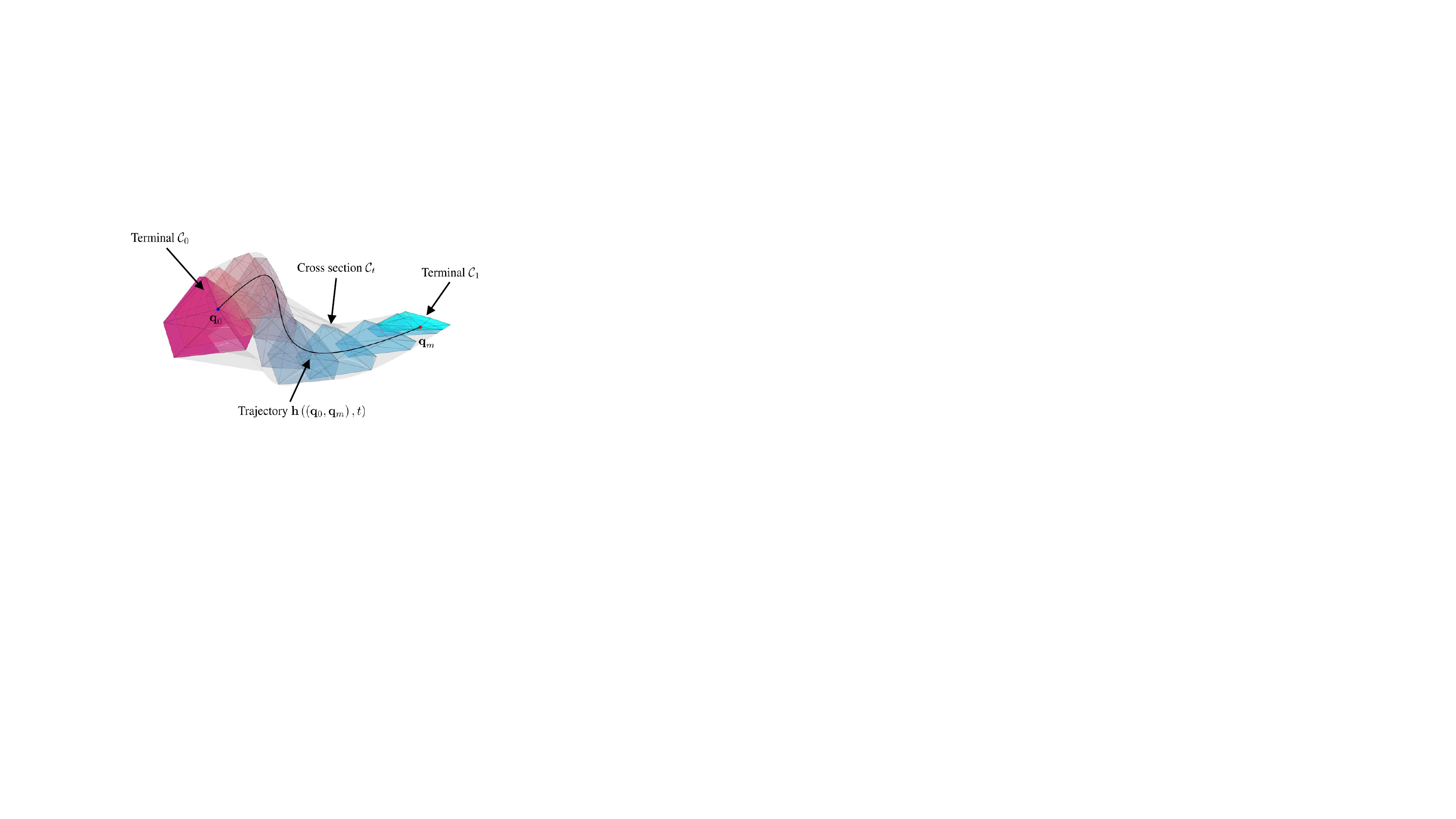}
	\caption{Virtual tube example. The purple and blue polyhedrons are terminals. The shaded polyhedrons are cross sections ${\mathcal C}_t$. The black curve is a trajectory that is from ${\bf q}_{0}$ in terminal $\mathcal{C}_0$ to ${\bf q}_{m}$ in terminal $\mathcal{C}_1$. The gray area is the virtual tube. }
	\label{fig:tubedef}
\end{figure}

{
\begin{remark}
	It should be noted that there is no explicit definition of a generator curve in \emph{Definition \ref{def:virtual-tube}}, compared with the definition of the virtual tube with a generator curve in our previous work (\cite{Mao2022}). In the following, we will show that the previous definition is a special case of \emph{Definition \ref{def:virtual-tube}}. The virtual tube with a generator curve is defined as 
	\[{\mathcal{T}}\left( {s,\theta ,\rho } \right) = {\bm{\gamma }}\left( s \right) + \rho \lambda \left( {s,\theta } \right)\left( {{\bf{n}}\left( s \right)\cos \theta  + {\bf{b}}\left( s \right)\sin \theta } \right),\] 	where  $s \in \mathcal{D}_s= \left[ {{s_0},{s_f}} \right] \subset  \mathbb{R} $, $\rho  \in {\mathcal{D_\rho }} = \left[ {{\rho _{\min }}\left( s \right),1} \right]$, $\theta  \in \mathcal{D}_{\theta}  \subset \mathbb{R}$, $\rho _{\min }\left(s\right)$ is a real-value function with respect to $s$, the second order differentiable curve ${\bm{\gamma }}\left( s \right) \in \mathbb{R}^n$ 
	is called \emph{generator curve}, ${\bf{n}}\left( s \right) = \frac{{\ddot {\bm \gamma} \left( s \right)}}{{\left\| {\dot {\bm \gamma} \left( s \right)} \right\|}} - \frac{{\dot {\bm \gamma} \left( s \right) \cdot \ddot {\bm \gamma} \left( s \right)}}{{{{\left\| {\dot {\bm \gamma} \left( s \right)} \right\|}^3}}}\dot {\bm \gamma} \left( s \right)$ and ${\bf{b}}\left( s \right) = \frac{{{\bm{\dot \gamma }}\left( s \right)}}{{\left| {{\bm{\dot \gamma }}\left( s \right)} \right|}} \times {\bf{n}}\left( s \right)$ $\in \mathbb{R}^n$ are \emph{principle normal} and \emph{binormal} vectors of generator curve $\bm{\gamma}$ at the point $\bm{\gamma}\left( s \right)$ respectively, and $\lambda \left( {s,\theta } \right) \in    \mathbb{R}$ called \emph{radius} is continuous with respect to $s$. 
	Let terminals
	\[{{\cal C}_0} = \left\{ {{\bf{q}}_0|{\bf{q}}_0 = {{\bf{f}}_s}\left( {\theta ,\rho } \right) = {\cal T}\left( {{s_0},\theta ,\rho } \right),\theta  \in {{\cal D}_\theta },\rho  \in {{\cal D}_\rho }} \right\},\] \[{{\cal C}_1} = \left\{ {{\bf{q}}_m|{\bf{q}}_m = {{\bf{f}}_f}\left( {\theta ,\rho } \right) = {\cal T}\left( {{s_f},\theta ,\rho } \right),\theta  \in {{\cal D}_\theta },\rho  \in {{\cal D}_\rho }} \right\}.\] 
	Thus, denote ${\bf f} = {{\bf f}_f} \circ {\bf f}_s^{ - 1}$ which is a diffeomorphism, and then, the set of order pairs ${\mathcal P}$ is generated by ${\bf f}$. Let the trajectory be 
	\[{\bf{h}}\left( {\left( {{\cal T}\left( {{s_0},\theta ,\rho } \right),{\cal T}\left( {{s_f},\theta ,\rho } \right)} \right),\frac{{s - {s_0}}}{{s_f - {s_0}}}} \right) = {\cal T}\left( {s,\theta ,\rho } \right)\] 
	which is a smooth map. Therefore, this virtual tube with a generator curve is a special case of the virtual tube in \emph{Definition \ref{def:virtual-tube}}. 
	Here is a simple example. As shown in Fig. \ref{fig:virtual_tube_example}, the generator curve ${\bm\gamma} \left( s \right) = \left[s\quad0\quad 0\right]^{\rm{T}}$, radius $\lambda \left( {s,\theta } \right) = 1$, normal unit vector ${\bf{n}}\left( s \right) = \left[0\quad 1\quad 0\right]^{\rm{T}}$, binormal unit vector ${\bf{b}}\left(s\right) = \left[0\quad0\quad1\right]^{\rm{T}}$, $s \in \left[ {0,1} \right]$, $\theta  \in \left[ {0,2\pi } \right)$, $\rho  \in \left[ {0,1} \right]$. 
	Obviously, $\mathcal{C}_0$ and $\mathcal{C}_1$ are disjoint bounded convex subsets, and are called terminals. For any ${\bf{q}}_0 = \left[0\quad{\rho \cos \theta }\quad{\rho \sin \theta }\right]^{\rm{T}} \in {\cal C}_0$, the corresponding point in terminal $\mathcal{C}_1$ is ${\bf{q}}_m = {\bf{f}}\left( {\bf{q}}_0 \right) = \left[1\quad{\rho \cos \theta }\quad{\rho \sin \theta }\right]^{\rm{T}} \in {\cal C}_1$, the set of order pairs $\mathcal{P}$ is constructed. And, ${\bf f}$ is a diffeomorphism, ${\bf h}$ is a smooth map. Thus, this tube with a generator curve could be defined by \emph{Definition \ref{def:virtual-tube}}. 
\end{remark}
}
\begin{figure}
	\centering
	\includegraphics{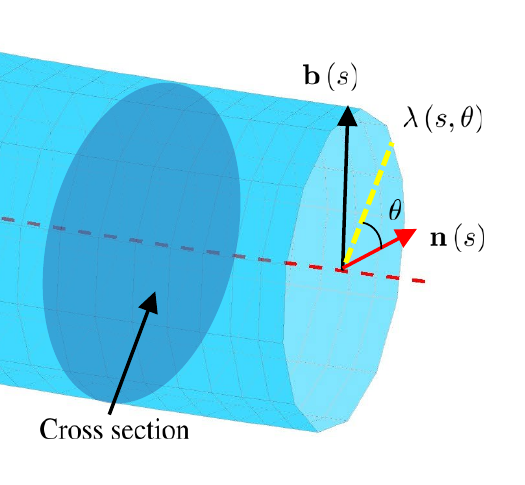}
	\caption{Virtual tube with a generator curve in $\mathbb{R}^3$, where the
		red dotted line is the generator curve; the dark blue plane is a cross-section and
		the light blue surface is the virtual tube surface.}
	\label{fig:virtual_tube_example}
\end{figure}

Some definitions which will be used are proposed in the following. 
\begin{Def}
	A virtual tube $\mathcal{T}\left( {\mathcal{C}_0,\mathcal{C}_1,{\bf{f}},{\bf{h}}} \right)$ is \emph{linear} if it satisfies following properties:
	\begin{itemize}
		\item The diffeomorphism ${\bf f}$ is a \emph{linear} map for any ${\bf q}_{0,k} \in {\mathcal C}_0$ and $\theta_k \in \mathbb{R}$, namely \[{\bf f}\left( {\sum\limits_{k = 1}^q {{\theta _k}{{\bf q}_{0,k}}} } \right) = \sum\limits_{k = 1}^q {{\theta _k}{\bf f}\left( {{{\bf q}_{0,k}}} \right)}.\]
		\item The smooth map ${{\bf h}}$ is a \emph{linear} map for any $\left( {{\bf{q}}_{0,k},{\bf{q}}_{m,k}} \right) \in \mathcal{P}$ and $\eta_k \in \mathbb{R}$, namely
		\[{{\bf{h}}\left( {\sum\limits_{k = 1}^q {{\eta _k}\left( {{{\bf{q}}_{0,k}},{{\bf{q}}_{m,k}}} \right)} ,t} \right)}\]
		\[{ = \sum\limits_{k = 1}^q {{\eta _k}{\bf{h}}\left( {\left( {{{\bf{q}}_{0,k}},{{\bf{q}}_{m,k}}} \right),t} \right).} }\]
	\end{itemize}
	\label{def:linear-tube}
\end{Def}

\begin{Def}
	A terminal $\mathcal{C}_0$ (or $\mathcal{C}_1$) is a \emph{convex hull} of the finite set $\{ {\bf q}_{0,k}\}$ if it can be expressed as
	\begin{equation}
	{\mathcal{C}_0} = \left\{ { {\bf q}_{0} |{\bf q}_{0} = \sum\limits_{k = 1}^q {{\theta _k}{{\bf q}_{0,k}}} ,\sum\limits_{k = 1}^q {{\theta _k}}  = 1,{\theta _k} \ge 0} \right\}.
	\label{equ:convex-hull}
	\end{equation}
	\label{def:convex-hull}
\end{Def}
According to \emph{Definition \ref{def:optimal-trajectory}}, a trajectory ${\bf h}^*$ for an order pair $\left({\bf q}_0,{\bf q}_m\right)$ is optimal with respect to a cost function defined as  $g\left({\bf h}\left(\left({\bf q}_0,{\bf q}_m\right),t\right)\right)$.
Consequently, the definition of an optimal virtual tube is proposed in the following.
\begin{Def}
	A virtual tube $\mathcal T$ is \emph{optimal} with respect to a cost  $g$ if every trajectory in the tube is \emph{optimal} with respect to a cost $g$, namely
	${\cal T} = \left( {\mathcal{C}_0,\mathcal{C}_1,{\bf f},{{\bf h}^*}} \right).$
\end{Def}
As shown in Fig \ref{fig:tubedef}, if all trajectories in the tube are optimal, the tube is an optimal virtual tube.

\section{A Type of Optimal Virtual Tubes: Linear Virtual Tube with Convex Hull Terminals}
There are infinite trajectories in the virtual tube, so it is impossible to solve optimization problems for every optimal trajectory respectively. The first natural thought is to perform interpolation to reduce calculation burden significantly. There are two lemmas about conditions for optimality with proofs shown in \emph{Appendix A}, which are useful for constructing a type of optimal virtual tubes.
\begin{lemma}
	\textbf{Constrained with hyperplanes}: Suppose that a convex optimization problem has the following type
	\begin{equation}
	\begin{array}{ll}
	{\rm{min }}\quad& f_0\left( {\bf{x}} \right)\\
	{\rm{s. t.}} & {\bf{Ax}} = {\bf{b}}
	\end{array}
	\label{equ:optim_problem}
	\end{equation}
	where $f_0\left( {\bf{x}} \right)$ is a convex function with respect to ${\bf{x}} = {\left[ {\begin{array}{*{20}{c}}{{x_0}}&{{x_1}}&{...}&{{x_n}}
			\end{array}} \right]^{\rm{T}}}$, ${\bf{A}} \in \mathbb{R} {^{p\times (n+1)}}$$(p<n+1)$ , ${\bf{b}} \in \mathbb{R} {^{p}}$ and $\nabla f_0\left( {\bf{x}} \right)$ is linear with respect to ${\bf{x}}$. The optimal solution ${\bf{x}}_k$ is obtained when ${\bf{b}} = {{\bf{b}}}_k,k=1,2,...,q$. Denote ${\mathcal{B}} = \{{\bf{b}}_k\}$, ${\mathcal{X}}=\{{\bf{x}}_k\}$,  ${\bf{b}}\left( {\bm{\theta}} \right) =  \sum\nolimits_{k = 1}^q {{\theta _k}{{\bf{b}}_k}} \in {\bf{conv}}\,{\mathcal{B}}$. Then ${\bf{x}}\left( {\bm{\theta}} \right) =\sum\nolimits_{k = 1}^q {{\theta _k}{{\bf{x}}_k}} \in {\bf{conv}}\,{\mathcal{X}}$ is feasible and optimal.
\end{lemma}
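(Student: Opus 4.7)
The plan has two parts, matching the two assertions in the lemma: first that $\mathbf{x}(\boldsymbol{\theta})$ is feasible for the problem with right-hand side $\mathbf{b}(\boldsymbol{\theta})$, and second that it actually minimizes $f_0$ over that feasible set. Feasibility is immediate: since each $\mathbf{x}_k$ is optimal for right-hand side $\mathbf{b}_k$, it satisfies $\mathbf{A}\mathbf{x}_k = \mathbf{b}_k$, and summing the linear combination $\sum_k \theta_k \mathbf{A}\mathbf{x}_k = \sum_k \theta_k \mathbf{b}_k$ gives $\mathbf{A}\mathbf{x}(\boldsymbol{\theta}) = \mathbf{b}(\boldsymbol{\theta})$. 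So the work is all in establishing optimality.

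For optimality, I plan to use the first-order characterization of a minimizer of a differentiable convex function over a convex feasible set already recalled in the preliminaries: it suffices to show that $\nabla f_0(\mathbf{x}(\boldsymbol{\theta}))^{\mathrm{T}}(\mathbf{y} - \mathbf{x}(\boldsymbol{\theta})) \ge 0$ for every $\mathbf{y}$ with $\mathbf{A}\mathbf{y} = \mathbf{b}(\boldsymbol{\theta})$. The key trick will be to lift a feasible $\mathbf{y}$ for the $\boldsymbol{\theta}$-problem to feasible points for each of the $q$ original problems by setting $\mathbf{y}_k := \mathbf{x}_k + (\mathbf{y} - \mathbf{x}(\boldsymbol{\theta}))$. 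A one-line computation shows $\mathbf{A}\mathbf{y}_k = \mathbf{b}_k$, because $\mathbf{A}(\mathbf{y} - \mathbf{x}(\boldsymbol{\theta})) = \mathbf{b}(\boldsymbol{\theta}) - \mathbf{b}(\boldsymbol{\theta}) = 0$. Applying the optimality of $\mathbf{x}_k$ to $\mathbf{y}_k$ then yields $\nabla f_0(\mathbf{x}_k)^{\mathrm{T}}(\mathbf{y} - \mathbf{x}(\boldsymbol{\theta})) \ge 0$ for every $k$.

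The final step is to combine these $q$ inequalities using the hypothesis that $\nabla f_0$ is linear in $\mathbf{x}$. Since $\theta_k \ge 0$, I can take the $\theta_k$-weighted sum to get
\[\left(\sum_{k=1}^q \theta_k \nabla f_0(\mathbf{x}_k)\right)^{\mathrm{T}}(\mathbf{y} - \mathbf{x}(\boldsymbol{\theta})) \ge 0,\]
and by linearity of $\nabla f_0$ the bracketed vector equals $\nabla f_0\!\left(\sum_k \theta_k \mathbf{x}_k\right) = \nabla f_0(\mathbf{x}(\boldsymbol{\theta}))$. This is exactly the optimality criterion, so $\mathbf{x}(\boldsymbol{\theta})$ is optimal. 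The conclusion $\mathbf{x}(\boldsymbol{\theta}) \in \mathbf{conv}\,\mathcal{X}$ is by construction, since $\theta_k \ge 0$ and (implicitly, matching $\mathbf{b}(\boldsymbol{\theta}) \in \mathbf{conv}\,\mathcal{B}$) $\sum_k \theta_k = 1$.

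I do not expect any serious obstacle: the argument is standard once one notices the lift $\mathbf{y} \mapsto \mathbf{y}_k$. The only subtle point worth emphasizing is that the linearity hypothesis on $\nabla f_0$ is genuinely needed in the last step; without it, the weighted sum of gradients would not collapse into the gradient at the averaged point, and the argument would break. It is also worth noting that the result does not require $f_0$ to be strictly convex, only that the subproblem gradients combine linearly.
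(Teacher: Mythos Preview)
Your proposal is correct and follows essentially the same approach as the paper's proof. The paper phrases the key observation in terms of the null space---writing any feasible point as $\mathbf{x}_k+\mathbf{v}$ with $\mathbf{v}\in N(\mathbf{A})$ and noting that $\mathbf{y}-\mathbf{x}(\boldsymbol{\theta})\in N(\mathbf{A})$---while you phrase it as the explicit lift $\mathbf{y}_k=\mathbf{x}_k+(\mathbf{y}-\mathbf{x}(\boldsymbol{\theta}))$; these are the same idea, and both proofs then use linearity of $\nabla f_0$ to collapse the weighted sum of gradients into $\nabla f_0(\mathbf{x}(\boldsymbol{\theta}))$ and invoke the first-order optimality criterion.
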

\begin{lemma}
	\textbf{Standard form}: Suppose that a convex optimization problem has the following type
	\begin{equation}
	\begin{array}{ll}
	{\rm{min }}\quad &f_0\left( {\bf{x}} \right)\\
	{\rm{s. t. }}\ &{\bf{Ax}} = {\bf{b}}\\
	&{f_i}\left( {\bf{x}} \right)  \le 0,i = 1,...n_c
	\end{array}
	\label{equ:lemma_standrad}
	\end{equation}
	where $f_i\left( {\bf{x}} \right)\left(i=0,1,...,n_c\right)$ are convex functions with respect to ${\bf{x}} = {\left[ {\begin{array}{*{20}{c}}{{x_0}}&{{x_1}}&{...}&{{x_n}}
			\end{array}} \right]^{\rm{T}}}$, ${\bf{A}} \in \mathbb{R} {^{p\times (n+1)}}(p<n+1)$, ${\bf{b}} \in \mathbb{R} {^{p}}$ and $\nabla f_0\left( {\bf{x}} \right)$ is linear with respect to ${\bf{x}}$. The optimal solution ${\bf{x}}_k$ is obtained when ${\bf{b}} = {{\bf{b}}}_k$, $k=1,2,...,q$. Denote ${\mathcal{B}} = \{{\bf{b}}_k\}$, ${\mathcal{X}}=\{{\bf{x}}_k\}$, ${\bf{b}}\left( {\bm{\theta}} \right) =  \sum\nolimits_{k = 1}^q {{\theta _k}{{\bf{b}}_k}} \in {\bf{conv}}\,{\mathcal{B}}$. Then ${\bf{x}}\left( {\bm{\theta}} \right) =\sum\nolimits_{k = 1}^q {{\theta _k}{{\bf{x}}_k}} \in {\bf{conv}}\,{\mathcal{X}}$ is feasible and optimal.
	\label{lemma:standard_form}
\end{lemma}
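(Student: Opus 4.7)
The plan is to follow the structure of the proof of \emph{Lemma 1} and handle the new inequality constraints $f_i(\mathbf{x}) \le 0$ by exploiting convexity of each $f_i$. First I would establish feasibility of $\mathbf{x}(\bm{\theta}) = \sum_k \theta_k \mathbf{x}_k$ for the combined problem: linearity of $\mathbf{x} \mapsto \mathbf{A}\mathbf{x}$ gives $\mathbf{A}\mathbf{x}(\bm{\theta}) = \sum_k \theta_k \mathbf{A}\mathbf{x}_k = \sum_k \theta_k \mathbf{b}_k = \mathbf{b}(\bm{\theta})$, and for each $i$ convexity of $f_i$ together with $\theta_k \ge 0$, $\sum_k \theta_k = 1$, and feasibility of every $\mathbf{x}_k$ in its own subproblem yields $f_i(\mathbf{x}(\bm{\theta})) \le \sum_k \theta_k f_i(\mathbf{x}_k) \le 0$.

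For optimality I would invoke the first-order criterion of Section 2.2.2, so it suffices to verify that $\nabla f_0(\mathbf{x}(\bm{\theta}))^{\mathrm{T}}(\mathbf{y} - \mathbf{x}(\bm{\theta})) \ge 0$ for every $\mathbf{y}$ feasible in the combined problem. Mirroring the argument behind \emph{Lemma 1}, I would introduce the auxiliary points $\mathbf{y}_k := \mathbf{y} + \mathbf{x}_k - \mathbf{x}(\bm{\theta})$, which automatically satisfy $\mathbf{A}\mathbf{y}_k = \mathbf{b}_k$. Provided each $\mathbf{y}_k$ is also feasible for the inequality constraints of the $k$-th subproblem, the optimality of $\mathbf{x}_k$ supplies $\nabla f_0(\mathbf{x}_k)^{\mathrm{T}}(\mathbf{y}_k - \mathbf{x}_k) \ge 0$, and taking the convex combination together with the hypothesis that $\nabla f_0$ is linear in $\mathbf{x}$ gives
\begin{equation*}
\nabla f_0(\mathbf{x}(\bm{\theta}))^{\mathrm{T}}(\mathbf{y} - \mathbf{x}(\bm{\theta})) = \sum_{k=1}^{q}\theta_k\, \nabla f_0(\mathbf{x}_k)^{\mathrm{T}}(\mathbf{y}_k - \mathbf{x}_k) \ge 0,
\end{equation*}
which by the criterion of Section 2.2.2 is exactly the optimality of $\mathbf{x}(\bm{\theta})$.

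The delicate step, and the one I expect to be the main obstacle, is showing that the translated points $\mathbf{y}_k$ remain feasible for the inequality constraints $f_i(\mathbf{y}_k) \le 0$, since convex sublevel sets are not in general invariant under the translation $\mathbf{y} \mapsto \mathbf{y} + \mathbf{x}_k - \mathbf{x}(\bm{\theta})$. I would address this by exploiting the structure of the paper's trajectory-planning setting in which the $f_i$ are affine (tube-boundary) constraints, so that $f_i(\mathbf{y}_k) = f_i(\mathbf{y}) + f_i(\mathbf{x}_k) - f_i(\mathbf{x}(\bm{\theta}))$ reduces the required feasibility to an elementary slack computation. As an alternative route one could bypass the translated points altogether and work with KKT: multipliers $\bm{\nu}(\bm{\theta}) := \sum_k \theta_k \bm{\nu}_k$ and $\bm{\lambda}(\bm{\theta}) := \sum_k \theta_k \bm{\lambda}_k$ inherited from the KKT conditions at each $\mathbf{x}_k$ would be shown to satisfy stationarity at $\mathbf{x}(\bm{\theta})$ via linearity of $\nabla f_0$, dual feasibility via $\theta_k \ge 0$, primal feasibility via the first step, and complementary slackness via the same affine structure of the $f_i$.
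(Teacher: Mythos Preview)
Your feasibility argument is identical to the paper's. The optimality argument, however, takes a different route and contains the gap you yourself flag: the translated points $\mathbf{y}_k=\mathbf{y}+\mathbf{x}_k-\mathbf{x}(\bm\theta)$ need not satisfy $f_i(\mathbf{y}_k)\le 0$ for general convex $f_i$, and neither of your proposed fixes closes this for the lemma as stated. Restricting to affine $f_i$ proves only a special case (and the paper's corridor constraint $\|\mathbf{d}_i(s_j)\|_\infty\le\delta_i$ is not literally affine, though it can be rewritten as such), while the KKT route inherits the same obstruction at complementary slackness: with $\bm\lambda(\bm\theta)=\sum_k\theta_k\bm\lambda_k$ you would need $\lambda_i(\bm\theta)f_i(\mathbf{x}(\bm\theta))=0$, which again only follows from the affine structure you are trying to avoid assuming.

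The paper's argument avoids translating a single $\mathbf{y}$ into every subproblem. Instead it writes the feasible set at $\mathbf{x}(\bm\theta)$ in ``direction'' form $\mathcal{V}_\theta$ and asserts a decomposition $\mathbf{v}_\theta=\sum_k\theta_k\mathbf{v}_k$ with each $\mathbf{v}_k\in\mathcal{V}_k$ feasible at $\mathbf{x}_k$. Expanding $\nabla f_0(\mathbf{x}(\bm\theta))^{\mathrm T}\mathbf{v}_\theta$ via linearity of $\nabla f_0$ then produces cross terms $\nabla f_0(\mathbf{x}_i)^{\mathrm T}\mathbf{v}_j$ with $i\neq j$, which the paper rearranges so that the residual is a positive multiple of the gradient-monotonicity expression $(\nabla f_0(\mathbf{x}_i)-\nabla f_0(\mathbf{x}_j))^{\mathrm T}(\mathbf{x}_i-\mathbf{x}_j)\ge 0$ coming from convexity of $f_0$; the case $q=2$ is done explicitly and the general $q$ follows by induction. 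The two ingredients missing from your plan are thus (i) the decomposition of $\mathbf{v}_\theta$ into subproblem-feasible pieces rather than a rigid translation, and (ii) the use of gradient monotonicity, not merely linearity, of $\nabla f_0$ to control the resulting cross terms.
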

\begin{Def}
	A virtual tube $\mathcal{T}\left(\mathcal{C}_0,\mathcal{C}_1,{\bf f}, {\bf h}\right)$ is called \emph{a linear virtual tube with convex hull terminals} if it satisfies:
	\begin{itemize}
		\item The terminal $\mathcal{C}_0$ is the convex hull of the finite set $\{{\bf q}_{0,k}\}$, and the terminal $\mathcal{C}_1$ is the convex hull of the finite set $\{{\bf q}_{m,k}\}$, $k=1,2,...,q$. The set $\{{\bf q}_{0,k}\}$ and the set $\{{\bf q}_{m,k}\}$ have the same number of elements.
		\item For any ${{{\bf{q}}_{0,k}}}$ in the set $\{{\bf q}_{0,k}\}$, ${{\bf{q}}_{m,k}} = {\bf{f}}\left( {{{\bf{q}}_{0,k}}} \right)$.
		\item The virtual tube $\mathcal{T}\left(\mathcal{C}_0,\mathcal{C}_1,{\bf f}, {\bf h}\right)$ is \emph{linear}.
	\end{itemize}
\end{Def}
A linear virtual tube with convex hull terminals is a special case of virtual tubes, and the convex hull of the finite set is a polyhedron (\cite{Boyd_convex_2004}), as shown in Fig. \ref{fig:cross-section}. An example of a linear virtual tube with convex hull terminals is shown in Fig. \ref{fig:tubedef}. 
\begin{figure}
	\centering
	\includegraphics{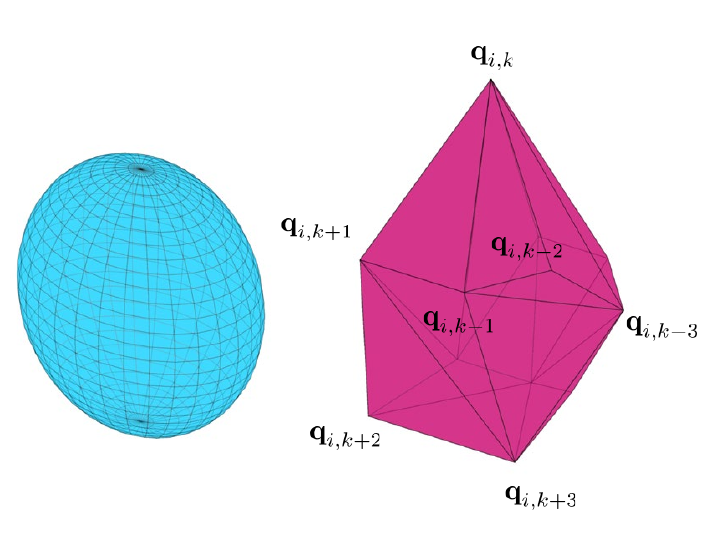}
	\caption{The blue sphere is a terminal which is a convex set. The purple polyhedron is a convex hull terminal. And they could also indicate the cross-section ${\mathcal{C}}_t$ at $t$.}
	\label{fig:cross-section}
\end{figure}
\begin{theorem}
	A linear virtual tube with convex hull terminals $\mathcal{T}\left(\mathcal{C}_0,\mathcal{C}_1,{\bf f}, {\bf h}\right)$ is optimal with respect to cost $f_0$ if (i) its trajectory for each order pair ${\left({\bf q}_0,{\bf q}_m\right)}$ can be linearly parameterized as ${\bf h}\left( {\left( {{\bf{q}}_0,{\bf{q}}_m} \right),t} \right) = {\bf{C}}\left( t \right){\bf{x}}$, where ${\bf C}\left( t \right)$ is a matrix only respect to $t$, and ${\bf x}$ is a parameter vector, (ii) the trajectories ${\bf h}^*\left( {\left( {{\bf{q}}_{0,k},{\bf{q}}_{m,k}} \right),t} \right)$ for order pairs $\left( {{{\bf{q}}_{0,k}},{{\bf{q}}_{m,k}}} \right)\left(k=1,2,...,q\right)$ are optimal with respect to cost $f_0$ and satisfy conditions in (\ref{equ:lemma_standrad}) where ${\bf b} = {\bf b}_k, {\bf x}={\bf x}_k$ for each trajectory ${\bf h}^*\left( {\left( {{\bf{q}}_{0,k},{\bf{q}}_{m,k}} \right),t} \right)$, (iii) the linear equality constraint is ${\bf{Ax}} = \sum\nolimits_{k = 1}^q {{\theta _k}{{\bf{b}}_k}} $ for each order pair $\left( {{{\bf{q}}_0},{{\bf{q}}_m}} \right)$.
	\label{the:convex_hull}
\end{theorem}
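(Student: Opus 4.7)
The plan is to combine the convex-hull structure of the terminals with the two linearity hypotheses of \emph{Definition \ref{def:linear-tube}} and then invoke \emph{Lemma \ref{lemma:standard_form}} to transport optimality from the $q$ vertex order pairs to every order pair in the tube.

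First I would pick an arbitrary order pair $({\bf q}_0, {\bf q}_m) \in \mathcal{P}$. Because $\mathcal{C}_0$ is the convex hull of $\{{\bf q}_{0,k}\}$, there exist weights $\theta_k \geq 0$ with $\sum_{k=1}^q \theta_k = 1$ such that ${\bf q}_0 = \sum_{k=1}^q \theta_k {\bf q}_{0,k}$. Linearity of the diffeomorphism ${\bf f}$ then gives ${\bf q}_m = {\bf f}({\bf q}_0) = \sum_{k=1}^q \theta_k {\bf f}({\bf q}_{0,k}) = \sum_{k=1}^q \theta_k {\bf q}_{m,k}$, which simultaneously shows the same weights realize ${\bf q}_m$ as a convex combination of the vertices of $\mathcal{C}_1$. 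Applying linearity of ${\bf h}$ in the order-pair argument yields
\[
{\bf h}\left( ({\bf q}_0,{\bf q}_m), t \right) = \sum_{k=1}^q \theta_k \, {\bf h}\left( ({\bf q}_{0,k},{\bf q}_{m,k}), t \right).
\]

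Next I would translate this identity into parameter space via ${\bf h}(({\bf q}_{0,k},{\bf q}_{m,k}), t) = {\bf C}(t){\bf x}_k$. Since ${\bf C}(t)$ does not depend on $k$, summing against $\theta_k$ produces ${\bf h}(({\bf q}_0,{\bf q}_m), t) = {\bf C}(t)\,{\bf x}(\bm{\theta})$ with ${\bf x}(\bm{\theta}) = \sum_{k=1}^q \theta_k {\bf x}_k$. By hypothesis (ii), each ${\bf x}_k$ is an optimal solution of problem (\ref{equ:lemma_standrad}) with right-hand side ${\bf b}_k$; and by hypothesis (iii), the constraint attached to the order pair $({\bf q}_0,{\bf q}_m)$ is ${\bf A}{\bf x} = \sum_{k=1}^q \theta_k {\bf b}_k \in {\bf conv}\,\mathcal{B}$. \emph{Lemma \ref{lemma:standard_form}} directly supplies both feasibility and optimality of ${\bf x}(\bm{\theta})$ for that constraint, so the trajectory ${\bf C}(t){\bf x}(\bm{\theta})$ is an optimal trajectory for $({\bf q}_0,{\bf q}_m)$.

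Finally, since ${\bf h}$ is the smooth map of the tube, the trajectory it assigns to $({\bf q}_0,{\bf q}_m)$ must coincide with ${\bf C}(t){\bf x}(\bm{\theta})$; concretely, evaluate the two expressions at $t = 0$ and $t = 1$ and use ${\bf h}(({\bf q}_{0,k},{\bf q}_{m,k}),0) = {\bf q}_{0,k}$, ${\bf h}(({\bf q}_{0,k},{\bf q}_{m,k}),1) = {\bf q}_{m,k}$ to conclude that the parameterization is consistent with the boundary data of \emph{Definition \ref{def:virtual-tube}}. Because $({\bf q}_0,{\bf q}_m)$ was arbitrary, every trajectory in $\mathcal{T}$ is optimal, i.e. ${\bf h} = {\bf h}^*$, establishing that $\mathcal{T}$ is an optimal virtual tube in the sense of \emph{Definition 5}.

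I expect the main obstacle to be verifying cleanly that the hypotheses of \emph{Lemma \ref{lemma:standard_form}} are honestly satisfied for the interpolated right-hand side $\sum_k \theta_k {\bf b}_k$ — in particular that the inequality constraints $f_i({\bf x}(\bm{\theta})) \leq 0$ remain satisfied under convex combination and that hypothesis (iii) is not vacuous for intermediate ${\bm\theta}$. Everything else is essentially bookkeeping enabled by the two linearity conditions, but this feasibility check is where the convexity of the $f_i$ and the shape of $\mathcal{C}_0$, $\mathcal{C}_1$ as convex hulls are genuinely used.
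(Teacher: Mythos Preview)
Your proposal is correct and follows essentially the same route as the paper's proof: first use the convex-hull structure of $\mathcal{C}_0$ together with linearity of ${\bf f}$ and ${\bf h}$ to write an arbitrary trajectory as $\sum_k \theta_k\,{\bf h}^*(({\bf q}_{0,k},{\bf q}_{m,k}),t)={\bf C}(t)\sum_k\theta_k{\bf x}_k$, then invoke \emph{Lemma \ref{lemma:standard_form}} with ${\bf b}=\sum_k\theta_k{\bf b}_k$ to conclude optimality of ${\bf x}(\bm\theta)$. The feasibility concern you flag for the inequality constraints is already absorbed into \emph{Lemma \ref{lemma:standard_form}} (its proof uses convexity of the $f_i$ to get $f_i({\bf x}(\bm\theta))\le\sum_k\theta_k f_i({\bf x}_k)\le 0$), so no extra work is needed there.
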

\begin{proof}
	We show that, in \emph{Step 1}, any trajectory ${\bf h}\left( {\left( {{\bf{q}}_0,{\bf{q}}_m} \right),t} \right)$ in the linear virtual tube with convex hull terminals can be expressed as a convex combination of trajectories ${\bf h}\left( {\left( {{\bf{q}}_{0,k},{\bf{q}}_{m,k}} \right),t} \right)$. Then, in \emph{Step 2}, based on \emph{Lemma \ref{lemma:standard_form}}, we will show that any trajectory ${\bf h}\left( {\left( {{\bf{q}}_0,{\bf{q}}_m} \right),t} \right)$ is optimal if the trajectories ${\bf h}^*\left( {\left( {{\bf{q}}_{0,k},{\bf{q}}_{m,k}} \right),t} \right)$ for order pairs $\left( {{{\bf{q}}_{0,k}},{{\bf{q}}_{m,k}}} \right)\left(k=1,2,...,q\right)$ are optimal. Therefore, this virtual tube is optimal.
	  
	
	\textbf{Step 1}. Suppose that the terminal ${\mathcal C}_0$ is the convex hull of the finite set $\left\{ {{{\bf{q}}_{0,k}}} \right\}$, such as 
	\[{\mathcal{C}_0} = \left\{ { {\bf q}_0 |{\bf q}_0 = \sum\limits_{k = 1}^q {{\theta _k}{{\bf q}_{0,k}}} ,\sum\limits_{k = 1}^q {{\theta _k}}  = 1,{\theta _k} \ge 0} \right\}, \]
	where $q$ is the number of the elements in set $\left\{ {{{\bf{q}}_{0,k}}} \right\}$. Then $q$ order pairs $\{\left( {{{\bf{q}}_{0,k}},{{\bf{q}}_{m,k}}} \right)\}$ are constructed by 
	\[{{\bf{q}}_{m,k}} = {\bf{f}}\left( {{{\bf{q}}_{0,k}}} \right).\]
	Thus, the set of order pairs $\mathcal{P}$ is constructed by 
	\[\begin{array}{lll}
	{\cal P} &=& \left\{ {\left( {{{\bf{q}}_0},{\bf{f}}\left( {{{\bf{q}}_0}} \right)} \right)} \right\}\\
	 &=& \left\{ {\left( {\sum\limits_{k = 1}^q {{\theta _k}{{\bf{q}}_{0,k}}} ,{\bf{f}}\left( {\sum\limits_{k = 1}^q {{\theta _k}{{\bf{q}}_{0,k}}} } \right)} \right)} \right\}\\
	&=& \left\{ {\left( {\sum\limits_{k = 1}^q {{\theta _k}{{\bf{q}}_{0,k}}} ,\sum\limits_{k = 1}^q {{\theta _k}{\bf{f}}\left( {{{\bf{q}}_{0,k}}} \right)} } \right)} \right\}\\
	&=&\left\{ {\left( {\sum\limits_{k = 1}^q {{\theta _k}{{\bf{q}}_{0,k}}} ,\sum\limits_{k = 1}^q {{\theta _k}{{\bf{q}}_{m,k}}} } \right)} \right\}\\
	&=&\left\{ {\sum\limits_{k = 1}^q {{\theta _k}\left( {{{\bf{q}}_{0,k}},{{\bf{q}}_{m,k}}} \right)} } \right\}.
	\end{array}\]
	Any trajectory for the order pair $\left({\bf q}_0,{\bf q}_m\right)$ in $\mathcal{P}$ is expressed as 
	\begin{equation}
	\begin{array}{*{20}{l}}
	{{\bf{h}}\left( {\left( {{{\bf{q}}_0},{{\bf{q}}_m}} \right),t} \right) = {\bf{h}}\left( {\sum\limits_{k = 1}^q {{\theta _k}\left( {{{\bf{q}}_{0,k}},{{\bf{q}}_{m,k}}} \right)} ,t} \right)}\\
	{ = \sum\limits_{k = 1}^q {{\theta _k}{\bf{h}}\left( {\left( {{{\bf{q}}_{0,k}},{{\bf{q}}_{m,k}}} \right),t} \right)} }
	\end{array}
	\end{equation}
	
	\textbf{Step 2}. We will show that any trajectory in the linear virtual tube with convex hull terminals is optimal, namely, this virtual tube is optimal.
	
	Through solving optimization problem in (\ref{equ:lemma_standrad}) where ${\bf{b}} = {{\bf{b}}_k}$, the $q$ optimal trajectories for order pairs $\{\left( {{{\bf{q}}_{0,k}},{{\bf{q}}_{m,k}}} \right)\}$ are generated, which are linearly parameterized as
	\begin{equation}
		{\bf h}^*\left( {\left( {{{\bf{q}}_{0,k}},{{\bf{q}}_{m,k}}} \right),t} \right) = {\bf{C}}\left( t \right){{\bf{x}}_k},k = 1,...,q,
		\label{equ:b-spline-matrix-representation}
	\end{equation}
	where ${\bf{C}}\left( t \right)$ is a matrix with $t$, ${\bf{x}}_k$ is the optimal solution. 
	
	Because the linear constraint for each order pair $\left({\bf q}_0, {\bf q}_m\right)$ is ${\bf{Ax}} = \sum\nolimits_{k = 1}^q {{\theta _k}{{\bf{b}}_k}} $, 
	according to \emph{Lemma \ref{lemma:standard_form}}, ${\bf x}=\sum\nolimits_{k = 1}^q {{\theta _k}{{\bf{x}}_k}} $ is the optimal solution. 	
	Therefore, any trajectory for order pair $\left( {\bf q}_0, {\bf q}_m \right) \in {\mathcal P}$ is optimal which is expressed as
	\[\begin{array}{*{20}{l}}
	{{{\bf{h}}^*}\left( {\left( {{{\bf{q}}_0},{{\bf{q}}_m}} \right),t} \right) = \sum\limits_{k = 1}^q {{\theta _k}{{\bf{h}}^*}\left( {\left( {{{\bf{q}}_{0,k}},{{\bf{q}}_{m,k}}} \right),t} \right)} }\\
	{ = \sum\limits_{k = 1}^q {{\theta _k}{\bf{C}}\left( t \right){{\bf{x}}_k}}  = {\bf{C}}\left( t \right)\sum\limits_{k = 1}^q {{\theta _k}{{\bf{x}}_k}}  = {\bf{C}}\left( t \right){\bf{x}}.}
	\end{array}\]
	Thus, this linear virtual tube with convex hull terminals $\left( {\mathcal{C}_0,\mathcal{C}_1,{\bf f},{{\bf h}^*}} \right)$ is optimal. $\hfill\blacksquare$
\end{proof}
An example of the optimal linear virtual tube with convex hull terminals is given as the following.

\begin{exmp}
	In two-dimension Euclidean space, suppose that there exists a linear virtual tube with convex hull terminals $\mathcal{T}\left(\mathcal{C}_0,\mathcal{C}_1,{\bf f},{\bf h}\right)$. The terminals $\mathcal{C}_0$, $\mathcal{C}_1$ are expressed as the convex hull of the the set $\left\{ {{{\bf{q}}_{0,1}},{{\bf{q}}_{0,2}}} \right\}$ and $\left\{ {{{\bf{q}}_{m,1}},{{\bf{q}}_{m,2}}} \right\}$, namely
	${\mathcal{C}_0}={\bf q}_0\left(\theta\right) = \left( {1 - \theta} \right){{\bf{q}}_{0,1}} + \theta{{\bf{q}}_{0,2}}$, ${\mathcal{C}_1}={\bf q}_m\left(\theta\right) = \left( {1 - \theta} \right){{\bf{q}}_{m,1}} + \theta{{\bf{q}}_{m,2}}$, $\theta \in \left[ {0,1} \right]$. The smooth map ${\bf f}$ is defined as ${{\bf{q}}_{m,k}} = {\bf{f}}\left( {{{\bf{q}}_{0,k}}} \right),k = 1,2$. Thus, a set of order pairs is expressed as ${\mathcal{P}} = \left\{ {\left( {{\bf q}_0\left( \theta \right),{{\bf q}_m}\left( \theta \right)} \right)} \right\}$. 
	Let ${{\bf{h}}_0^*}\left( t \right),{{\bf{h}}_1^*}\left( t \right)$ denote the optimal trajectories for order pairs $\left( {{{\bf{q}}_{0,1}},{{\bf{q}}_{m,1}}} \right)$ and $\left( {{{\bf{q}}_{0,2}},{{\bf{q}}_{m,2}}} \right)$ respectively which are generated by solving optimization problems satisfying \emph{Lemma} \ref{lemma:standard_form}. And the linear equality constraint for ${\left( {{{\bf q}_0}\left( \theta \right),{{\bf q}_m}\left(\theta\right)} \right)}$ satisfies condition (iii) in \emph{Theorem} \ref{the:convex_hull}. Thus, the optimal trajectory of any order pair ${\left( {{{\bf q}_0}\left( \theta \right),{{\bf q}_m}\left(\theta\right)} \right)}\in \mathcal{P}$ is expressed as ${\bf h}_\theta^*\left( t \right) = \left( {1 - \theta} \right){{\bf{h}}_0^*}\left( t \right) + \theta{{\bf{h}}_1^*}\left( t \right)$. Since every trajectory in the virtual tube is optimal, $\mathcal{T}\left(\mathcal{C}_0,\mathcal{C}_1,{\bf f},{\bf h}^*\right)$ is the optimal virtual tube, as shown in Fig. \ref{fig:optimal-tube}.
\end{exmp}
\begin{figure}
	\centering
	\includegraphics{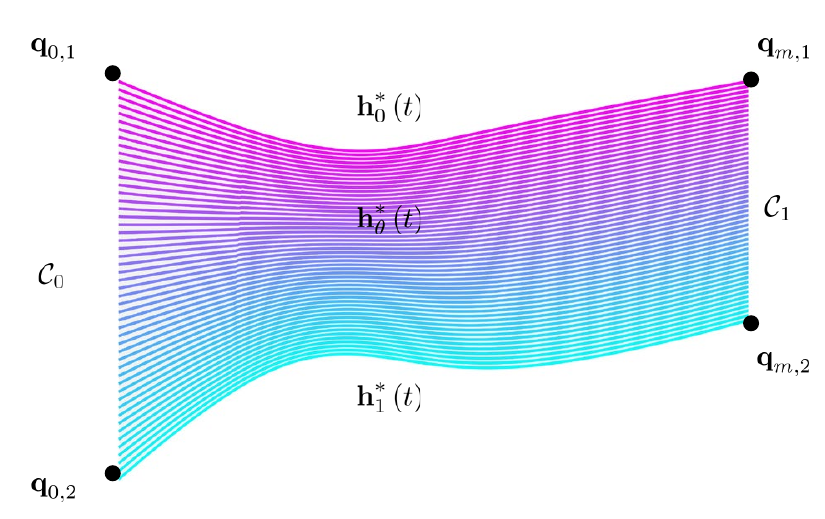}
	\caption{An optimal virtual tube example. Different colors represent different optimal trajectories. Curves ${\bf h}_0^*\left(t\right)$ and ${\bf h}_1^* \left( t \right)$ are optimal trajectories generated by solving optimization problems. And ${\bf h}_\theta^*\left(  t \right)$ are optimal trajectories generated by interpolation. }
	\label{fig:optimal-tube}
\end{figure}

\section{A Planning Method to Obtain an Optimal Virtual Tube}
Considering a robot swarm composed of $M$ agents labeled by $j \in \mathcal{M} = \left\{ {1,...,M} \right\}$, a start area which is the convex hull of set $\{ {\bf q}_{0,k}\}$ is labeled by $\mathcal{C}_0$, and a goal area which is the convex hull of set $\{ {\bf q}_{m,k}\}$ is labeled by $\mathcal{C}_1$. Suppose that the start position ${\bf q}_{0}\left({\bm \theta}_j\right) \in \mathcal{C}_0$ of robot $j$ is given. An obstacle-dense environment is between the start area and the goal area. The objective is to construct an optimal virtual tube and make robots pass through the obstacle-dense environment in the optimal virtual tube.
\subsection{Outline}
For an optimal virtual tube ${\cal T}\left( {{{\cal C}_0},{{\cal C}_1},{\bf{f}},{{\bf{h}}^*}} \right)$,
the set of order pairs $\mathcal{P}$ is constructed by ${\bf f}$ first, after inputting terminals $\mathcal{C}_0$ and $\mathcal{C}_1$. Then the paths for order pairs ${\left( {{{\bf{q}}_{0,k}},{{\bf{q}}_{m,k}}} \right)}$ are found. Next, the optimal trajectories ${{{\bf{h}}^*}\left( {\left( {{{\bf{q}}_{0,k}},{{\bf{q}}_{m,k}}} \right),t} \right)}$ for order pairs ${\left( {{{\bf{q}}_{0,k}},{{\bf{q}}_{m,k}}} \right)}$ are generated by solving optimization problems. Finally,
an optimal linear virtual tube with convex hull terminals ${\cal T}\left( {{{\cal C}_0},{{\cal C}_1},{\bf{f}},{{\bf{h}}^*}} \right)$ is constructed, with an infinite number of optimal trajectories. The whole process of optimal virtual tube planning is shown in Fig. \ref{fig:tubeplanningprocess}.
\begin{figure*}
	\centering
	\includegraphics{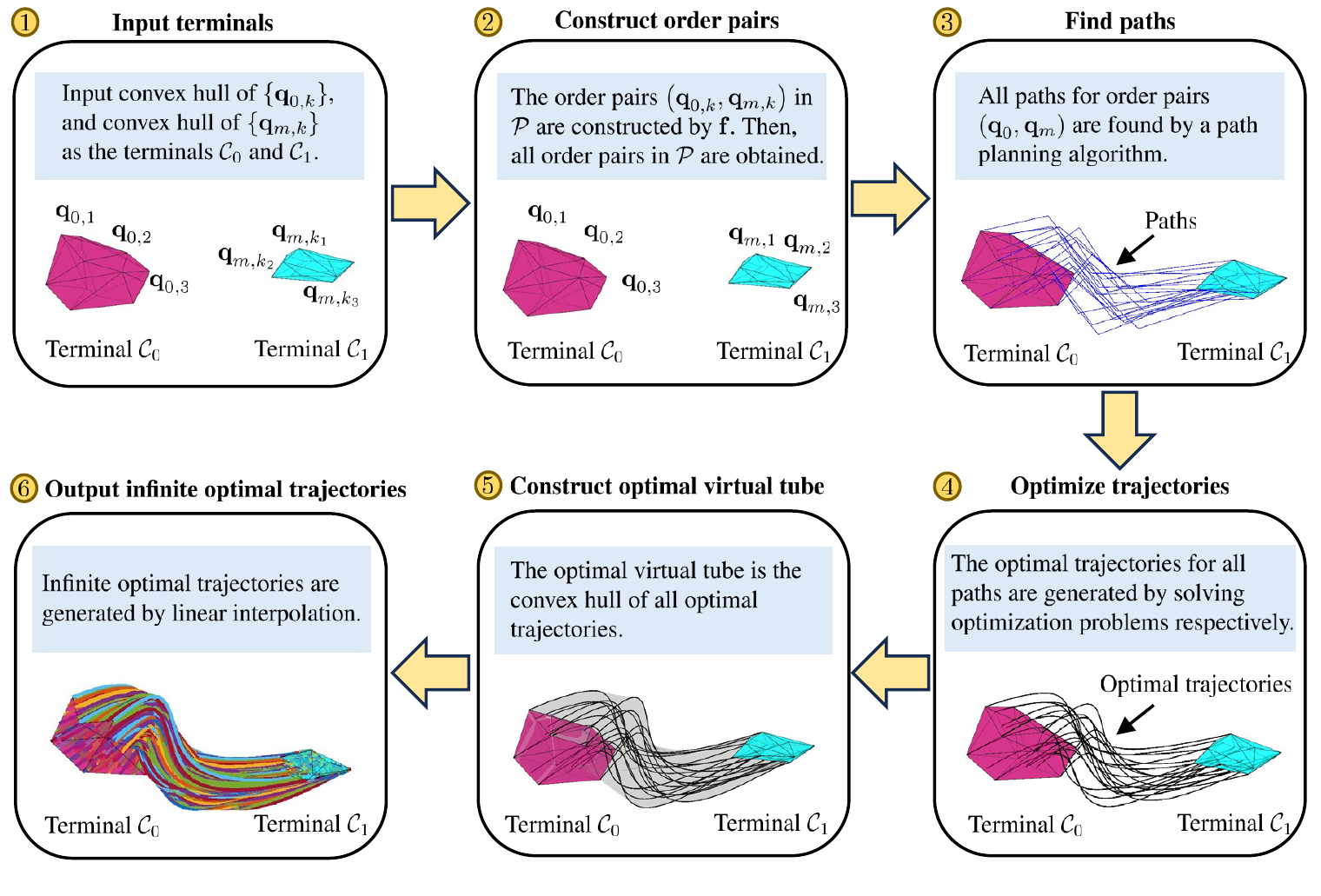}
	\caption{{Optimal virtual tube planning process.}}
	\label{fig:tubeplanningprocess}
\end{figure*}
\subsection{Constructing order pairs}
{
	Suppose that the start area and goal area called terminal $\mathcal{C}_0$ and $\mathcal{C}_1$, are the convex hull of $\left\{ {{{\bf{q}}_{0,k}}} \right\}$ and $\left\{ {{{\bf{q}}_{m,k}}} \right\}$ respectively with the same number of extreme points (\cite{adasch2006topological}), as shown in Step \textcircled{1} of Fig. \ref{fig:tubeplanningprocess}. 
	{Since the order pairs in $\mathcal{P}$ are constructed by a map ${\bf f}$,}
	a selection strategy of order pairs $\left({\bf q}_{0,k}, {\bf q}_{m,k}\right)$, namely a suitable map ${\bf f}$, is simply described in the following. Suppose the number of extreme points of terminals is $q$. First, select the extreme points (or so called vertices) of the terminal $\mathcal{C}_0$ as the set of $\{{\bf q}_{0,k}\}\left(k=1,2,...,q\right)$. Then, the order pairs $\left({\bf q}_{0,k}, {\bf q}_{m,k}\right)$ are constructed from a map ${\bf f}$, where ${\bf q}_{m,k} = {\bf f}\left({\bf q}_{0,k}\right)$ are also the vertices of terminal $\mathcal{C}_1$ and the sorting principle of ${\bf q}_{m,k}\left(k=1,2,...,q\right)$ is specified by {a classic assignment problem (AP) (\cite{pentico2007assignment}) in which the objective function minimizes both the total distance and the variance of distance between ${\bf q}_{0,k}$ and ${\bf q}_{m,k}$}, as shown in Step \textcircled{2} of Fig. \ref{fig:tubeplanningprocess}. Then, the set of order pairs $\mathcal{P}$ is constructed from the map ${\bf f}$. 
}

Any start position ${\bf q}_0\left({\bm \theta}_j\right)$ in terminal $\mathcal{C}_0$ is expressed as
\begin{equation}
{\bf q}_0\left({\bm \theta}_j\right) = \sum\limits_{k = 1}^q {{\theta _{kj}}{{\bf{q}}_{0,k}}} ,\sum\limits_{k = 1}^q {{\theta _{kj}}}  = 1,{\theta _{kj}} \ge 0,j \in \mathbb{Z}^{+},
\label{equ:start_position}
\end{equation} 
where $q$ is a finite number of elements in $\left\{ {{{\bf{q}}_{0,k}}} \right\}$. Compared with the infinite number of start positions in terminal $\mathcal{C}_0$, the number of elements in $\left\{ {{{\bf{q}}_{0,k}}} \right\}$ is small enough. Let ${\bf f}$ be the linear map between $\mathcal{C}_0$ and $\mathcal{C}_1$ so that for any goal position ${\bf q}_m\left({\bm \theta}_j\right) \in \mathcal{C}_1$,
\[\begin{array}{l}
{\bf q}_m\left({\bm \theta}_j\right) = {\bf f}\left( {\bf q}_0\left({\bm \theta}_j\right) \right) = {\bf f}\left( {\sum\limits_{k = 1}^q {{\theta _{kj}}{{\bf{q}}_{0,k}}} } \right) = \sum\limits_{k = 1}^q {{\theta _{kj}}{\bf f}\left( {{{\bf{q}}_{0,k}}} \right)}\\
  = \sum\limits_{k = 1}^q {{\theta _{kj}}{{\bf{q}}_{m,k}}}.
\end{array}\]  
The set of order pairs $\mathcal{P}$, thus, has been constructed, such as $\{\left( {{{\bf{q}}_{0}\left({\bm \theta}_j\right)},{{\bf{q}}_{m}\left({\bm \theta}_j\right)}} \right)\}.$
\subsection{Finding paths}
The paths for order pairs $\left( {{{\bf{q}}_{0,k}},{{\bf{q}}_{m,k}}} \right)\left(k=1,...,q\right)$ are needed to be found after constructing order pairs. And a normalized parameterization of the paths is designed.

There are many path-finding methods used for finding waypoints in the obstacle environment such as RRT, RRT* (\cite{karaman2011sampling}), and A*. For each order pair $\left( {{{\bf{q}}_{0,k}},{{\bf{q}}_{m,k}}} \right)\left(k=1,...,q\right)$, we select RRT* to find $m+1$ waypoints $\left\{ {{{\bf{q}}_{i,k}}} \right\}\left(i=0,...,m \right)$ between ${\bf q}_{0,k}$ and ${\bf q}_{m,k} $. 

To parameterize the waypoints $\left\{ {{{\bf{q}}_{i,k}}} \right\}$, parameters $ u_{i,k}$ are assigned to corresponding waypoint $ {{{\bf{q}}_{i,k}}}$. Thus, a partition of $u_k$ is first designed, called knots $\{ u_{i,k} \}$. Then public knots $\{u_i\}$ and public normalized knots $\{{t_{k}}\}$ are generated by domain transformation.
The chord length parameterization, a suitable method in most engineering applications (\cite{piegl2000surface}), is used to generate knots $\{ u_{i,k} \}$ expressed as
\begin{equation}
{u_{i,k}} = \left\{ {\begin{array}{*{20}{c}}
	{0,}&{i = 0,}\\
	{{u_{i - 1,k}} + \left\| {{{\bf{q}}_{i,k}} - {{\bf{q}}_{i - 1,k}}} \right\|,}&{i = 1,2,...,m.}
	\end{array}} \right.
\end{equation}
It should be noted that the knots $\{ u_{i,k} \}$ for each pair  $\left( {{{\bf{q}}_{0,k}},{{\bf{q}}_{m,k}}} \right)$ are generally not the same, because of variation in distances $ \left\| {{{\bf{q}}_{i,k}} - {{\bf{q}}_{i - 1,k}}} \right\|,k=1,...,q.$ Motivated by the parameterization of tensor product surface (\cite{STROTHOTTE2002203}), all parameterizations of paths are expected to be the same for convenience. The public knots $\left\{ {{u_i}} \right\}$ are the arithmetic mean of all knots, which are expressed as
\begin{equation}
{u_i} = \frac{{\sum\limits_{k = 1}^q {{u_{ki}}} }}{q}.
\end{equation}
To get the normalized parameterization for correspondence with \emph{Definition \ref{def:virtual-tube}}, the normalized knots $\left\{ {{t_i}} \right\}$ are obtained by 
\begin{equation}
{t_i} = \frac{{{u_i}}}{{{u_m}}},i = 0,1,...,m.
\end{equation}

Connecting adjacent waypoints with straight lines, the RRT* path planner finds $q$ collision-free paths for $q$ order pairs $\{{\left( {{{\bf{q}}_{0,k}},{{\bf{q}}_{m,k}}} \right)}\}$ respectively, as shown in Step \textcircled{3} of Fig. \ref{fig:tubeplanningprocess}.
\begin{remark}
	The numbers of the waypoints $\left\{ {{{\bf{q}}_{i,k}}} \right\}$ of different order pairs are assumed as the same. Otherwise, different numbers of the waypoints $\left\{ {{{\bf{q}}_{i,k}}} \right\}$ lead to different lengths of parameter vectors ${\bf x}$ in \emph{Theorem 1}, that would not satisfy \emph{Lemmas 1,2}. {Meanwhile, through shrinking the sample space around the first found path to eliminate any potential detours or alternate routes, all paths could be homotopic so that there is no obstacle between paths.}
\end{remark}
{
\begin{remark}
	The scale of the swarm would be limited in an extremely cluttered environment. The swarm is not allowed to split apart to move around obstacles because of the homotopic paths. Thus, in an extremely cluttered environment, all paths need to pass through a narrow gap whose size is much smaller than that of the swarm, which would cause blockage of the movement of the swarm. To overcome this weak point, large swarms could be divided into several subgroups satisfying the minimum size of gaps to plan virtual tubes.
\end{remark}
}
\subsection{Optimizing trajectories}
The original paths in the above subsection have some sharp corners, as shown in Step \textcircled{3} of Fig. \ref{fig:tubeplanningprocess}, which are not feasible considering the dynamics of the robots. It is necessary to improve the smoothness of the original paths, as shown in Step \textcircled{4} of Fig. \ref{fig:tubeplanningprocess}.
\subsubsection{Trajectory representation}
There are many ways to represent trajectories linearly, such as B-spline and polynomials. 
It is convenient to use $n$-th order piecewise polynomial to represent the trajectory ${\bf h}$, such as
\begin{equation}
{\bf{h }}\left( t \right) = \left\{ {\begin{array}{*{20}{c}}
	{\sum\limits_{i = 0}^n {{{\bf{a}}_{i,1}}{t^i}} }&{{t_0} < t < {t_1}}\\
	{\sum\limits_{i = 0}^n {{{\bf{a}}_{i,2}}{t^i}} }&{{t_1} < t < {t_2}}\\
	\vdots & \vdots \\
	{\sum\limits_{i = 0}^n {{{\bf{a}}_{i,m}}{t^i}} }&{{t_{m - 1}} < t < {t_m}}
	\end{array}} \right.
\label{poly_nomial}
\end{equation}
where ${{\bf{a}}_{i,j}} = {\left[ {\begin{array}{*{20}{c}}
		{{a_{1,i,j}}}&{{a_{2,i,j}}}& \ldots &{{a_{d,i,j}}}
		\end{array}} \right]^{\rm{T}}} \in \mathbb{R}^d, j=1,2,3,...,m$. For any $t\in \left[t_{k-1},t_k\right]$, the trajectory ${\bf h}\left(t\right)$ could be represented by matrix
\begin{equation}
{\bf{h}}_j\left( t \right) = {\bf{C}}\left( t \right){\bf{x}}_j
\label{equ:matrix_path}
\end{equation}
where ${\bf{C}}\left( t \right) = \left[ {\begin{array}{*{20}{c}}
	{{{\bf{I}}_d}}&{t{{\bf{I}}_d}}&{{t^2}{{\bf{I}}_d}}& \ldots &{{t^n}{{\bf{I}}_d}}
	\end{array}} \right]$,\[{\bf{x}}_j = \left[ {\begin{array}{*{20}{c}}
	{{{\bf{a}}_{0,j}}}\\
	{{{\bf{a}}_{1,j}}}\\
	{{{\bf{a}}_{2,j}}}\\
	\vdots \\
	{{{\bf{a}}_{n,j}}}
	\end{array}} \right].\] 
The representation of trajectory (\ref{equ:matrix_path}) is convenient to be used in quadratic programming (QP) in the following.
\subsubsection{Constraints}
The constraints, including the terminal conditions and intermediate conditions, are constructed as linear equations for smoothness. In terms of obstacle avoidance, adding corridor constraints could avoid collisions.

The terminal conditions are expressed as
\begin{equation}
\begin{array}{*{20}{l}}
{{{\bf{h}}_1}\left( {{t_0}} \right) = {\bf{C}}\left( {{t_0}} \right){{\bf{x}}_1} = {{\bf{q}}_{0,k}},}\\
{{\bf{h}}_m}\left( {{t_m}} \right) = {\bf{C}}\left( {{t_m}} \right){{\bf{x}}_m} = {{\bf{q}}_{m,k}},\\
{\left. {\frac{{{{\rm{d}}^p}{{\bf{h}}_1}\left( t \right)}}{{{\rm{d}}{t^p}}}} \right|_{t = {t_0}}} = {\bf{q}}_{0,k}^{\left(p\right)},\\
{\left. {\frac{{{{\rm{d}}^p}{{\bf{h}}_m}\left( t \right)}}{{{\rm{d}}{t^p}}}} \right|_{t = {t_m}}} = {\bf{q}}_{m,k}^{\left(p\right)},
\end{array}
\label{equ:cons_ter}
\end{equation}
where ${\bf q}^{\left(p\right)}_{0,k}$ and ${\bf q}^{\left(p\right)}_{m,k}$ are the $p$-order conditions in the start and goal points, terminals of the trajectory are constrained in start and goal points. And the intermediate conditions are expressed as
\begin{equation}
{\left. {\frac{{{{\rm{d}}^p}{{\bf{h}}_i}\left( t \right)}}{{{\rm{d}}{t^p}}}} \right|_{t = {t_i}}} = {\left. {\frac{{{{\rm{d}}^p}{{\bf{h}}_{i + 1}}\left( t \right)}}{{{\rm{d}}{t^p}}}} \right|_{t = {t_i}}},
\label{equ:cons_inter}
\end{equation}
where $p=0,1,2,...k_r,i=1,2,...,m-1$,
\[{\left. {\frac{{{{\rm{d}}^p}{{\bf{h}}_i}\left( t \right)}}{{{\rm{d}}{t^p}}}} \right|_{t = {t_i}}} = {\left. {\frac{{{{\rm{d}}^p}{\bf{C}}\left( t \right)}}{{{\rm{d}}{t^p}}}} \right|_{t = {t_i}}}{{\bf{x}}_i},\]
\[{\left. {\frac{{{{\rm{d}}^p}{{\bf{h}}_{i + 1}}\left( t \right)}}{{{\rm{d}}{t^p}}}} \right|_{t = {t_i}}} = {\left. {\frac{{{{\rm{d}}^p}{\bf{C}}\left( t \right)}}{{{\rm{d}}{t^p}}}} \right|_{t = {t_i}}}{{\bf{x}}_{i + 1}}.\]
Specifically, when $p=0$, the terminals of each segment are waypoints:
\begin{equation}
{{\bf{h}}_i}\left( {{t_i}} \right) = {{\bf{h}}_{i + 1}}\left( {{t_i}} \right) = {{\bf{q}}_{i,k}}.
\label{equ:con_ter_first}
\end{equation}
For convenience, we denote \[{{\bf{C}}_t^{\left( p \right)}} = \frac{{{{\rm{d}}^p}{\bf{C}}\left( t \right)}}{{{\rm{d}}{t^p}}}\]
which could transform (\ref{equ:cons_inter}) into
\begin{equation}
\left[ {\begin{array}{*{20}{c}}
	{{{\bf{C}}_{{t_i}}^{\left( p \right)}}}&{ - {{\bf{C}}_{{t_i}}^{\left( p \right)}} }
	\end{array}} \right]\left[ {\begin{array}{*{20}{c}}
	{{{\bf{x}}_i}}\\
	{{{\bf{x}}_{i + 1}}}
	\end{array}} \right] = {\bf{0}}.
\label{equ:cons_inter_mat}
\end{equation}
Based on (\ref{equ:cons_ter}), (\ref{equ:con_ter_first}), and (\ref{equ:cons_inter_mat}), the linear equality constraints are derived
\begin{equation}
{\bf{Ax}} = \left[ {\begin{array}{*{20}{c}}
	{{{\bf{A}}_1}}\\
	{{{\bf{A}}_2}}\\
	{{{\bf{A}}_3}}
	\end{array}} \right]{\bf{x}} = {\left[ {\begin{array}{*{20}{c}}
		{{{\bf{b}}_1}}\\
		{{{\bf{b}}_2}}\\
		{{{\bf{b}}_3}}
		\end{array}} \right]_k} = {{\bf{b}}_k},
\label{equ:equ_cons}
\end{equation}
where 
\[{\bf{x}} = \left[ {\begin{array}{*{20}{c}}
	{{{\bf{x}}_1}}\\
	\vdots \\
	{{{\bf{x}}_{m}}}
	\end{array}} \right],{{\bf{b}}_1} = \left[ {\begin{array}{*{20}{c}}
	{\bf{0}}\\
	\vdots \\
	{\bf{0}}
	\end{array}} \right],{{\bf{b}}_2} = \left[ {\begin{array}{*{20}{c}}
	{{{\bf{q}}_{0,k}}}\\
	{{{\bf{q}}_{1,k}}}\\
	\vdots \\
	{{{\bf{q}}_{m - 1,k}}}\\
	{{{\bf{q}}_{m,k}}}
	\end{array}} \right],\]
\[{{\bf{b}}_3} = {\left[ {\begin{array}{*{20}{c}}
		{{\bf{q}}_{0,k}^{\left( p \right)}}& \cdots &{{\bf{q}}_{0,k}^{\left( 1 \right)}}&{{\bf{q}}_{m,k}^{\left( p \right)}}& \cdots &{{\bf{q}}_{m,k}^{\left( 1 \right)}}
		\end{array}} \right]^{\rm{T}}},\]
\[{\bf{A}}_1 = \left[ {\begin{array}{*{20}{c}}
	{{\bf{C}}_{{t_1}}^{\left( p \right)}}&{ - {\bf{C}}_{{t_1}}^{\left( p \right)}}& \cdots &{\bf{0}}&{\bf{0}}\\
	\vdots & \vdots & \cdots & \vdots & \vdots \\
	{\bf{0}}&{\bf{0}}& \cdots &{{\bf{C}}_{{t_{m-1}}}^{\left( p \right)}}&{ - {\bf{C}}_{{t_{m-1 }}}^{\left( p \right)}}
	\end{array}} \right],\]
\[{{\bf{A}}_2} = \left[ {\begin{array}{*{20}{c}}
	{{{\bf{C}}_{{t_0}}}}&{\bf{0}}& \cdots &{\bf{0}}\\
	{\bf{0}}&{{{\bf{C}}_{{t_1}}}}& \cdots &{\bf{0}}\\
	\vdots & \vdots & \vdots & \vdots \\
	{\bf{0}}&{\bf{0}}& \cdots &{{{\bf{C}}_{{t_{m - 1}}}}}\\
	{\bf{0}}&{\bf{0}}& \cdots &{{{\bf{C}}_{{t_m}}}}
	\end{array}} \right],\]
\[{{\bf{A}}_3} = \left[ {\begin{array}{*{20}{c}}
	{{\bf{C}}_{{t_0}}^{\left( p \right)}}&{\bf{0}}& \cdots &{\bf{0}}&{\bf{0}}\\
	\vdots & \vdots & \ddots & \vdots & \vdots \\
	{{\bf{C}}_{{t_0}}^{\left( 1 \right)}}&{\bf{0}}& \cdots &{\bf{0}}&{\bf{0}}\\
	{\bf{0}}&{\bf{0}}& \cdots &{\bf{0}}&{{\bf{C}}_{{t_m}}^{\left( p \right)}}\\
	\vdots & \vdots & \ddots &{\bf{0}}& \vdots \\
	{\bf{0}}&{\bf{0}}& \cdots &{\bf{0}}&{{\bf{C}}_{{t_m}}^{\left( 1 \right)}}
	\end{array}} \right].\]

To avoid collision between trajectories and obstacles, corridor constraints (\cite{mellinger_minimum_2011}) are added by constraining $n_j$ intermediate points in the segment from ${\bf q}_{i,k}$ to ${\bf q}_{i+1,k}$ $\left(i=0,1,2,...,m-1\right)$. For an intermediate point of trajectory ${\bf{h}}\left( {{s_j}} \right)$ in the segment from ${\bf q}_{i,k}$ to ${\bf q}_{i+1,k}$, the perpendicular distance vector between ${\bf{h}}\left( {{s_j}} \right)$ and the segment is expressed as
\[{{\bf{d}}_i}\left( {{s_j}} \right) = \left( {{\bf{h}}\left( {{s_j}} \right) - {{\bf{q}}_{i,k}}} \right) - \left( {\left( {{\bf{h}}\left( {{s_j}} \right) - {{\bf{q}}_{i,k}}} \right)^{\rm T} {{\bf{t}}_i}} \right){{\bf{t}}_i},\]
where ${\bf t}_i$ is the unit vector along segment from ${\bf q}_{i,k}$ to ${\bf q}_{i+1,k}$.
Thus, the convex constraints are expressed as:
\begin{equation}
{f_i}\left( {\bf{x}} \right) = {\left\| {{{\bf{d}}_i}\left( {{s_j}} \right)} \right\|_\infty } - {\delta _i} \le 0,
\label{equ:con_equ}
\end{equation}
where ${s_j} = {t_i} + \frac{j}{{1 + {n_c}}}\left( {{t_{i + 1}} - {t_i}} \right)$, $j = 1,...,{n_c}$, ${\delta _i}$ is the corridor width, ${{{\bf{d}}_i}\left( {{s_j}} \right)}$ is the perpendicular distance vector.

\subsubsection{Cost function}
The optimization problem to minimize energy cost could be formulated as a QP by regarding ${\bf x}$ in (\ref{equ:equ_cons}) as variable vectors (\cite{mellinger_minimum_2011}). The cost function is expressed as
\begin{equation}
\begin{array}{rcl}
{E_s} &  = &\int\limits_{{t_0}}^{{t_m}} {{{\left\| {\frac{{{{\rm{d}}^{k_r}}{\bf{h}}\left( t \right)}}{{{\rm{d}}{t^{k_r}}}}} \right\|}^2}{\rm{d}}t} \\
&  = &\sum\limits_{i = 1}^m {{\bf{x}}_i^{\rm{T}}\int\limits_{{t_{i - 1}}}^{{t_i}} {{{{\bf{C}}_t^{(k_r)}}^{\rm{T}}}{\bf{C}}_t^{(k_r)}{\rm{d}}t} {{\bf{x}}_i}} \\
&  = &{{\bf{x}}^{\rm{T}}}{\bf{Hx}}.
\end{array}
\label{equ:cost}
\end{equation}

\subsubsection{Optimization problem}
The goal of the optimization problem is to minimize the energy cost meanwhile satisfying constraints. Thus, combining (\ref{equ:equ_cons}), (\ref{equ:con_equ}), and (\ref{equ:cost}), this problem could be expressed as 
\begin{equation}
\begin{array}{ll}
\mathop {\min }\limits_{\bf{x}} &{{{\bf{x}}^{\rm{T}}}{\bf{Hx}}}\\
{\rm{s}}{\rm{.t}}{\rm{. }}&{\bf{Ax}} = {\bf{b}}_k\\
&{f_i}\left( {\bf{x}} \right) \le 0,i = 1,...,{n_c},
\end{array}
\end{equation} 
which is in the form concerned with \emph{Lemma 2}. Therefore, the optimal trajectories ${\bf h}^*\left(\left( {{{\bf{q}}_{0,k}},{{\bf{q}}_{m,k}}} \right),t\right)$ are derived by solving $q$ optimization problems for $q$ order pairs $\{{\left( {{{\bf{q}}_{0,k}},{{\bf{q}}_{m,k}}} \right)}\}$.

\subsection{Constructing optimal virtual tube}
The waypoints for any order pair $\left( {{{\bf{q}}_{0}\left({\bm \theta}_j\right)},{{\bf{q}}_{m}\left({\bm \theta}_j\right)}} \right)$ could be expressed as
\[{{\bf{q}}_i}\left( {\bm{\theta }_j} \right) = \sum\limits_{k = 1}^q {{\theta _{kj}}{{\bf{q}}_{i,k}}} ,i = 0,1,2,...,m.\] And the normalized knots $\left\{ {{t_i}} \right\}$ are used to parameterize the waypoints. Thus, the linear equality constraints are expressed as 
\begin{equation}
\left[ {\begin{array}{*{20}{c}}
	{{{\bf{A}}_1}}\\
	{{{\bf{A}}_2}}\\
	{{{\bf{A}}_3}}
	\end{array}} \right]{\bf{x}} = \sum\limits_{k = 1}^q {{\theta _{kj}}{{\left[ {\begin{array}{*{20}{c}}
				{{{\bf{b}}_1}}\\
				{{{\bf{b}}_2}}\\
				{{{\bf{b}}_3}}
				\end{array}} \right]}_k}.}
\end{equation}

According to \emph{Theorem \ref{the:convex_hull}}, the linear virtual tube with convex hull terminals $\left( {{\cal C}_0,{\cal C}_1,{\bf f},{{\bf h}^*}} \right)$ is optimal, as shown in Step \textcircled{5} of Fig. \ref{fig:tubeplanningprocess}. In other words, for any start point ${{\bf{q}}_0}\left( {{{\bm{\theta }}_j}} \right)$, a goal point ${{\bf{q}}_{m}\left({\bm \theta}_j\right)}$ is assigned, and then the optimal trajectory ${{\bf h}^*}\left( {\left( {{{\bf{q}}_{0}\left({\bm \theta}_j\right)},{{\bf{q}}_{m}\left({\bm \theta}_j\right)}} \right),t} \right)$ is expressed as
\begin{equation}
{{\bf h}^*}\left( {\left( {{{\bf{q}}_{0}\left({\bm \theta}_j\right)},{{\bf{q}}_{m}\left({\bm \theta}_j\right)}} \right),t} \right) = \sum\limits_{k = 1}^q {{\theta _{kj}}{{\bf h}^*}\left( {\left( {{{\bf{q}}_{0,k}},{{\bf{q}}_{m,k}}} \right),t} \right)} ,
\label{equ:construct_optimal_virtual_tube}
\end{equation}
where $\sum\nolimits_{k = 1}^q {{\theta _{kj}}}  = 1,{\theta _{kj}} \ge 0.$

{
	For all robots at each $t'$, the desired positions are $\{{{\bf h}^*}\left( {\left( {{{\bf{q}}_{0}\left({\bm \theta}_j\right)},{{\bf{q}}_{m}\left({\bm \theta}_j\right)}} \right),t'} \right)\}\left(j\in \mathcal{M}\right)$ which are within the convex hull of $\left\{ {{{\bf{h}}^*}\left( {\left( {{{\bf{q}}_0}_{,k},{{\bf{q}}_{m,k}}} \right),t'} \right)} \right\}\left( {k = 1,2,...,q} \right)$. In other words, the swarm would approach a convex hull at each $t'$.  
}

{
	\begin{remark}
		The optimal virtual tube has similar effects to the formation constraint. The swarm approaches a desired convex hull or called graph $\mathcal{C}_t$ at each time $t$. However, the distance constraint of adjacent robots is not considered in tube planning as it is in the process of controller design, which leaves more flexibility for the formation of the swarm, compared with the formation constraint.
	\end{remark}
}
{
\subsection{Complexity analysis}
In this subsection, we analyze the time complexity of the proposed method. Suppose the $q$ optimal trajectories ${\bf h}^*\left(\left( {{{\bf{q}}_{0,k}},{{\bf{q}}_{m,k}}} \right),t\right)$ have been generated before. And according to (\ref{poly_nomial}), the number of parameters in trajectory ${\bf h}^*\left(\left( {{{\bf{q}}_{0,k}},{{\bf{q}}_{m,k}}} \right),t\right)$ is $n_t = \left(n+1\right)md$ where $n$ is the orders of trajectory, $m$ is the number of the segments and $d$ is the number of dimension. 
The operation of the convex combination in (\ref{equ:construct_optimal_virtual_tube}) only has some basic operations including addition and multiplication on the $n_t$-dimension vectors. Therefore, time complexity of our method is $O\left(n_t\right)$.
}

{
As for solving optimization problem to generate the optimal trajectory, the time complexity is both related to the number of iterations and the time complexity of each iteration\footnote{There is an another popular approach to analyze the complexity of optimization. The black-box model is used for the oracle complexity, which regards queries to the object function as oracles and considers the number of queries (\cite{bubeck2015convex}). However, it is not suitable for comparison with the convex combination operation.}. Furthermore, the number of iterations is related to the precision $\varepsilon$ and the dimension $n_t$, which could be expressed as $k\left(n_t,\varepsilon\right)$. Therefore, the time complexity of solving optimization problem is normally from $O\left(k\left(n_t,\varepsilon\right)n_t^2\right)$ to $O\left(k\left(n_t,\varepsilon\right)n_t^3\right)$ (\cite{Wright1997Primal}). For example, the best-known interior-point method algorithm require $O\left(k\left(n_t,\varepsilon\right)\right) = O\left(\sqrt {{n_t}} \log \left( {{1 \mathord{\left/
			{\vphantom {1 \varepsilon }} \right.
			\kern-\nulldelimiterspace} \varepsilon }} \right)\right) $ iterations and the complexity of each iteration is roughly $O\left(n_t^3\right)$. Thus, the complexity of the interior-point method is $O\left( {n_t^{3.5}\log \left( {{1 \mathord{\left/
				{\vphantom {1 \varepsilon }} \right.
				\kern-\nulldelimiterspace} \varepsilon }} \right)} \right)$.
Compared with solving optimization problem, our method effectively reduces the computations by removing the iteration and matrix operations.
}
\section{Model Predictive Control for Tracking Trajectories in an Optimal Virtual Tube}
{The optimal virtual tube is applicable to a wide range of control methods.
Specifically, the optimal virtual tube can provide the desired trajectories, which makes the swarm exhibit an effect similar to the formation control by using the optimal control methods such as Model Predictive Control (MPC). Or it can also provide desired velocity command to a swarm based on the control method such as Artificial Potential Field (APF) and Control Barrier Function (CBF) for smoother control. 
}

{To easily understand and verify the characteristics of the optimal virtual tube, an application in control for the optimal virtual tube is provided.}
A hierarchical approach is employed to enable swarm robotics to move within the optimal virtual tube, comprising two layers: \emph{optimal virtual tube planning} and \emph{trajectory tracking} for a robot swarm. First, the optimal virtual tube planning is proposed in the above section, which generates optimal trajectories for the swarm. Subsequently, each robot tracks its own trajectory while avoiding collisions with other robots. This section first introduces a robot model and then designs an MPC controller to realize trajectory tracking and conflict avoidance.

\subsection{Robot model}
The dynamic of a holonomic robot is described as a mass point model:
\begin{equation}
\begin{array}{l}
{\bf{\dot p}}_i = {\bf{v}}_i,\\
{\bf{\dot v}}_i = {{\bf{u}}_i},
\end{array}
\label{equ:drone-model}
\end{equation}
where ${\bf p}_i \in  \mathbb{R}^{2}\left(\mathbb{R}^3\right)$ and ${\bf v}_i \in \mathbb{R}^{2}\left(\mathbb{R}^3\right)$ are the position and velocity of the $i$th robot, ${\bf u}_i \in \mathbb{R}^{2}\left(\mathbb{R}^3\right)$ is the acceleration input to the $i$th robot, $i=1,2,...,M$. 

This system (\ref{equ:drone-model}) could be expressed as 
\begin{equation}
{\bf{\dot x}} = {\bf{Ax}} + {\bf{Bu}}
\end{equation}
where ${\bf{x}} = {\left[ {\begin{array}{*{20}{c}}
		{{{\bf{p}}_i}}&{{{\bf{v}}_i}}
\end{array}} \right]^{\rm{T}}}$, ${\bf{u}} = {{\bf{u}}_i}$. And the discretized form is
\begin{equation}
{{\bf{x}}_{k + 1}} = {{\bf{A}}_k}{{\bf{x}}_k} + {{\bf{B}}_k}{{\bf{u}}_k}
\end{equation}
where ${\bf{x}}_k$ and ${\bf{u}}_k$ are state and input at the time $k$, \[{{\bf{A}}_k} = \left[ {\begin{array}{*{20}{c}}
	{\bf{I}}&{{T_s}{\bf{I}}}\\
	{\bf{0}}&{\bf{0}}
	\end{array}} \right],{{\bf{B}}_k} = \left[ {\begin{array}{*{20}{c}}
	{\bf{0}}\\
	{{T_s}{\bf{I}}}
	\end{array}} \right].\]
\subsection{Dynamic obstacles}
As depicted in Fig. \ref{fig:obstacledronecons}(a), for a robot $i$ in the swarm, other robots $j$ is regarded as the dynamic obstacles which are represented by an ellipse of area ${\mathcal{O}_{j,k}}$ at time $k$. The ellipse of area ${\mathcal{O}_{j,k}}$ at time $k$ is denoted by
\begin{equation}
{{\cal O}_{j,k}} = \left\{ {{\bf{p}}|\left\| {{\bf{E}}\left( {{\bf{p}} - {{\bf{p}}_{j,k}}} \right)} \right\| \le 1} \right\},k = 0,1,...,N,
\end{equation}
where ${\bf p}_{j,k}$ is the position of the center mass of robot $j$ at time $k$, ${\bf E}={\rm diag}\left(a_j,b_j,c_j\right)$ is an invertible scaling matrix to bound the boundary of the ellipse. The details of designing ${\bf E}$ can be found in \cite{Quan2021far}. If the robot $i$ is regarded as a mass point, the obstacle space ${{\cal O}_{k}}$ of robot $i$ at time $k$ is expressed as
\begin{equation}
{{\cal O}_k} = \bigcup\limits_{j \in {\cal M},j \ne i} {{{\cal O}_{ij,k}},} 
\end{equation}
where ${{\cal O}_{ij,k}} = \left\{ {{\bf{p}}|{\bf{p}} \in {{\cal O}_{i,k}} \oplus {{\cal O}_{j,k}}} \right\}$, $\oplus$ is the Minkowski sum. However, when the robot $i$ does not collide with other robots, the avoidance constraints 
\begin{equation}
{{\bf{p}}_{i,k}} \cap {{\cal O}_{k}} = \emptyset
\label{equ:non-cov-avoid-cons}
\end{equation}
are not convex constraints. Thus, half spaces ${{\cal H}_{ij,k}}$, as shown in Fig. \ref{fig:obstacledronecons}(b), are used to construct affine inequality constraints at each time $k$. The constraints (\ref{equ:non-cov-avoid-cons}) are transformed into
\begin{equation}
{\cal O} = \left\{ {{{\bf{x}}_k}|{{\bf{H}}_k}{{\bf{x}}_k} \le {h_k} + {s_k},k = 0,1,...,N} \right\},
\end{equation}
where ${\bf x}_k$ are states of robot $i$, ${\bf{H}}_k$, ${h_k}$ are parameters of half space constraints, $s_k$ is a positive relaxation variable to avoid infeasibility problem in practice.
\begin{figure}
	\centering
	\includegraphics{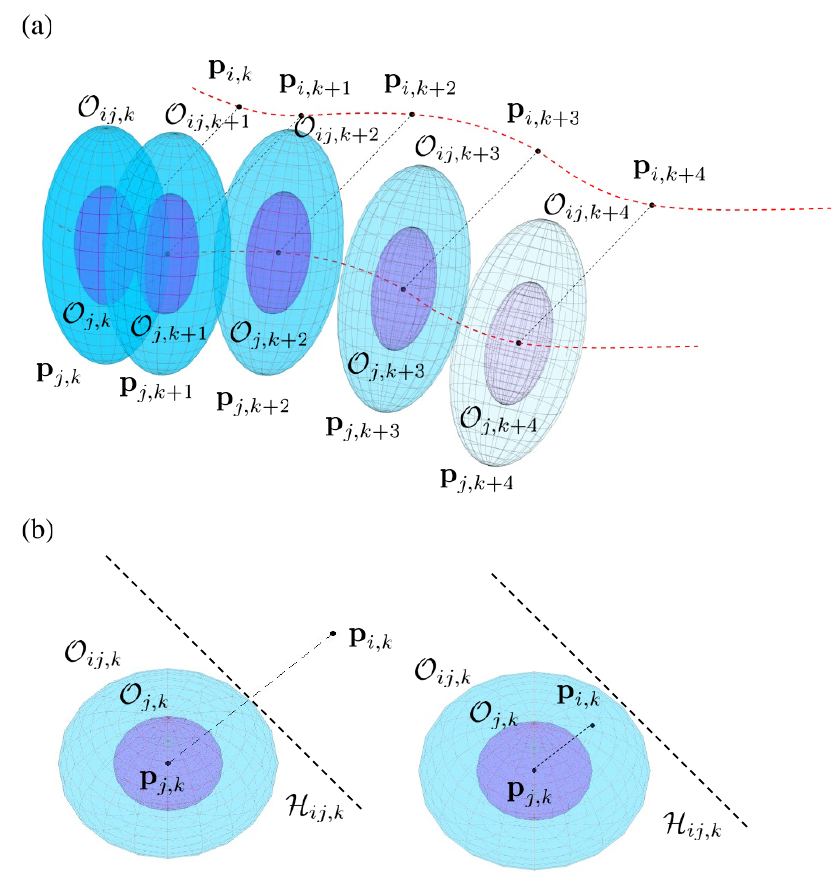}
%
	\caption{Constructing constraints for avoiding collisions. The purple ellipses ${{\cal O}_{j,k+n}}\left(n=0,1,...,4\right)$ are boundaries of robot $j$  at time $k+n$. The blue ellipses ${{\cal O}_{ij,k+n}}\left(n=0,1,...,4\right)$ are obstacle space for robot $i$ at time $k+n$. (a) The obstacle constraints for robot $i$ at time $k+n$. (b) Two situations: collision and avoidance. The dotted lines are tangent planes ${{\cal H}_{ij,k}}$ of points which are intersections of surfaces of blue ellipses and connections of points ${{\bf{p}}_{j,k}},{{\bf{p}}_{i,k}}$. }
	\label{fig:obstacledronecons}
\end{figure}

\subsection{Optimization problem}
The distance to desired state ${{\bf{x}}_{d,k}}={\left[ {\begin{array}{*{20}{c}}
		{{{\bf{p}}_{d,k}}}&{{{\bf{v}}_{d,k}}}
		\end{array}} \right]^{\rm{T}}}$ and desired input ${\bf u}_{d,k}$ at time $k$ generated by desired trajectory ${{\bf{h}}^*}\left( {\left( {{{\bf{q}}_0},{{\bf{q}}_m}} \right),t\left( s \right)} \right)$\footnote{In practice, the normalized knots $\{t_i\}$ are not suitable for a robot to track trajectory. To address this issue, the $t\left( s \right) = \frac{{{v_R}}}{{{u_m}}}s$ is used in the trajectory ${{\bf{h}}^*}\left( {\left( {{{\bf{q}}_0},{{\bf{q}}_m}} \right),t\left( s \right)} \right)$, where $v_R$ is a coefficient related with robots, $u_m\in \{u_i\}$.} is minimized so that the robot could track the desired trajectory.  Thus, the model predictive control problem containing robot model and other constraints in $N+1$ steps is formulated as
\begin{equation}
\begin{array}{ll}
\mathop {\min }\limits_{{{\bf{x}}_k},{{\bf{u}}_k}} & {\bf{\tilde x}}_N^{\rm{T}}{{\bf{Q}}_{x,N}}{{{\bf{\tilde x}}}_N} + \sum\limits_{k = 0}^N {{q_{s,k}}s_k^2}  + \sum\limits_{k = 0}^{N - 1} {{\bf{\tilde x}}_k^{\rm{T}}{{\bf{Q}}_{x,k}}{{{\bf{\tilde x}}}_k}} + \\ &{\bf{\tilde u}}_k^{\rm{T}}{{\bf{Q}}_{u,k}}{{{\bf{\tilde u}}}_k} \\
{\rm{s}}{\rm{.t}}{\rm{.  }}&\\
&{{\bf{x}}_{k + 1}} = {{\bf{A}}_k}{{\bf{x}}_k} + {{\bf{B}}_k}{{\bf{u}}_k}\\
&{{\bf{x}}_k} \in {\cal X}_{\rm t} \cap {\cal O},\\
&{{\bf{u}}_k} \in {\cal U},\\
&k = 0,1,...,N,
\end{array}
\end{equation}
where ${{{\bf{\tilde x}}}_k} = {{\bf{x}}_{d,k}} - {{\bf{x}}_k}$, ${{\bf{\tilde u}}_k} = {{\bf{u}}_{d.k}} - {{\bf{u}}_k}$, ${{\bf{Q}}_{x,k}}$ and ${{\bf{Q}}_{u,k}}$ are positive definite coefficient matrixs, ${\cal X}_{\rm t}$ is the feasible set of ${\bf x}_k$, ${\cal U}$ is the feasible set of ${\bf u}_k$. 
{The feasible set of ${\bf v}_k$ in ${\mathcal{X}_{\rm t}}$ satisfies the dynamic constraints of robots. And the feasible set of ${\bf p}_k$ in ${\mathcal{X}_{\rm t}}$ is the interior of the virtual tube ${\bf{int}}\left( {\cal T} \right)$ for the robot within the virtual tube, and $\left\{ {{{\bf{p}}_k}|\left\| {{{\bf{p}}_k} - {{\bf{p}}_{d,k}}} \right\| \le \varepsilon_c  \in {\mathbb{R}^ + }} \right\}$ for the robot on the boundary of the virtual tube $\partial {\cal T}$ where $\varepsilon_c$ is a feasible constant.} 
\section{Simulations and Experiments}
This section presents simulations and experiments of the drone swarm in simple scenarios to demonstrate the effectiveness of the proposed optimal virtual tube planning method and the MPC controller. 
{Comparisons with existing methods including multi-robots formation control (\cite{alonso2017multi}) and predictive control of aerial swarm (\cite{soria2021predictive}) are made in complex scenarios. }
The optimal virtual tube planning and trajectory tracking in simulations are implemented using MATLAB code, and executed on a PC with Intel Core i7-7700 @ 2.8GHz CPU and 16G RAM.
The safety distance between drones is $1$m. A video of simulations and experiments is available on \url{https://youtu.be/9pT5SiCsZis}.
\subsection{Simulations in simple scenarios}  
The results of the optimal virtual tube planning and trajectory tracking are obtained. To demonstrate the effectiveness of the proposed method, comparisons between the optimal virtual tube planning and a traditional planning method are made in 2-D and 3-D space. The simulation results demonstrate that the proposed method with a computational complexity $O\left(n_t\right)$ produces the same results as the traditional planning method with a computational complexity from $O\left(k\left(n_t,\varepsilon\right)n_t^2\right)$ to $O\left(k\left(n_t,\varepsilon\right)n_t^3\right)$ while requiring less computational effort.
\subsubsection{A 2-D optimal virtual tube}
The optimal virtual tube planning is first validated in a 2-D map. Two methods were set with identical terminals ${\mathcal{C}_0}$ and $\mathcal{C}_1$, as illustrated in Fig. \ref{fig:compare_2d} with dotted lines. Additionally, 11 start positions and 11 goal positions are also the same. For the traditional planning method, 11 separate optimization problems are required to be solved, as depicted in Fig. \ref{fig:compare_2d}(a). In contrast, for the proposed method, only two trajectories need to be planned: black curves in Fig. \ref{fig:compare_2d}(b). The remaining 9 trajectories, depicted by red curves in Fig. \ref{fig:compare_2d}(b), are generated by interpolation of the two planned trajectories. Similarly, 101 trajectories are generated in the same way, requiring the resolution of only two optimization problems for two trajectories. Comparing the traditional method with the proposed method, the errors were found to be negligible, with magnitudes less than $1.8\times10^{-14}$m, as shown in Fig. \ref{fig:tubeplan2derr}. Thus, this outcome validates \emph{Theorem \ref{the:convex_hull}}.
\begin{figure*}
	\centering
	\includegraphics{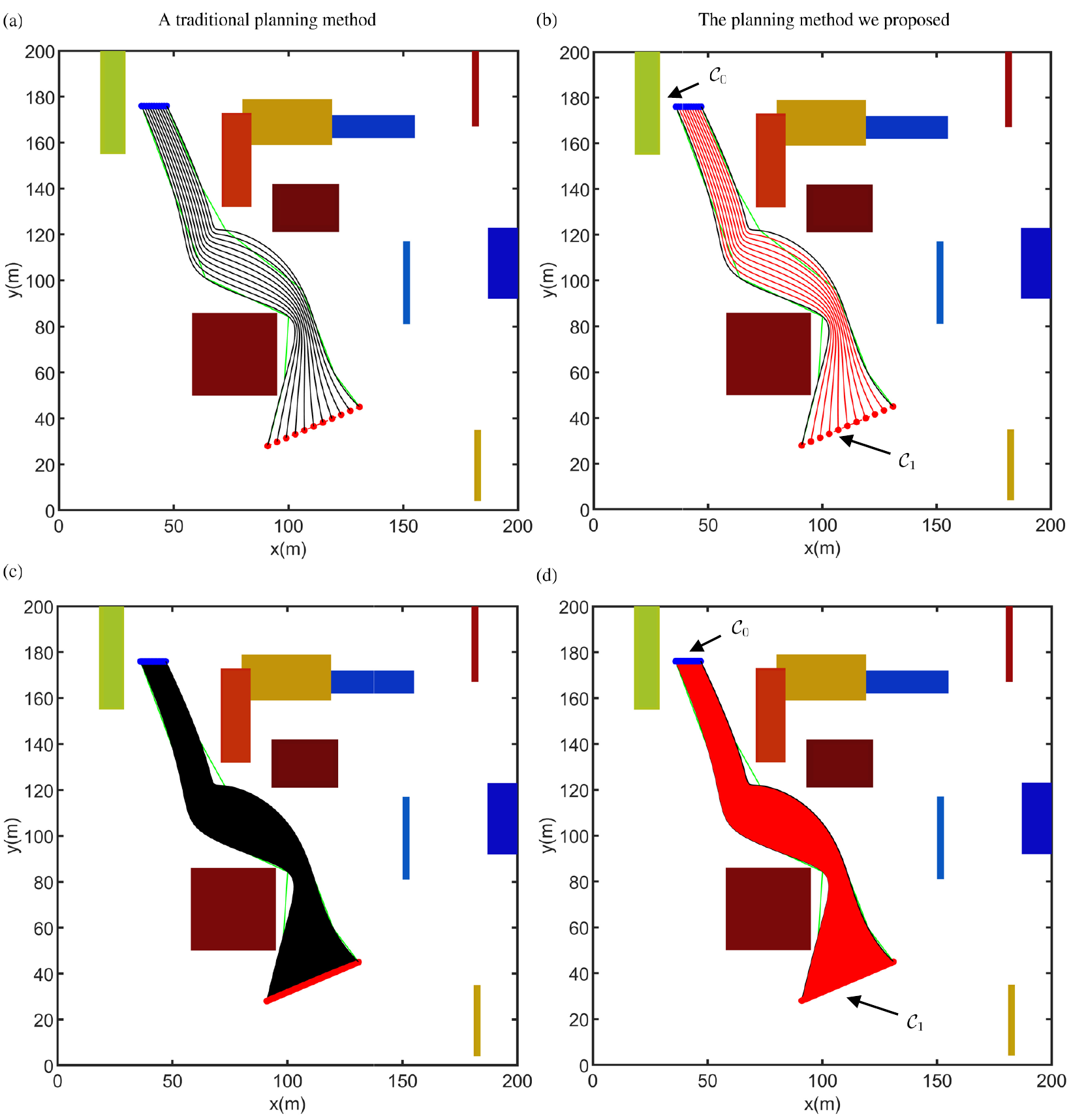}
%
	\caption{Comparison between the traditional method and the optimal virtual tube planning. (a) Trajectories planned by 11 different optimization problems with the traditional method. (b) Trajectories are planned only by two optimization problems with optimal virtual tube planning. (c) Trajectories planned by 101 different optimization problems with the traditional method. (d) Trajectories are planned only by two optimization problems with optimal virtual tube planning. The colorful rectangles are obstacles. The blue points and the red points are start points and goal points. The optimal trajectories planned by optimization problems are depicted in black, while the optimal trajectories obtained by interpolation are shown in red. The magenta dotted lines denote the terminals.}
	\label{fig:compare_2d}
\end{figure*}
\begin{figure*}
	\centering
	\includegraphics{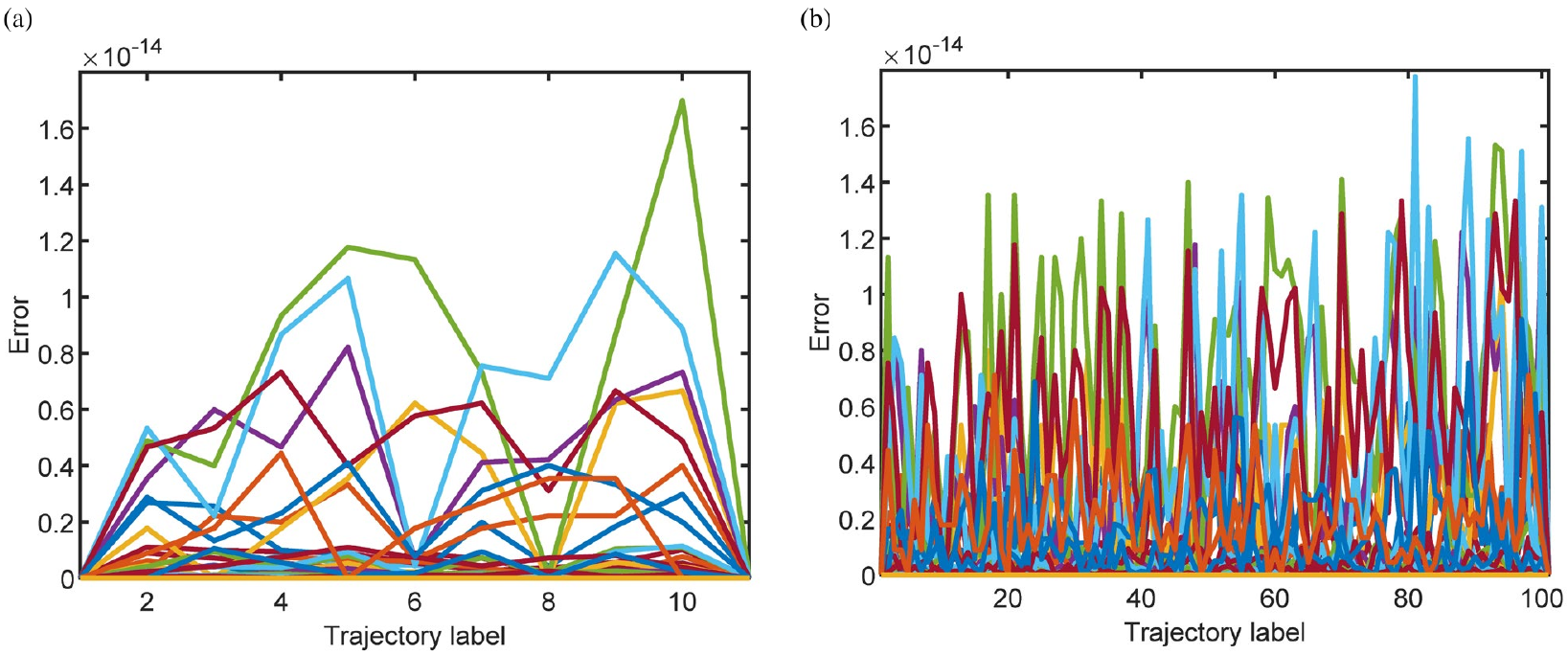}
	\caption{Errors between variable vectors by interpolation and optimization program are illustrated using colorful curves representing the error in each parameter of the variable vectors. (a) The errors of 11 trajectories. (b) The errors of 101 trajectories.}
	\label{fig:tubeplan2derr}
\end{figure*}
To demonstrate that the error is independent of the number of interpolations, the variable vector is analyzed and the distribution of one parameter in the variable vector is shown with respect to the number of trajectories obtained by interpolation in Fig. \ref{fig:error2d}. The results show that the error for this parameter remains stable as the number of interpolations increases from 10 to 1000.
\begin{figure}
	\centering
	\includegraphics{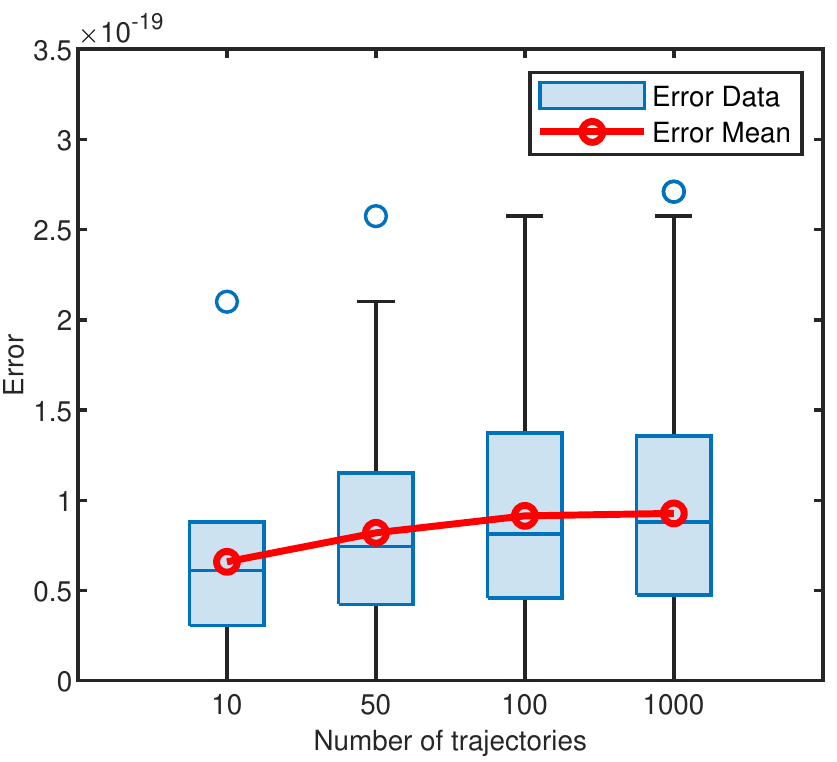}
	\caption{Error distribution of a parameter in variable vector.}
	\label{fig:error2d}
\end{figure}

In Fig. \ref{fig:compare_2d}(b), the terminal $\mathcal{C}_0$ is represented by a line with a length of $10m$. To validate the controller, 11 drones are tightly placed in the terminal $\mathcal{C}_0$, as shown in Fig. \ref{fig:follow2d}. Then, 11 optimal trajectories are assigned to each drone. In the result of the simulation depicted in Fig. \ref{fig:follow2d}, the drone swarm safely and smoothly arrives at terminal $\mathcal{C}_1$. When the drones are far apart, they follow their own trajectories. However, when the swarm passes through a narrow cross-section of the virtual tube in Fig. \ref{fig:follow2d}, some drones need to avoid nearby drones, which may cause them to deviate from their trajectories ($t = 12.96$s) but return to planned trajectories after passing through the narrow space ($t = 14.80$s). The minimum distance among drones in the swarm with respect to the time shown in Fig. \ref{fig:follow2dmindis} is always greater than the safety distance $1m$, which indicates that there is no collision among drones. To demonstrate the accuracy of drone tracking, errors between the desired and true trajectory of all drones are depicted in Fig. \ref{fig:follow2derr}. Therefore, the simulation demonstrates that every drone can track the trajectory with high accuracy while avoiding other drones.
\begin{figure*}
	\centering
	\includegraphics{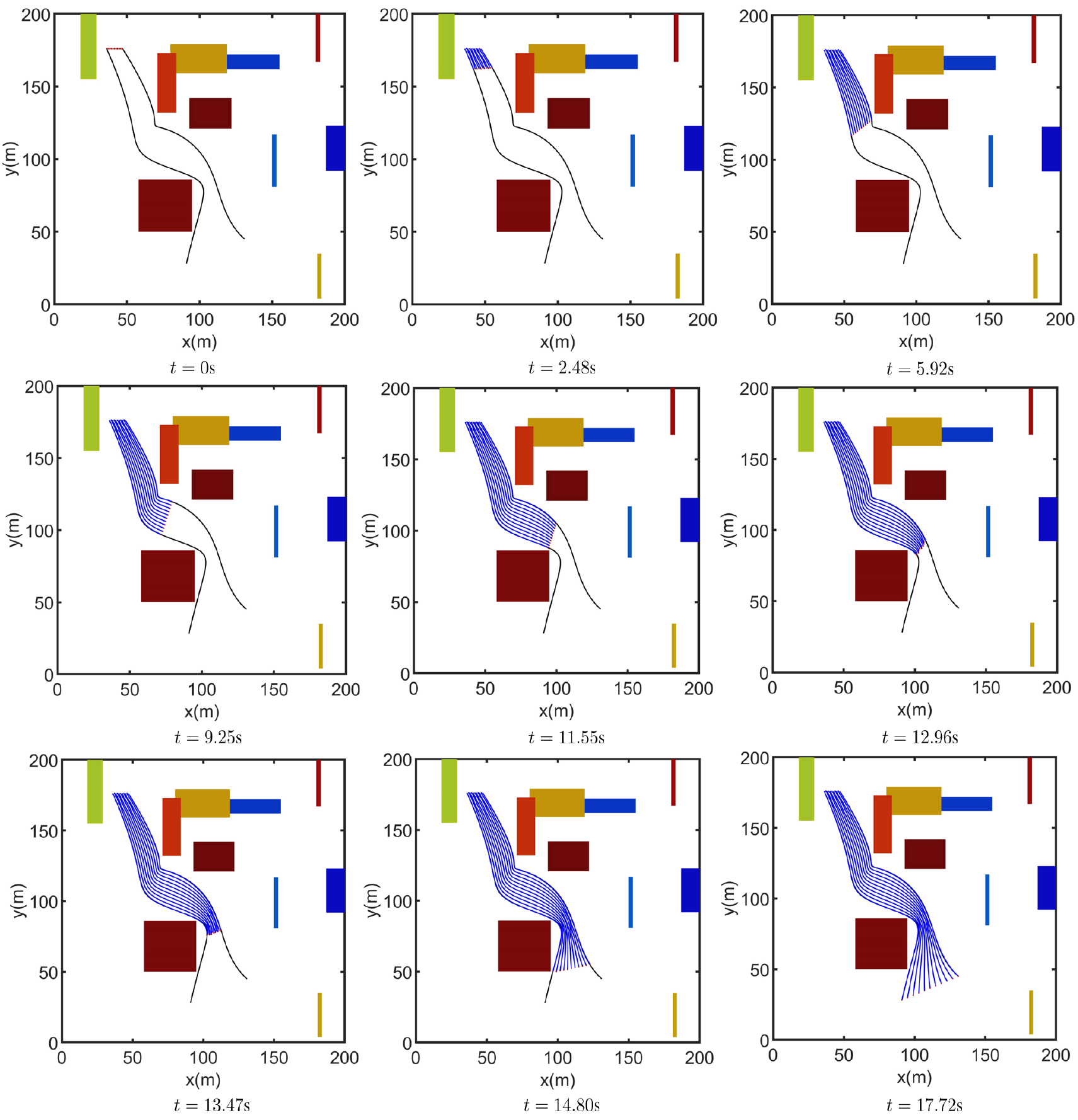}
%
%
	\caption{Simulation results of drone swarm trajectory tracking. The 11 dark blue lines are trajectories for 11 drones, while the red circles indicate drones.}
	\label{fig:follow2d}
\end{figure*}
\begin{figure}
	\centering
	\includegraphics[width=0.98\linewidth]{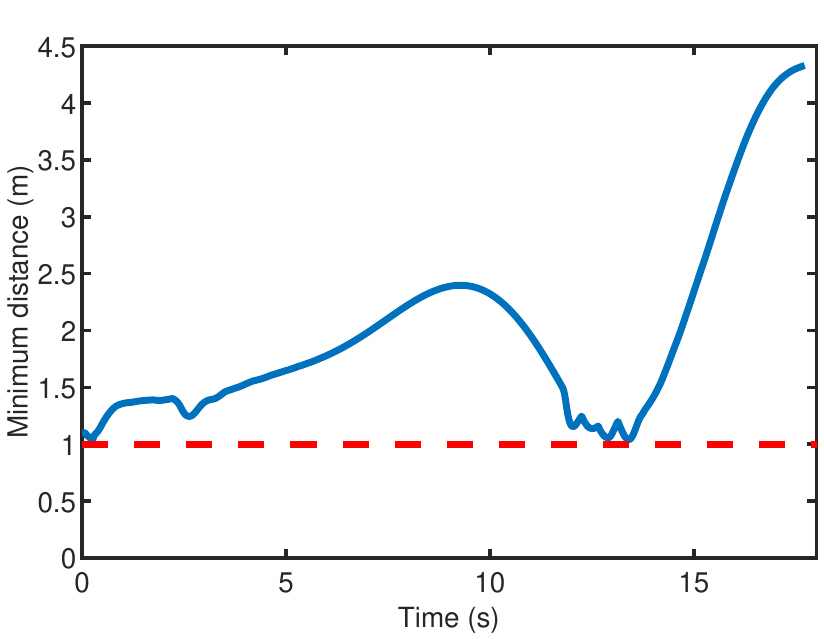}
	\caption{Minimum distance among drones. The blue curve is the minimum distance with respect to time. The red dotted line is the safety distance among drones.}
	\label{fig:follow2dmindis}
\end{figure}
\begin{figure}
	\centering
	\includegraphics{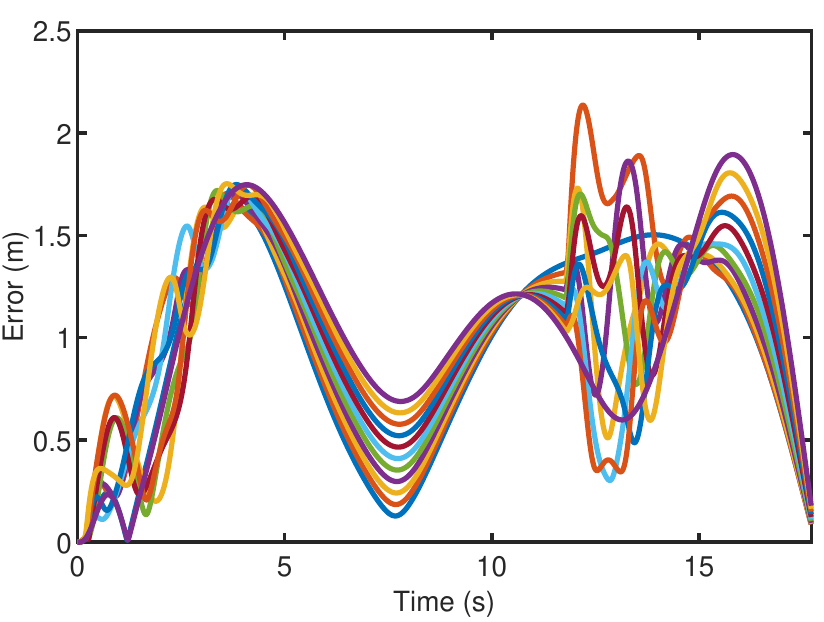}
	\caption{Comparison of actual and desired trajectories for the drone swarm.}
	\label{fig:follow2derr}
\end{figure}

\subsubsection{A 3-D optimal virtual tube}
For a 240m$ \times $240m$ \times $120m map, illustrated in Fig. \ref{fig:process_of_tube_planning}(a), the start terminal $\mathcal{C}_0$ which is the convex hull of the set $\{{\bf q}_{0,k}\}$ is represented as a green tetrahedron, while the goal terminal $\mathcal{C}_1$ which is the convex hull of the set $\{{\bf q}_{m,k}\}$ is represented as a red tetrahedron. First, the order pairs $\left( {\bf q}_{0,k}, {\bf q}_{m,k}\right) $ are constructed by assigning the red points to green points respectively. Thus, any start point ${\bf q}_{0}\left({\bm \theta}_j\right)$ in terminal ${\mathcal{C}_0}$ is expressed as (\ref{equ:start_position}), and the map ${\bf f}$ maps this point to the goal point ${\bf q}_{m}\left({\bm \theta}_j\right)$. Then, a set of order pairs ${\mathcal{P}}$ is constructed as $\left\{ {\left( {{{\bf{q}}_{0}\left({\bm \theta}_j\right)},{{\bf{q}}_{m}\left({\bm \theta}_j\right)}} \right)} \right\}$. Next, the paths of the order pairs ${\left( {{{\bf{q}}_{0,k}},{{\bf{q}}_{m,k}}} \right)}$ are generated by a tree-based motion planner called RRT*, as shown in Fig. \ref{fig:process_of_tube_planning}(b). The optimal trajectories ${\bf h}^*$ of the order pairs ${\left( {{{\bf{q}}_{0,k}},{{\bf{q}}_{m,k}}} \right)}$ are then obtained by solving optimization problems, as depicted in Fig. \ref{fig:process_of_tube_planning}(c). Finally, the optimal virtual tube is generated based on \emph{Theorem \ref{the:convex_hull}}, which is illustrated in Fig. \ref{fig:process_of_tube_planning}(d).
\begin{figure*}
	\centering
	\includegraphics{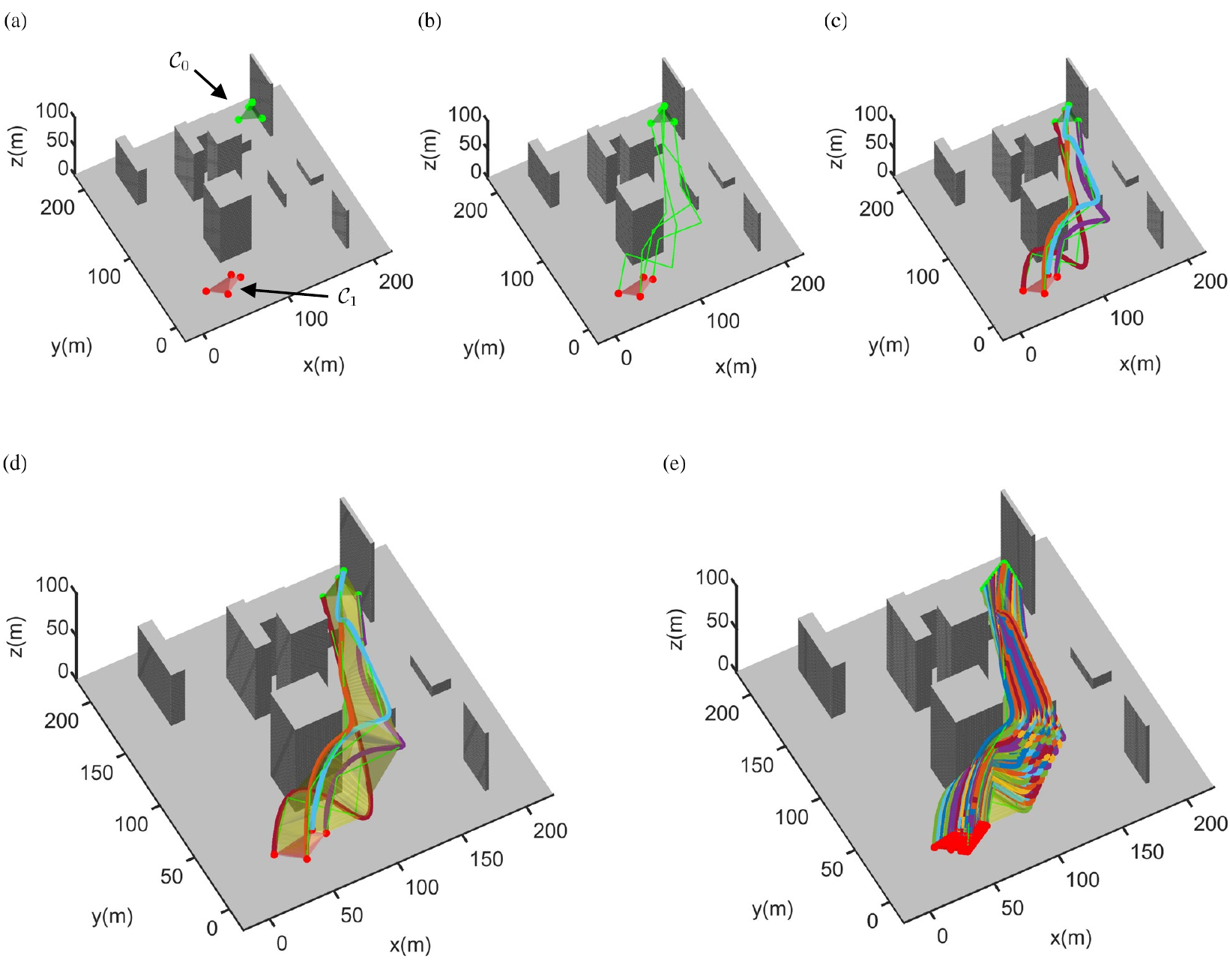}
%
	
	\caption{An optimal virtual tube planning process. (a) Constructing order pairs ${\mathcal{P}}$. The red points represent the start points, and the green points represent the goal points. The red trigonal pyramid denotes the start area, and the green trigonal pyramid denotes the goal area. (b) Finding paths. A path-finding algorithm is used to find paths, shown as green lines. (c) Optimizing trajectories. Optimal trajectories are generated by solving optimization problems. The colorful lines represent different trajectories for different pairs. (d) Constructing optimal virtual tube. An optimal virtual tube is generated based on the optimal trajectories, shown as a yellow area. (e) Generating infinite optimal paths. Other optimal trajectories, denoted by colorful lines, are automatically generated by interpolation.}
	\label{fig:process_of_tube_planning}
\end{figure*}

The drone swarm comprises 84 drones, and their start and goal points are generated by equispaced interpolation. Subsequently, the 84 optimal trajectories for all drones are generated by interpolation between four optimal trajectories. The error between the trajectories by interpolation and by solving 84 optimization problems, as shown in Fig. \ref{fig:map_tube_trajectory}, is negligible. Then, drones could track their trajectories correspondingly. 
\begin{figure}
	\centering
	\includegraphics{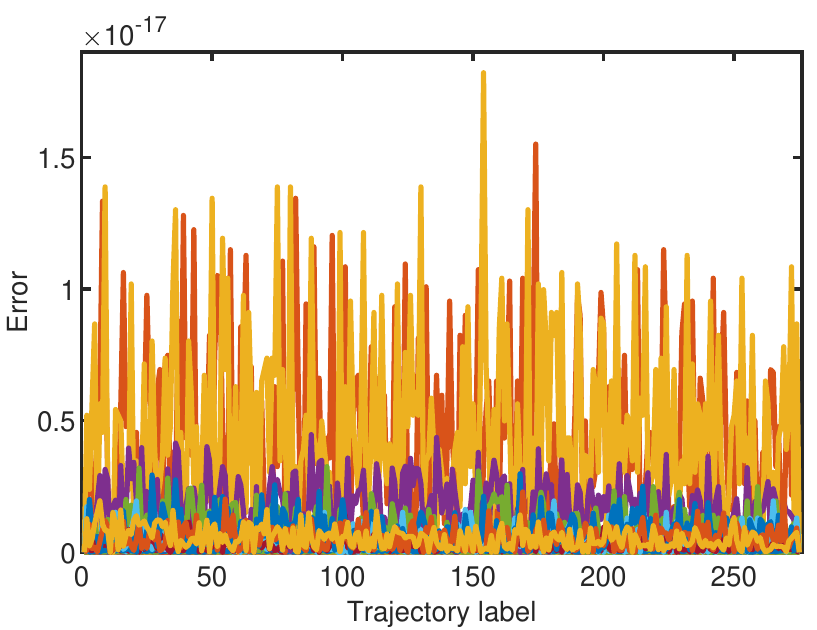}
	\caption{Errors between variable vectors of trajectories by interpolation and optimization problem are illustrated using colorful lines representing the error in each parameter of the variable vectors.}
	\label{fig:map_tube_trajectory}
\end{figure}

The terminals $\mathcal{C}_0$ and $\mathcal{C}_1$ in Fig. \ref{fig:process_of_tube_planning}(a) are tetrahedra. The drone swarm with 84 drones is placed in the terminal $\mathcal{C}_0$. There are only 4 trajectories being planned and other 80 trajectories are generated by interpolation. In the simulation result, as shown in Fig. \ref{fig:follow3d}, the drone swarm arrives at terminal $\mathcal{C}_1$ safely and smoothly. The minimum distance among drones in the swarm with respect to the time shown in Fig. \ref{fig:follow3dmindis} is always higher than the safety distance, which demonstrates no collision among drones.
\begin{figure}
	\centering
	\includegraphics{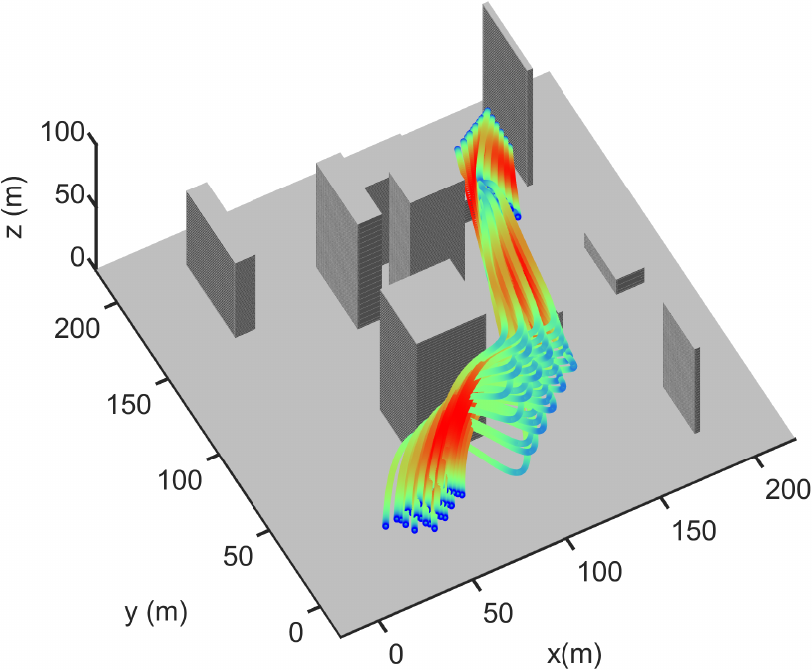}
	\caption{Simulation results of drone swarm following in 3-D. The colorful curves are different trajectories. The colors of trajectories correspond to varied speeds where the red represents the high speed and the blue represents the low speed.}
	\label{fig:follow3d}
\end{figure}
\begin{figure}
	\centering
	\includegraphics{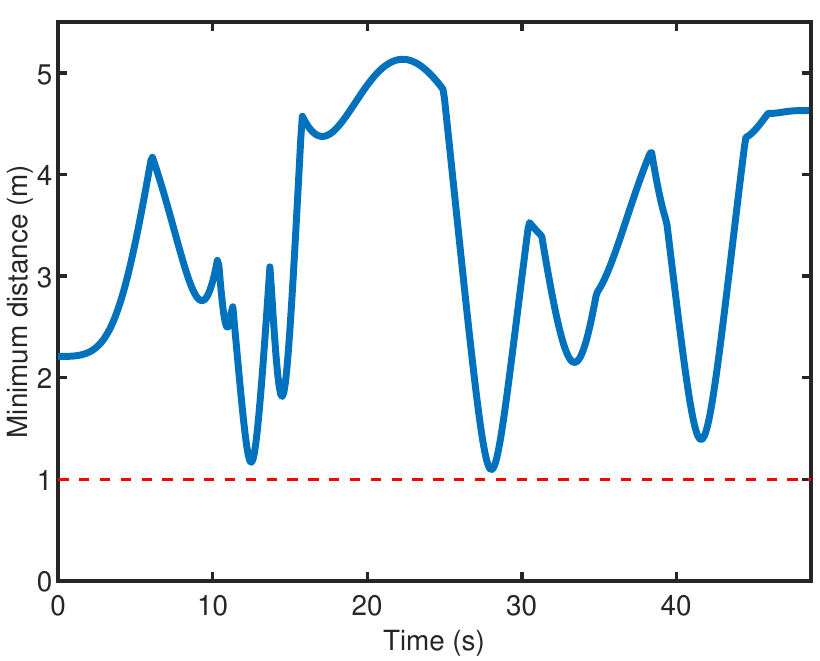}
	\caption{Minimum distance among drones in 3-D. The red dotted line is the $1$m safety distance.}
	\label{fig:follow3dmindis}
\end{figure}
\begin{figure}
	\centering
	\includegraphics{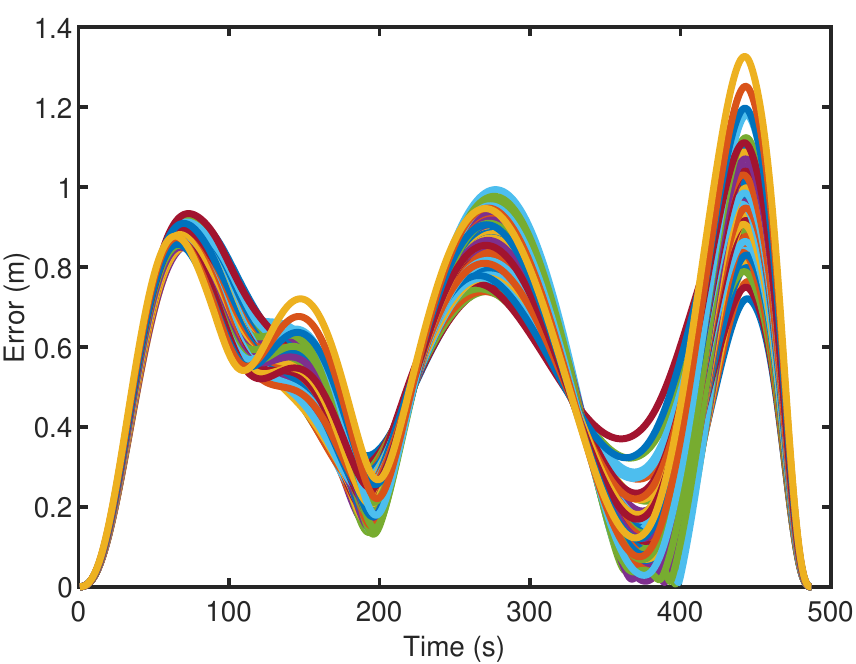}
	\caption{Error between desired and actual trajectory for drone swarm.}
	\label{fig:follow3derr}
\end{figure}
{
\subsection{Comparisons in complex scenarios}
In this subsection, we assess the performance of the proposed method in terms of navigation efficiency in a complex scenario. We first present several performance metrics and then compare different methods in complex scenarios.
}
{
\subsubsection{Performance metrics}
We use the following performance metrics to compare our proposed method with other methods. 
\begin{itemize}
	\item Average time: The average of the arrival times of robots in the swarm to the goal area. If any robot in the swarm does not reach the goal area within a certain time limit, the average time is set to be infinite.
	\item Arrival rate: The ratio of the number of robots reaching the goal area to the total number of robots in a certain time limit.
	\item Average speed: The average speed of the robots reaching the goal area in a certain time limit. The other robots not reaching the goal area are not considered.
\end{itemize}
We run each method 30 times for each scenario to report the mean for each metric.
}
{
\subsubsection{Complex scenarios}
In the complex scenarios, as shown in Fig. \ref{fig:complexscenarios}, the obstacles are randomly placed and the start area and goal area are designed as the tetrahedra. We test the navigation performance of the swarm in scenarios with 10, 20, 30, 40, and 50 random obstacles each. And the numbers of drones are set as the same.
}
\begin{figure}
	\centering
	\includegraphics{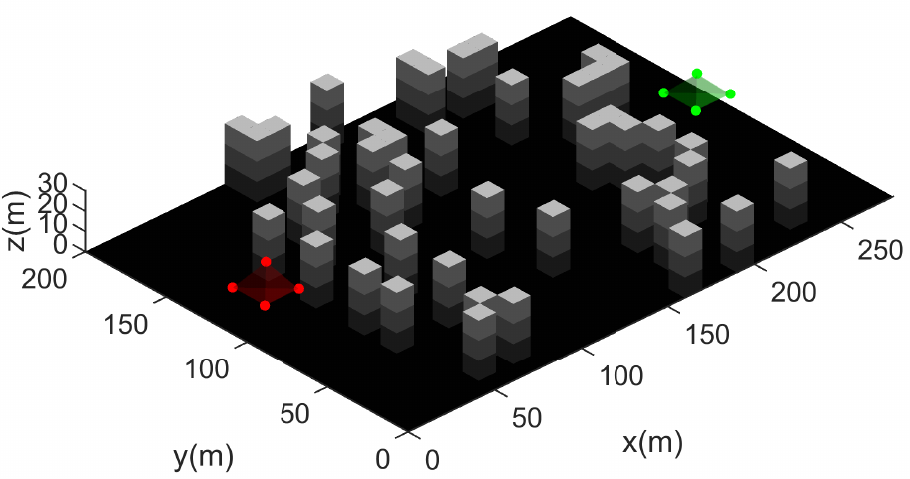}
	\caption{Complex scenario with random obstacles. The start area and goal area are denoted by the red and green tetrahedra.}
	\label{fig:complexscenarios}
\end{figure}
\begin{table*}[]
	\centering
	\begin{tabular}{clccc}
		\hline
		\multicolumn{1}{l}{\multirow{2}{*}{The number of obstacles}} & \multirow{2}{*}{Metrics} & \multicolumn{3}{c}{Methods}                                   \\ \cline{3-5} 
		\multicolumn{1}{l}{}                                         &                          & Formation control & Optimal virtual tube & Predictive control \\ \hline
		\multirow{3}{*}{10}                                          & Average time (s)         &   67.8                & 58.6                     &  \textbf{53.1}                  \\
		& Arrival rate (\%)        &  100                 &  100                    &   100                 \\
		& Average speed (m/s)      &  4.4556                 & \textbf{4.6656}                     &   4.516                \\ \hline
		\multirow{3}{*}{20}                                          & Average time (s)         &   67.5                & 59.3                    &  \textbf{51.5}                  \\
		& Arrival rate (\%)        &  100                 &  100                    &   100                 \\
		& Average speed (m/s)      &  4.5032                 & \textbf{4.5992}                     &   4.5871                \\ \hline
		\multirow{3}{*}{30}                                          & Average time (s)         &   84.6                &   \textbf{76.0}                   & $\infty$                   \\
		& Arrival rate (\%)        & 100                  & 100                     &  35                  \\
		& Average speed (m/s)      &  4.4792                 & \textbf{4.6083}                     &  1.9498                  \\ \hline
		\multirow{3}{*}{40}                                          & Average time (s)         &   63.2                & \textbf{57.0  }                  &  $\infty$                  \\
		& Arrival rate (\%)        &  100                 &  100                    &   75                 \\
		& Average speed (m/s)      &  4.5623                 & \textbf{4.8729}                     &   3.4681                \\ \hline
		\multirow{3}{*}{50}                                          & Average time (s)         &  83.3                 &  \textbf{65.3 }                   &  $\infty$                \\
		& Arrival rate (\%)        &    100               &  100                    &    75                \\
		& Average speed (m/s)      &  4.6201                 &   \textbf{4.8541}                   &   4.2050                \\ \hline
	\end{tabular}
	\caption{Performance metrics for different methods in complex scenarios with a varied number of obstacles.}
	\label{table:comparison}
\end{table*}
\begin{figure*}
	\centering
	\includegraphics{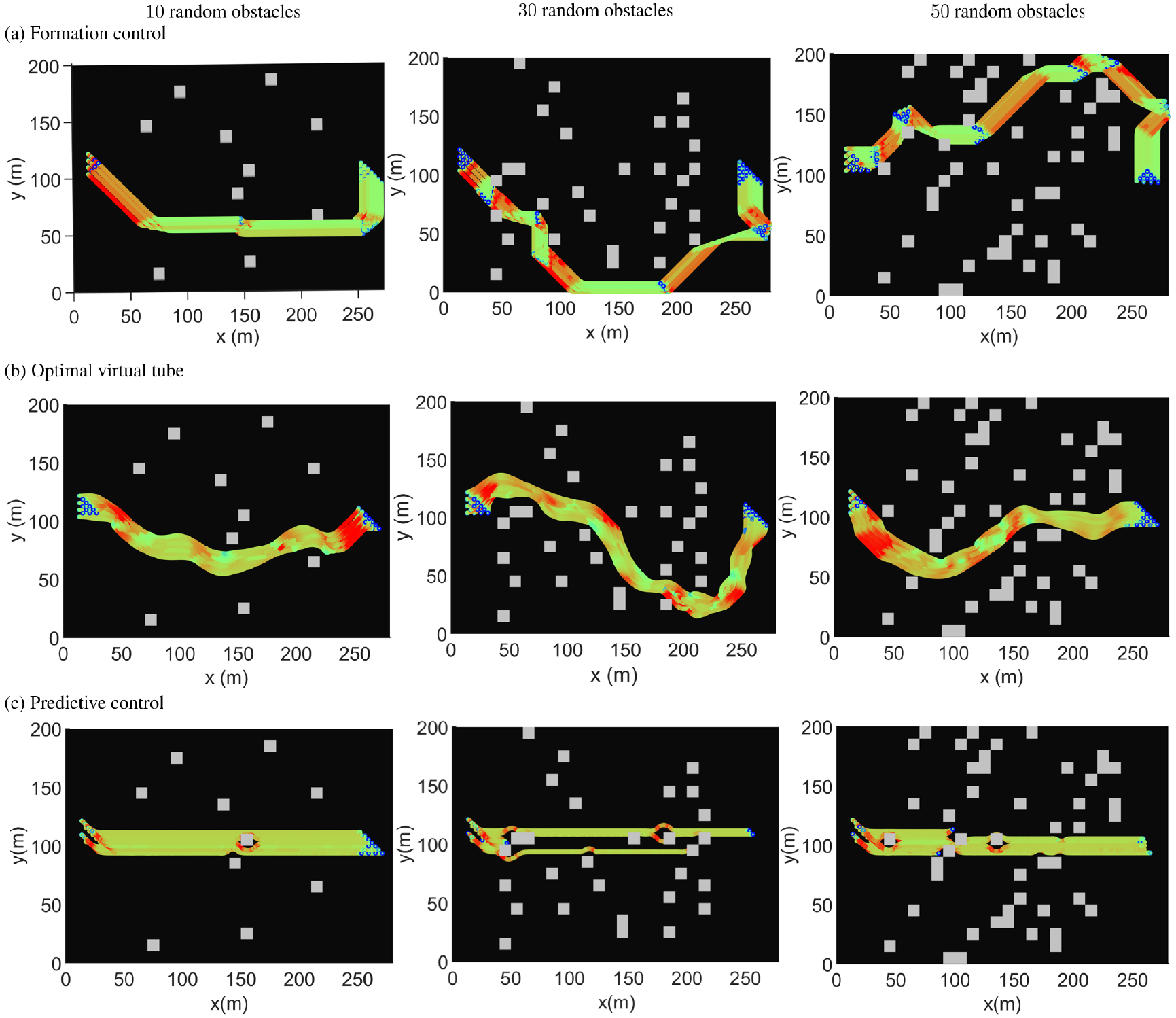}
	\caption{Trajectories of swarm robotics for three methods in environments with different numbers of obstacles. The colors of trajectories correspond to varied speeds where the red represents the high speed and the blue represents the low speed. }
	\label{fig:comparison}
\end{figure*}

{
Let the radius of drones be $0.5$m and the maximum speed of drones be $5$m/s. 
The results of the comparisons among the three methods are summarized in Table \ref{table:comparison}. Three typical scenarios with 10, 30, and 50 random obstacles are shown in Fig. \ref{fig:comparison}.
(i) In an optimal virtual tube, each drone is assigned to an optimal trajectory considering the kinematics of the drone. Compared with other methods, the desired optimal trajectories make drone swarm fly more smoothly and fast as a group as shown in Fig. \ref{fig:comparison}(b). 
Constraints in the obstacle avoidance process are efficiently simplified as the result of replacing the normal constraints with tube boundary constraints. Moreover, there is no intersection between the optimal trajectories for all drones in the optimal virtual tube so that the collision avoidance among drones is reduced. 
Therefore, the optimal virtual tube planning method is superior to other methods in terms of average time and average speed in obstacle-dense environments. 
Meanwhile, the optimal virtual tube has the same arrival rate as the formation control due to containing goal path planning algorithms in both methods to avoid falling into a deadlock. Compared with the formation control, the optimal virtual tube achieves fast passing through obstacle-dense environments by relaxing hard formation constraints. The swarm will try to maintain the desired formation during the flight but will deform in a small gap between obstacles.
(ii) As shown in Fig. \ref{fig:comparison}(a), the multi-robot formation control guarantees that all drones in the swarm arrive at the goal area while maintaining formation. The global path planner used in formation control finds several intersecting convex polytopes to remain the swarm in the free space so that all drones could reach the goal area, namely, 100\% arrival rate can be achieved. 
However, the formation constraint also requires a minimum volume of the convex polytope while maintaining the formation, resulting in a longer planned path, compared with other methods. The average time, thus, is longer than that of our method.
(iii) As for the predictive control of the aerial swarm, it has the minimum arrival time in obstacle-sparse environments.
However, it should be noted that the average time is set to be infinity in obstacle-dense environments due to at least one drone does not reach the goal area in a certain time limit.
Without the goal paths, the predictive control considers the goal point as the direction of migration that has a better performance in obstacles-sparse environments. 
However, in obstacles-dense environments especially with non-convex obstacles, the direction of obstacle avoidance may be opposite to the direction of migration so that the drone is easily blocked resulting in the lowest arrival rate, as shown in Fig. \ref{fig:comparison}(c). And the several non-regular obstacle constraints negatively affect the average speed. 
}
\subsubsection{Summaries}
{
	In this subsection, we have validated the superiority of our method in several scenarios with random obstacles by comparing with the state-of-art methods including multi-robot formation control and predictive control of aerial swarm.
	The metrics of our method including arrival rate, average time, and average speed are superior to other methods in obstacle-dense environments. The swarm could pass through obstacle-dense environments as a group fast and smoothly.
}
\subsection{Experiments}
A 30m$\times$10m$\times$8m space for real flight is depicted in Fig. \ref{fig:shanghaispace}. Known obstacles are randomly positioned on the ground within this area, and a VICON motion capture system is utilized for drone localization. The drone depicted in Fig. \ref{fig:dronepic}, which has a wheelbase of 350 millimeters, is equipped with a Jetson NX for onboard computation and utilizes the CUAV V5 nano Autopilot as its flight controller.
\begin{figure}
	\centering
	\includegraphics[width=0.99\linewidth]{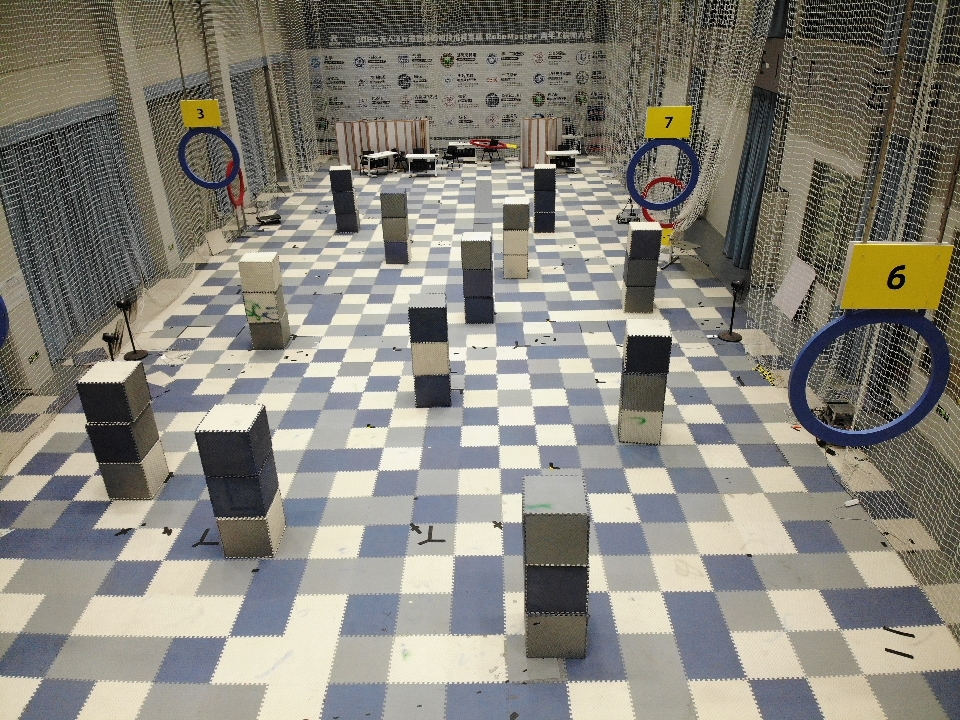}
	\caption{Space for real flight.}
	\label{fig:shanghaispace}
\end{figure}
\begin{figure}
	\centering
	\includegraphics[width=0.7\linewidth]{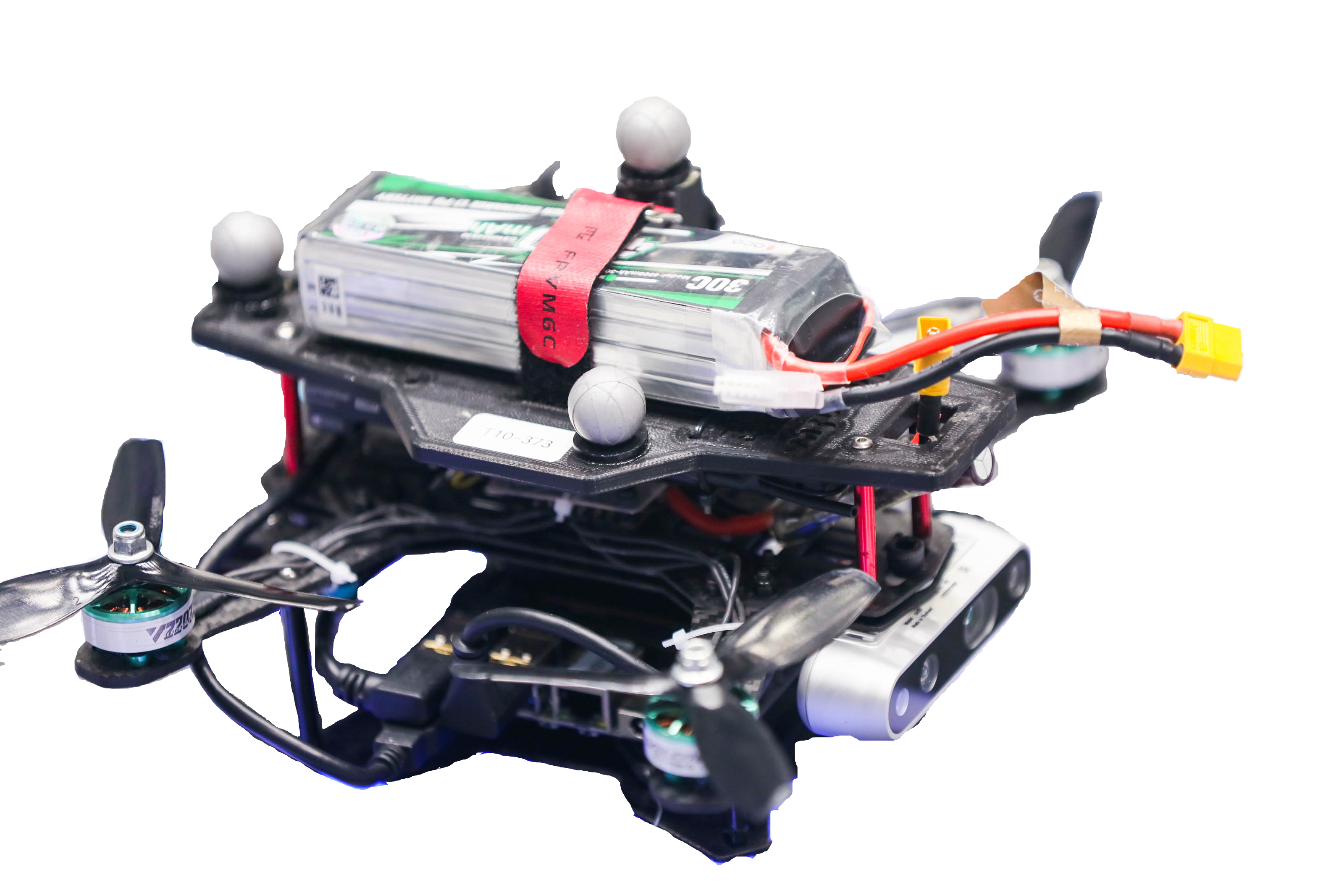}
	\caption{Drone used for real flight.}
	\label{fig:dronepic}
\end{figure}
\subsubsection{Task settings}
The designated flight area is partitioned into three distinct regions, namely the starting area, obstacle area, and goal area. The primary objective of this task for a swarm of four drones is to initiate their flight from the starting area, navigate through the obstacle area without any collision, and ultimately reach the goal area.
\subsubsection{Optimal virtual tube planning}
Firstly, the initial positions ${\bf q}_{0,k}$ of four drones in the start area must be specified, as depicted in Fig. \ref{fig:dronefourground}. It is observed that Drone $1$, Drone $2$, and Drone $3$ are assigned to ${\bf q}_{0,1}$, ${\bf q}_{0,2}$, ${\bf q}_{0,3}$ respectively whose convex hull is the terminal ${\cal C}_0$ in the start area. And the position ${\bf q}_{0,4}$ of Drone $4$ is the affine combination of $\{ {\bf q}_{0,k} \}$ within the terminal ${\cal C}_0$, which is expressed as
\begin{equation}
{{\bf{q}}_{0,4}} = {\theta _{1,4}}{{\bf{q}}_{0,1}} + {\theta _{2,4}}{{\bf{q}}_{0,2}} + {\theta _{3,4}}{{\bf{q}}_{0,3}},\sum\limits_{k = 1}^3 {{\theta _{k,4}}}  = 1.
\end{equation}
\begin{figure}
	\centering
	\includegraphics[width=0.9\linewidth]{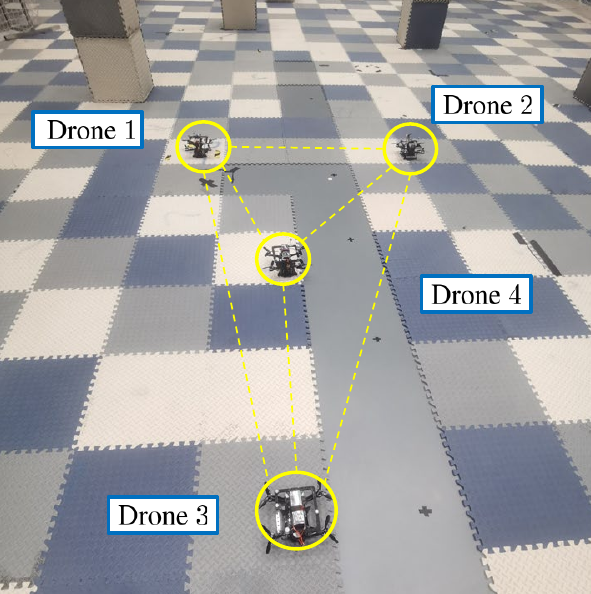}
	\caption{Drone positions at the start area.}
	\label{fig:dronefourground}
\end{figure}
Similarly, the goal positions ${\bf q}_{m,k}$ of drones are assigned. Specially, the goal position ${\bf q}_{m,4}$ is expressed as
\begin{equation}
{{\bf{q}}_{m,4}} = {\theta _{1,4}}{{\bf{q}}_{m,1}} + {\theta _{2,4}}{{\bf{q}}_{m,2}} + {\theta _{3,4}}{{\bf{q}}_{m,3}},\sum\limits_{k = 1}^3 {{\theta _{k,4}}}  = 1.
\end{equation}
Thus, the order pairs $ {\left( {{{\bf{q}}_{0,k}},{{\bf{q}}_{m,k}}} \right)}$ are obtained.

Subsequently, path-finding is used to generate paths for the three drones, as depicted in Fig. \ref{fig:realflightdisplay}(a). For each order pair $ {\left( {{{\bf{q}}_{0,k}},{{\bf{q}}_{m,k}}} \right)}\left(k=1,2,3\right)$, there are 8 intermediate configurations, with $m=7$ denoting the final configuration. To parameterize the trajectories, normalized knots $\{{t_i}\}$ are automatically generated. Three optimization problems are then solved to obtain the optimal trajectories shown in Fig. \ref{fig:realflightdisplay}(b), which are formulated as follows:
\begin{equation}
\begin{array}{l}
{{\bf{h}}^*}\left( {\left( {{{\bf{q}}_{0,1}},{{\bf{q}}_{7,1}}} \right),t} \right) = {\bf{C}}\left( t \right){{\bf{x}}_1},\\
{{\bf{h}}^*}\left( {\left( {{{\bf{q}}_{0,2}},{{\bf{q}}_{7,2}}} \right),t} \right) = {\bf{C}}\left( t \right){{\bf{x}}_2},\\
{{\bf{h}}^*}\left( {\left( {{{\bf{q}}_{0,3}},{{\bf{q}}_{7,3}}} \right),t} \right) = {\bf{C}}\left( t \right){{\bf{x}}_3}.
\end{array}
\end{equation}
Hence, utilizing \emph{Theorem \ref{the:convex_hull}}, the optimal virtual tube $\left( {{{\cal C}_0},{{\cal C}_1},{\bf{f}},{{\bf{h}}^*}} \right)$ is constructed. Since the position ${\bf q}_{0,4}$ is located in the terminal ${\cal C}_0$, the optimal trajectory ${{\bf{h}}^*}\left( {\left( {{{\bf{q}}_{0,4}},{{\bf{q}}_{7,4}}} \right),t} \right)$ for Drone $4$ can be generated by
\begin{equation}
{{\bf{h}}^*}\left( {\left( {{{\bf{q}}_{0,4}},{{\bf{q}}_{7,4}}} \right),t} \right) = {\bf{C}}\left( t \right)\sum\limits_{k = 1}^3 {{\theta _{k,4}}{{\bf{x}}_k}},
\end{equation}
as illustrated in Fig. \ref{fig:realflightdisplay}(c). Upon obtaining the four trajectories for the drones, each drone can track its own trajectory while avoiding the others, as depicted in Fig. \ref{fig:realflightdisplay}(d). As depicted in Fig. \ref{fig:realflight}, the 4th optimal trajectory of Drone $4$, is generated by the convex combination of three optimal trajectories. Subsequently, the drone swarm successfully tracks trajectories to pass through the environment. The velocity of randomly selected Drone $2$ with respect to time is shown in Fig. \ref{fig:drone2}, which indicates that the drone swarm can fast and smoothly pass through the optimal virtual tube.
\begin{figure*}
	\centering
	\includegraphics{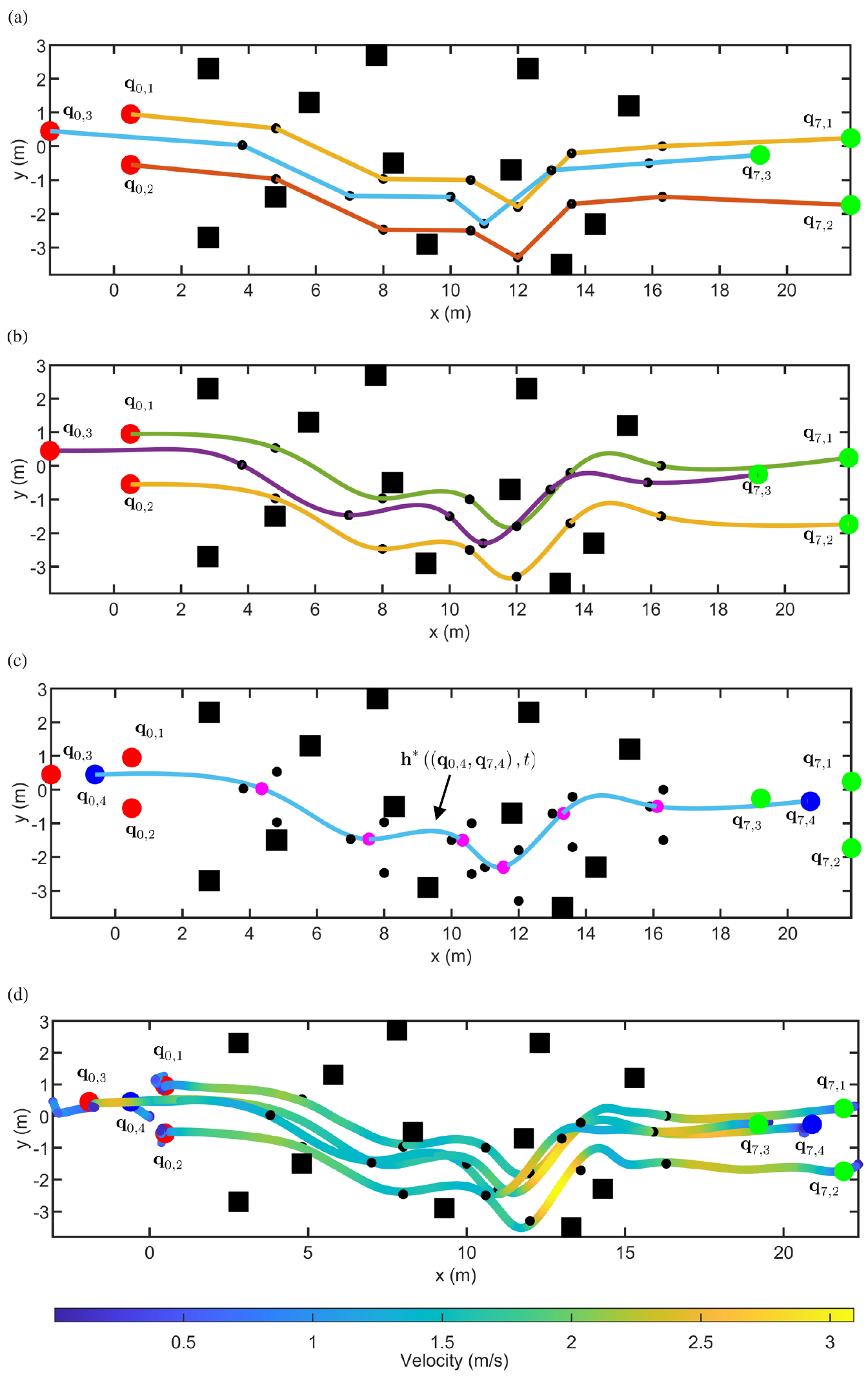}
%
%
%
	\caption{Process of optimal virtual tube planning and application. The squares are obstacles in reality. The red points and green points are start positions and goal positions respectively. (a) The paths for three order pairs. (b) The optimal trajectories for three order pairs. (c) The optimal trajectory of the drone $4$ is generated by interpolation. (d) The true trajectories of drone swarm.}
	\label{fig:realflightdisplay}
\end{figure*}
\begin{figure*}
	\centering
	\includegraphics{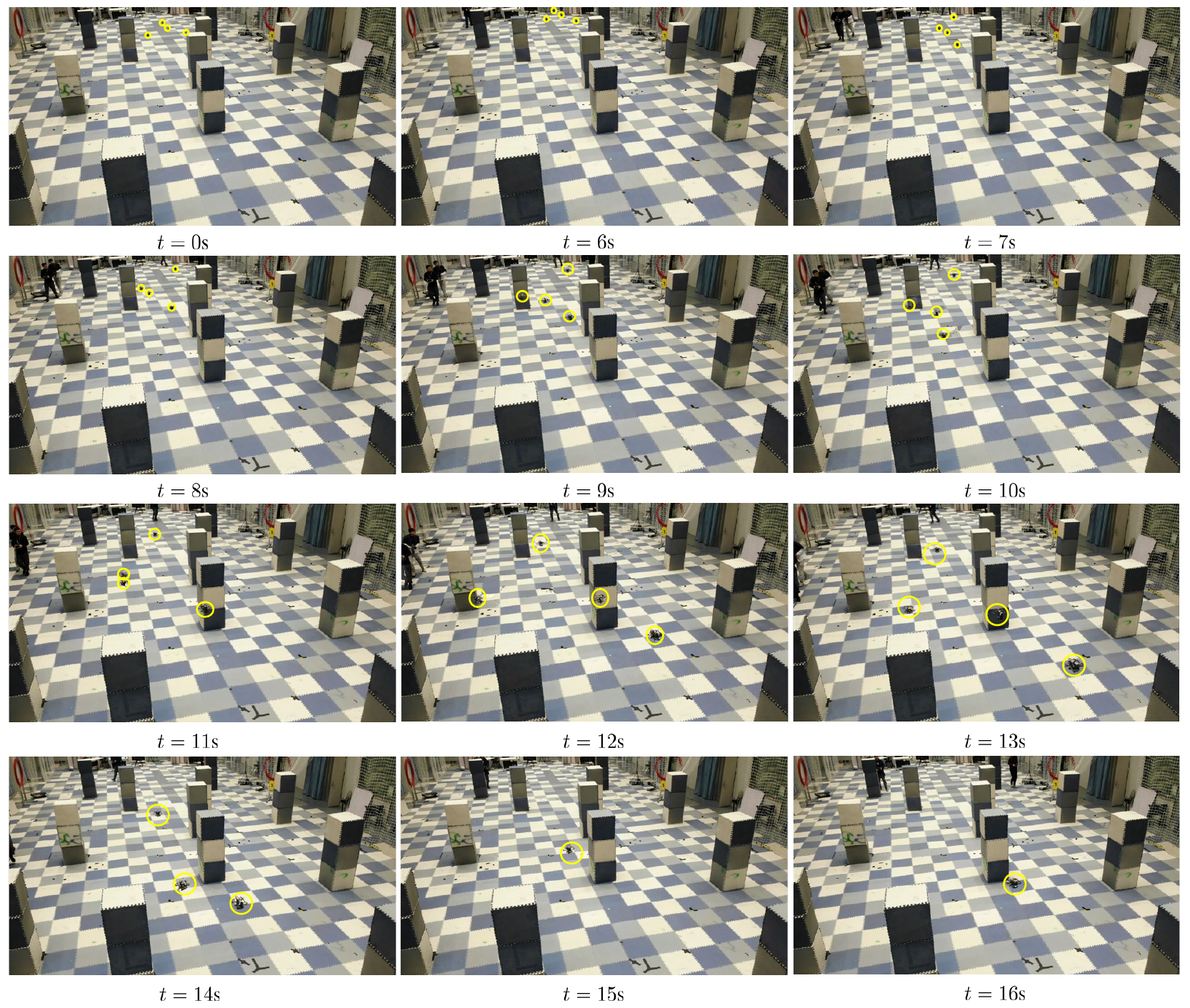}
	\caption{Real flight of drone swarm. }
	\label{fig:realflight}
\end{figure*}
\subsection{Summary}
The hierarchical approach including the optimal virtual tube planning method and model predictive control is validated in simulations and experiments. First, the optimal virtual tube planning method in 2-D and 3-D space is simulated to verify the correctness of \emph{Theorem \ref{the:convex_hull}} by comparing it with the traditional methods. And the MPC controller is implemented in the swarm to achieve inter-drone collision avoidance while the trajectory tracking is achieved. Simulation results demonstrate that the proposed method of optimal virtual tube planning is applicable to large-scale swarm movements. The number of optimization problems is independent of the number of drones when the start positions of drones are within the terminal ${\cal C}_0$. 
Then, comparisons with other methods show that our approach implements the fast and safe movement of the swarm as a group in obstacle-dense environments. In real flight, the approach is validated by the swarm with 4 drones passing through an obstacle-dense environment fast and smoothly.

\begin{figure}
	\centering
	\includegraphics{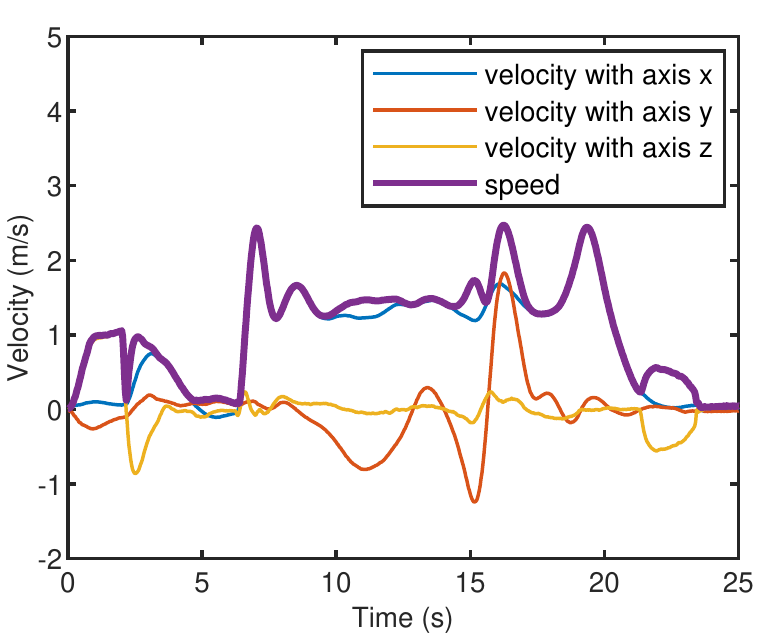}
	\caption{Velocity of Drone $2$.}
	\label{fig:drone2}
\end{figure}


\section{Conclusion and Future Work}
This paper extends the definition of the virtual tube, proposes optimal virtual tubes, and then analyzes their properties. One type of virtual tubes, namely the linear virtual tube with convex hull terminals, is used for constructing an optimal virtual tube. Then, a planning method for this linear virtual tube with convex hull terminals is proposed and its effectiveness is demonstrated in simulations and experiments. The results in 2-D and 3-D show that the error between results obtained through our planning method and optimization is negligible. Therefore, our method can efficiently reduce the computation burden and generate optimal trajectories by convex combinations. An application of the optimal virtual tube, which uses an MPC controller for a drone swarm to track the trajectories in simulations and experiments, shows that the swarm could pass through obstacle-dense environments {fast and safely as a group.}
{In addition, as shown in Fig. \ref{fig:infinitetrajectoriesforsinglerobot}, when the start terminal of the virtual tube is shrunk to a point, it can also be applied to single-robot trajectory planning, that is, to plan infinite optimal trajectories for a single robot to reach the target area.}
\begin{figure}
	\centering
	\includegraphics{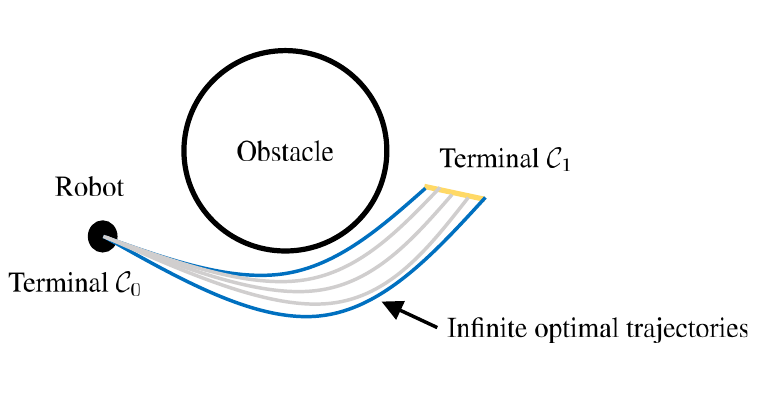}
	\caption{An example for a single robot to generate infinite optimal trajectories. The terminal $\mathcal{C}_0$ is shrunk to be a point where the robot is, and the terminal $\mathcal{C}_1$ is denoted by the yellow line.}
	\label{fig:infinitetrajectoriesforsinglerobot}
\end{figure}
 
{In summary, there are several advantages of our method. (i) The computational burden involved in trajectory planning for large swarm robotics is significantly reduced by our approach, making it practical for real-world applications. (ii) Swarm robotics are enabled by our method to navigate through obstacle-dense environments fast and safely as a cohesive group with desired formations. (iii) Enhanced scalability for swarm robotics is provided by our method, enabling the management of larger swarms while maintaining efficient navigation.}
On the other hand, there are also some limitations of our method. {(i) The large swarm robotics may be blocked in extremely cluttered environments\footnote{{The extremely cluttered environment signifies that the gaps or spaces between obstacles are considerably narrower than the collective volume occupied by the swarm, compared with the obstacle-dense environment. For example, the environment where gaps between obstacles can only hold a single robot to pass through is the extremely cluttered environment.}}.} Due to the property of simply connected cross-sections, all trajectories in the virtual tube are homotopic, preventing the swarm from splitting apart to move around obstacles. This may cause difficulties in extremely cluttered environments. When the space of the gap that the virtual tube passes through is smaller than the physical volume of the swarm, collision avoidance procedure is necessary for the swarm to pass through this gap. However, this may block the swarm. {(ii) The swarm is not allowed dividing into subgroups for a single virtual tube. Multiple virtual tubes become necessary when the swarm is partitioned into distinct subgroups for various goal areas or within extremely cluttered environments. Nonetheless, it is worth noting that managing multiple virtual tubes may entail greater complexity and potentially higher costs compared to individual planning for each robot in extremely cluttered environments.} 

To overcome these limitations and make the swarm autonomous, dependable and affordable (ADA) (\cite{cai2021s}), further studies are included in the following. (i) Path planning for the virtual tube will be designed to reduce the computation time and limit the minimum volume of the cross-section according to the physical volume of the swarm. 
(ii) {The design of the map ${\bf f}$ for ordered pairs needs to be studied further, encompassing not only the convex hull terminals but also the general terminals. This map ${\bf f}$ will aid in the prevention of path crossings and the efficient conservation of energy, among other potential objectives.} In cases where paths for different order pairs intersect, it can be difficult to distinguish between the boundary and interior of the virtual tube, where the boundary is used to constrain the swarm within the virtual tube.
(iii) Furthermore, a method of constructing the virtual tube network or multiple virtual tubes will be explored to allow the swarm to split apart to move around obstacles.

\section{Declaration of Conflicting Interests}
The author(s) declared no potential conflicts of interest with respect to the research, authorship, and/or publication of this article.
\section{Funding}
This work was supported by the National Natural Science Foundation of China under Grant 61973015.


\begin{appendices}
	\section{A Proofs of Lemma 1 and Lemma 2}
	\subsection{A.1 Proof of Lemma 1}
	\begin{proof}
		In \emph{Step 1}, it is proved that ${\bf{x}}({\bm{\theta}})$ is feasible. Then the optimality of ${\bf{x}}({\bm{\theta}})$ is shown in \emph{Step 2}.
		
		\textbf{Step 1}. According to an optimality criterion (\cite{Boyd_convex_2004}) for differentiable $f_0$. Let $\mathcal{X}_{\theta}$, $\mathcal{X}_{k}$ denote feasible sets:
		\[{{\mathcal{X}}_{\theta}} = \left\{ {{\bf{x}}|{\bf{Ax}} = {{\bf{b}}\left({\bm{\theta}}\right)}} \right\},{{\mathcal{X}}_{k}} = \left\{ {{\bf{x}}|{\bf{Ax}} = {{\bf{b}}_k}} \right\}.\]
		Then for all ${\mathbf{x}} \in {{{\mathcal{X}}_{k}}} $,
		\begin{equation}
		\nabla f_0\left( {{{\bf{x}}_k}} \right)^{\rm{T}}\left( {{\bf{x}} - {{\bf{x}}_k}} \right) \ge 0,k=1,2,...,q.
		\label{equ:optimal_condition_std}
		\end{equation}
		Since ${\bf{x}}_k$ is feasible, each solution ${\bf{x}}$ in Equation (\ref{equ:optimal_condition_std}) can be expressed as the combination of a special solution and a solution in zero space, such as ${\bf{x}} = {{\bf{x}}_k} + {\bf{v}}$,${\bf{v}} \in N\left( {\bf{A}} \right)$.
		Therefore, this optimal condition (\ref{equ:optimal_condition_std}) is transformed into
		\begin{equation}
		\nabla {f_0}{\left( {{{\bf{x}}_k}} \right)^{\rm{T}}}{\bf{v}} \ge {\bf{0}}, {\rm{for \, all \, }}{\bf{v}}\in  N\left( {\bf{A}} \right).
		\label{equ:optimal_condition_for_x0_std}
		\end{equation}
		
		When ${\bf{b}} = {\bf{b}}({\bm{\theta}})$, we substitute ${\bf{x}}\left( {\bm{\theta}} \right)$ into (\ref{equ:optim_problem}) to obtain 
		\[
		{\bf{Ax}}\left( {\bm{\theta}} \right)   = \sum\limits_{k = 1}^q {{\theta _k}{\bf{A}}{{\bf{x}}_k}}
		= \sum\limits_{k = 1}^q {{\theta _k}{{\bf{b}}_k}}\\
		= {\bf{b}}\left( {\bm{\theta}} \right)
		.\]
		Therefore, ${{\bf{x}}({\bm{\theta}})}$ is feasible.
		
		\textbf{Step 2}. For all ${\mathbf{x}} \in {{{\mathcal{X}}_\theta}} $, it has
		\begin{equation}
		\nabla f_0\left( {{\bf{x}}\left( {\bm{\theta}} \right)} \right)^{\rm{T}}\left( {{\bf{x}} - {\bf{x}}\left( {\bm{\theta}} \right)} \right)
		={\sum\limits_{k = 1}^q {{\theta _k}\nabla {f_0}\left( {{{\bf{x}}_k}} \right)} ^{\rm{T}}}{\bf{v}}.
		\label{equ:optimal_condition_for_xm_std}
		\end{equation}
		Combining Equation (\ref{equ:optimal_condition_for_xm_std}) with (\ref{equ:optimal_condition_for_x0_std}) derives 
		\begin{equation}
		\nabla f_0\left( {{\bf{x}}\left( {\bm{\theta}} \right)} \right)^{\rm{T}}\left( {{\bf{x}} - {\bf{x}}\left({\bm{\theta}} \right)} \right) \ge 0, {\rm{for \, all \,}} {\bf{x}} \in {{{\mathcal{X}}_\theta}}.
		\end{equation}
		Therefore, ${\bf{x}}({\bm{\theta}})$ is optimal. $\hfill\blacksquare$ 
	\end{proof}

	\subsection{A.2 Proof of Lemma 2}
	\begin{proof}
		In \emph{Step 1}, it is shown that ${\bf{x}}({\bm{\theta}})$ is feasible. Then the optimality of ${\bf{x}}({\bm{\theta}})$ is shown in \emph{Step 2}.
		
		\textbf{Step 1}. According to an optimality criterion (\cite{Boyd_convex_2004}) for differentiable $f_0$. Let $\mathcal{X}_{\theta}$, $\mathcal{X}_{k}$ denote feasible sets,
		\[{{\mathcal{X}}_\theta} = \left\{ {{\bf{x}}|{\bf{Ax}} = {{\bf{b}}{\left({\bm{\theta}}\right)}}}, {f_i}\left( {\bf{x}} \right)  \le 0, i=1,...,n_c \right\},\]
		\[{{\mathcal{X}}_{k}} = \left\{ {{\bf{x}}|{\bf{Ax}} = {{\bf{b}}_k}} , {f_i}\left( {\bf{x}} \right)  \le 0, i=1,...,n_c\right\}.\]
		Let ${\mathcal{V}}_k$ denote the equivalent feasible set,
		\[{{\mathcal{V}}_k} = \left\{ { {\bf{v}}| {\bf{A}}({\bf{x}}_k+{\bf{v}}) = {{\bf{b}}_k},{f_i}\left( {\bf{x}}_k  + {\bf{v}}\right) \le 0,i = 1,...,n_c} \right\}.\]
		And for all ${\mathbf{x}} \in {{{\mathcal{X}}_k}} $,
		\begin{equation}
		\nabla f_0\left( {{{\bf{x}}_k}} \right)^{\rm{T}}\left( {{\bf{x}} - {{\bf{x}}_k}} \right) \ge 0.
		\label{equ:optimal_condition}
		\end{equation}
		Since ${\bf{x}}_k$ is feasible, every  ${\bf{x}}$ has ${\bf{x}} = {{\bf{x}}_k} + {\bf{v}}$, ${\bf{v}} \in {\mathcal{V}}_k$.
		Therefore, the optimal condition (\ref{equ:optimal_condition}) can be expressed as:
		\begin{equation}
		\nabla {f_0}{\left( {{{\bf{x}}_k}} \right)^{\rm{T}}}{\bf{v}} \ge 0,  \, for \, all \, {\bf{v}} \in {\mathcal{V}}_k.
		\label{equ:optimal_condition_for_x0}
		\end{equation}
		
		Let ${\bf{b}}$ be ${\bf{b}}\left({\bm{\theta}}\right)$. Substituting ${\bf{x}}\left( {\bm{\theta}} \right)$ into constraints yields
		\[
		{\bf{Ax}}\left( {\bm{\theta}} \right)   = \sum\limits_{k = 1}^q {{\theta _k}{\bf{A}}{{\bf{x}}_k}}
		= \sum\limits_{k = 1}^q {{\theta _k}{{\bf{b}}_k}}\\
		= {\bf{b}}\left( {\bm{\theta}} \right)
		,\]
		\[{f_j}\left( {{\bf{x}}\left( {{\bm\theta}} \right)} \right) \le \sum\limits_{k = 1}^q {{\theta _k}{f_j}\left( {{{\bf{x}}_k}} \right)}  \le 0,j = 1,...,n_c.\]
		Therefore, ${{\bf{x}}\left({\bm{\theta}}\right)}$ is feasible. 
		
		\textbf{Step 2}. For all ${\mathbf{x}} \in {{{\mathcal{X}}_\theta}} $, it has equivalent constraint ${{\mathcal{V}}_{\theta}} = \sum\nolimits_{k = 1}^q {{\theta _k}{{\mathcal{V}}_k}} $. Therefore, for all ${\bf{v}}_\theta \in {{\mathcal{V}}_{\theta}}$, there exist specific ${\bf{v}}_k \in {{\mathcal{V}}_k}$ so that 
		\[{\bf{v}}_\theta = \sum\limits_{k = 1}^q {{\theta _k}{{\bf{v}}_k}}. \]
		In the following, we will show  $\nabla {f_0}{\left( {{\bf{x}}\left( {{{\bm{\theta}}}} \right)} \right)^{\rm{T}}}{{\bf{v}}_\theta } \ge 0$.
		
		When $q=2$, 
		\begin{equation}
		\begin{array}{*{20}{l}}
		{}&{\nabla {f_0}{{\left( {{\bf{x}}\left( {\bm \theta } \right)} \right)}^{\rm{T}}}{{\bf{v}}_\theta }}\\
		{}&{ = \left( {\theta _1^2 + {\theta _1}{\theta _2}} \right)\nabla {f_0}{{\left( {{{\bf{x}}_1}} \right)}^{\rm{T}}}{{\bf{v}}_1} + \left( {\theta _2^2 + {\theta _1}{\theta _2}} \right)\nabla {f_0}{{\left( {{{\bf{x}}_2}} \right)}^{\rm{T}}}{{\bf{v}}_2}}\\
		{}&{ + {\theta _1}{\theta _2}{{\left( {\nabla {f_0}\left( {{{\bf{x}}_1}} \right) - \nabla {f_0}\left( {{{\bf{x}}_2}} \right)} \right)}^{\rm{T}}}\left( {{{\bf{x}}_1} - {{\bf{x}}_2}} \right)}.
		\end{array}
		\label{equ:optimal_condition_for_xm}
		\end{equation} 
		Since ${f_0}$ is a convex function, there have
		\begin{subequations}
			\begin{align}
			{f_0}\left( {{{\bf{x}}_1}} \right) \ge {f_0}\left( {{{\bf{x}}_2}} \right) + \nabla {f_0}\left( {{{\bf{x}}_2}} \right)^{\rm{T}}\left( {{{\bf{x}}_1} - {{\bf{x}}_2}} \right),\label{equ:math_indu_1}\\
			{f_0}\left( {{{\bf{x}}_2}} \right) \ge {f_0}\left( {{{\bf{x}}_1}} \right) + \nabla {f_0}\left( {{{\bf{x}}_1}} \right)^{\rm{T}}\left( {{{\bf{x}}_2} - {{\bf{x}}_1}} \right).\label{equ:math_indu_2}
			\end{align}
		\end{subequations}
		Combining (\ref{equ:math_indu_1}) and (\ref{equ:math_indu_2}), we obtain
		\begin{equation}
		\begin{array}{c}
		{\left( {\nabla {f_0}\left( {{{\bf{x}}_1}} \right) - \nabla {f_0}\left( {{{\bf{x}}_2}} \right)} \right)^{\rm{T}}}\left( {{{\bf{x}}_1} - {{\bf{x}}_2}} \right) \ge 0.
		\end{array}
		\label{equ:convex_condition}
		\end{equation}
		Combining Equation (\ref{equ:optimal_condition_for_xm}) with (\ref{equ:convex_condition}) results in
		\begin{equation}
		\nabla f_0\left( {{\bf{x}}\left( {\bm{\theta}} \right)} \right)^{\rm{T}} { {\bf{v}}_{\theta}} \ge 0, \, for \, all \, {\bf{v}}_\theta \in {{{\mathcal{V}}_\theta}}.
		\label{equ:optimal-condition-eq}
		\end{equation}
		
		Suppose that Equation (\ref{equ:optimal-condition-eq}) holds when $q=n$. Then, when $q=n+1$, we have
		\begin{equation}
		\begin{array}{ll}
		{{\bf{v}}_\theta } &  = {\theta _{n + 1}}{{\bf{v}}_{n + 1}} + \sum\nolimits_{k = 1}^n {{\theta _k}{{\bf{v}}_k}} \\
		&  = {\theta _{n + 1}}{{\bf{v}}_{n + 1}} + \sum\nolimits_{k = 1}^n {{\theta _k}} \sum\nolimits_{k = 1}^n {\frac{{{\theta _k}}}{{\sum\nolimits_{k = 1}^n {{\theta _k}} }}{{\bf{v}}_k}} \\
		&  = {\theta _{n + 1}}{{\bf{v}}_{n + 1}} + \left( {1 - {\theta _{n + 1}}} \right)\sum\nolimits_{k = 1}^n {{{\tilde \theta }_k}{{\bf{v}}_k}}, \\
		{{\bf{x}}\left({\bm \theta} \right)} &  = {\theta _{n + 1}}{{\bf{x}}_{n + 1}} + \left( {1 - {\theta _{n + 1}}} \right)\sum\nolimits_{k = 1}^n {{{\tilde \theta }_k}{{\bf{x}}_k}}. 
		\end{array}
		\label{equ:n_1}
		\end{equation}
		
		We substitute (\ref{equ:n_1}) into (\ref{equ:optimal-condition-eq}) to obtain that 
		\[\begin{array}{l}
		\nabla {f_0}{\left( {{\bf{x}}\left( {\bm{\theta }} \right)} \right)^{\rm{T}}}{{\bf{v}}_\theta } = \theta _{n + 1}^2\nabla {f_0}{\left( {{{\bf{x}}_{n + 1}}} \right)^{\rm{T}}}{{\bf{v}}_{n + 1}}\\
		+ {\theta _{n + 1}}\left( {1 - {\theta _{n + 1}}} \right)\nabla {f_0}{\left( {{{\bf{x}}_{n + 1}}} \right)^{\rm{T}}}{{\bf{v}}_{n + 1}}\\
		+ {\left( {1 - {\theta _{n + 1}}} \right)^2}\nabla {f_0}{\left( {\sum\limits_{k = 1}^n {{{\tilde \theta }_k}{{\bf{x}}_k}} } \right)^{\rm{T}}}\sum\limits_{k = 1}^n {{{\tilde \theta }_k}{{\bf{v}}_k}} \\
		+ {\theta _{n + 1}}\left( {1 - {\theta _{n + 1}}} \right)\nabla {f_0}{\left( {\sum\limits_{k = 1}^n {{{\tilde \theta }_k}{{\bf{x}}_k}} } \right)^{\rm{T}}}\sum\limits_{k = 1}^n {{{\tilde \theta }_k}{{\bf{v}}_k}} \\
		+ {\theta _{n + 1}}\left( {1 - {\theta _{n + 1}}} \right){\left( {\nabla {f_0}\left( {{{\bf{x}}_{n + 1}}} \right) - \nabla {f_0}\left( {\sum\limits_{k = 1}^n {{{\tilde \theta }_k}{{\bf{x}}_k}} } \right)} \right)^{\rm{T}}}\\
		\cdot \left( {{{\bf{x}}_{n + 1}} - \sum\limits_{k = 1}^n {{{\tilde \theta }_k}{{\bf{x}}_k}} } \right).
		\end{array}\]

		In the same way, we could obtain
		\[\begin{array}{l}
		{\left( {\nabla {f_0}\left( {{{\bf{x}}_{n + 1}}} \right) - \nabla {f_0}\left( {\sum\limits_{k = 1}^n {{{\tilde \theta }_k}{{\bf{x}}_k}} } \right)} \right)^{\rm{T}}}\left( {{{\bf{x}}_{n + 1}} - \sum\limits_{k = 1}^n {{{\tilde \theta }_k}{{\bf{x}}_k}} } \right)\\
		\ge 0.
		\end{array}\]
		It is derived that
		\begin{equation}
		\nabla f_0\left( {{\bf{x}}\left( {\bm{\theta}} \right)} \right)^{\rm{T}} { {\bf{v}}_{\theta}} \ge 0, \, for \, all \, {\bf{v}}_\theta \in {{{\mathcal{V}}_\theta}}.
		\end{equation}
		Therefore, ${\bf{x}}({\bm{\theta}})$ is optimal. $\hfill\blacksquare$
	\end{proof}
\end{appendices}

\end{document}